\documentclass{article}
\usepackage{iclr2027_conference,times}
\iclrfinalcopy
\usepackage[american]{babel}
\usepackage{mathtools,amssymb,amsthm}
\usepackage{graphicx,tikz,microtype}
\usepackage[caption=false,font=footnotesize]{subfig}
\usepackage[ruled,vlined]{algorithm2e}
\usepackage{enumitem,booktabs,tabularx}
\usepackage[hidelinks]{hyperref}
\usepackage{xurl}
\hypersetup{
  pdftitle={Hold-Out Scoring for Efficient Gaussian DAG Learning},
  pdfauthor={Donguk Shin, Byeongguk Kang, Inseol Lee, Gunwoong Park},
  pdflang={en-US}
}
\newtheorem{theorem}{Theorem}[section]
\newtheorem{lemma}[theorem]{Lemma}
\newtheorem{proposition}[theorem]{Proposition}

\theoremstyle{definition}
\newtheorem{assumption}{Assumption}

\theoremstyle{remark}

\DeclareMathOperator*{\argmin}{argmin}
\DeclareMathOperator{\clip}{clip}
\DeclareMathOperator{\diag}{diag}
\DeclareMathOperator{\rank}{rank}
\DeclareMathOperator{\Var}{Var}
\DeclareMathOperator{\Cov}{Cov}
\newcommand{\pa}{\operatorname{Pa}}
\newcommand{\an}{\operatorname{An}}
\newcommand{\Src}{\operatorname{Src}}
\newcommand{\Nsrc}{\operatorname{Nsrc}}
\newcommand{\E}{\mathbb E}
\renewcommand{\P}{\mathbb P}
\newcommand{\R}{\mathbb R}

\newcommand{\pathto}{\rightsquigarrow}
\title{Hold-Out Scoring for Efficient Gaussian DAG Learning}
\author{
Donguk Shin\textsuperscript{1},
Byeongguk Kang\textsuperscript{2},
Inseol Lee\textsuperscript{1},
Gunwoong Park\textsuperscript{1,2,3}\thanks{Corresponding author: \texttt{gw.park23@gmail.com}.}
}
\begin{document}
\maketitle
\lhead{}
\footnotetext[1]{Department of Statistics, Seoul National University.}
\footnotetext[2]{Interdisciplinary Program in Artificial Intelligence, Seoul National University.}
\footnotetext[3]{Institute for Data Innovation in Science, Seoul National University.}
\allowdisplaybreaks[1]

\begin{abstract}
High-dimensional Gaussian DAG learning faces a statistical--computational gap: methods with sharp sample complexity rely on computationally expensive subset search and a supplied indegree bound, whereas polynomial-time alternatives have less favorable sample complexity. We introduce HOST, an efficient DAG learning algorithm that replaces subset search with nodewise hold-out scoring and convex regression, without requiring a supplied indegree bound. Our key insight is that recovering a correct ordering does not require uniformly small estimation errors in ordering scores but only one-sided control of those errors. In the ordering step, HOST exploits the fact that score estimation using hold-out samples inflates ordering scores in expectation, which is the favorable direction for candidates that should not yet be selected. Given the ordering, HOST recovers parents by recursively removing indirect effects from total effects between two nodes. Under suitable conditions, HOST exactly recovers a $p$-node DAG of maximum indegree $d$ with sample complexity of order $d\log p$ in polynomial time. Experiments show that HOST achieves competitive graph recovery while exhibiting favorable runtime scaling.
\end{abstract}

\section{Introduction}

Learning a directed acyclic graph (DAG) from observational data is a central problem in graphical modeling and causal discovery \citep{pearl1988probabilistic, spirtes2001causation}. Classical approaches include constraint-based and score-based methods \citep{spirtes2001causation, chickering2002optimal}. These approaches generally identify the graph only up to a Markov equivalence class. Additional assumptions on the data-generating process, such as linear non-Gaussian models or nonlinear additive-noise structure, can make causal directions identifiable within a Markov equivalence class \citep{shimizu2006linear, shimizu2011directlingam, hoyer2008nonlinear, peters2014causal}. Other DAG-learning approaches include ordering-based methods that separate order search from edge selection \citep{buhlmann2014cam} and continuous optimization using a smooth acyclicity constraint \citep{zheng2018dags}. More recent methods use derivatives of the log density for causal ordering and, in some cases, edge selection \citep{rolland2022score, montagna2023arbitrary, montagna2023scalable, xu2024ordering}.

We focus on Gaussian linear DAG models, in which each node is expressed as a linear function of its parents plus an independent Gaussian error. Suitable conditions involving direct effects and error variances make the DAG identifiable from the observational distribution \citep{peters2014identifiability, park2020identifiability, park2020conditional, chen2019causal, ghoshal2018learning}. With identifiability established, we ask whether Gaussian linear DAGs can be learned with both sample efficiency and computational tractability. 

Existing methods exhibit a statistical--computational tradeoff. Forward methods, which iteratively search for nodes appearing early in an ordering, can achieve sharp sample complexity in terms of the maximum indegree $d$. For example, \citet{gao2022optimal} develop an algorithm that attains the information-theoretically optimal sample complexity $n \asymp d\log(p/d)$ under suitable conditions, where $n$ is the sample size and $p$ denotes the number of variables. However, several forward methods require a supplied indegree bound $q\ge d$ and costly subset search with complexity $p^{O(q)}$, which becomes prohibitive as $q$ grows \citep{chen2019causal,gao2022optimal,daskalakis2025condition}.

Backward procedures recursively identify and eliminate nodes from the end of an ordering using sparse inverse-covariance or regression methods \citep{ghoshal2018learning,park2021learning2}. These methods are computationally tractable, as they require sequential convex optimization, but their statistical guarantees depend on the maximum moralized degree $d_{\mathrm m}$. The maximum moralized degree can substantially exceed the maximum indegree $d$ because it also accounts for children and co-parents. These results motivate a procedure that replaces subset search with convex optimization, removes the need for a supplied indegree bound, and achieves sample complexity with a mild dependence on $d$ rather than the potentially much larger $d_{\mathrm m}$.

\begin{table}
\centering
\caption{Comparison of Gaussian DAG learning algorithms. The results for \citet{chen2019causal} concern ordering recovery only. In this table, $\gtrsim$ suppresses constants, including model-dependent factors. $\widetilde O$ suppresses logarithms. Irrep. denotes irrepresentability. Input $q$ indicates whether a supplied indegree bound $q\ge d$ is required.}
\label{tab:method-comparison}
\small
\begin{tabularx}{\linewidth}{@{}>{\raggedright\arraybackslash}p{0.30\linewidth}>{\centering\arraybackslash}Xccc@{}}
\toprule
Reference & Sample complexity & Computational complexity & Irrep. & Input $q$ \\
\midrule
\citet{ghoshal2018learning}
& $n\gtrsim d_{\mathrm m}^4\log p$
& $\widetilde O(p^3+pd_{\mathrm m}^4)$ & $\times$ & $\times$ \\
\citet{chen2019causal}
& $n\gtrsim d^2\log p$
& $p^{O(q)}$ & $\times$ & $\checkmark$ \\
\citet{park2021learning2}
& $n\gtrsim d_{\mathrm m}^2\log p$
& $O\{n(p^3+p^2d_{\mathrm m}^2)\}$ & $\checkmark$ & $\times$ \\
\citet{gao2022optimal}
& $n\gtrsim d\log(p/d)$
& $p^{O(q)}$ & $\times$ & $\checkmark$ \\
\citet{daskalakis2025condition}, Alg.~1
& $n\gtrsim d\log(p/d)$
& $p^{O(q)}$ & $\times$ & $\checkmark$ \\
\citet{daskalakis2025condition}, Alg.~2/3
& $n\gtrsim d^4\log p$
& $\operatorname{poly}(n,p)$ & $\times$ & $\checkmark$ \\
\textsc{HOST}
& $n\gtrsim d\log p$
& $O(np^3+p^4)$ & $\times$ & $\times$ \\
\bottomrule
\end{tabularx}
\end{table}

The need for subset search with a supplied indegree bound in forward methods arises because forward ordering recovery repeatedly compares scores to determine which remaining node has a sufficiently small score to be selected next. When the same observations are used for fitting and scoring, in-sample overfitting can produce artificially small scores and cause a node with unselected parents to be selected too early. To control this error uniformly across candidates, existing subset-search methods restrict the candidate models using a supplied indegree bound.

Our key observation is that accurate estimation of every score is unnecessary since recovering a compatible ordering requires only one-sided score control. We control overestimation for nodes whose parents have all been selected and underestimation for nodes with unselected parents. On held-out data, fitting error inflates scores in expectation, which is the favorable direction for nodes that should not yet be selected. Accurate estimation of scores is therefore needed only for nodes whose parents have all been selected. We exploit this asymmetry through \textsc{HOST} (Hold-Out Ordering and Structural Thresholding), which combines nodewise hold-out scoring for ordering recovery with recursive direct-effect reconstruction for parent recovery. Our main contributions are as follows.
\begin{itemize}[leftmargin=1.1em,labelsep=0.4em]
\item \textbf{Ordering recovery without a supplied indegree bound.} HOST uses Lasso-based regressions with a two-fold sample split to control score underestimation without dependence on the regression dimension, while accurate estimation of scores is required only for nodes whose parents have all been selected. This guarantees ordering recovery with $d$-dependent sample complexity.
\item \textbf{Parent recovery without irrepresentability.} Given the recovered ordering, HOST reconstructs direct effects from total effects by processing earlier nodes in reverse order and subtracting indirect effects. The ordering fits enter only through bounded corrections, allowing one threshold per node to recover parents without exact Lasso support recovery or an irrepresentability condition, even when the ordering fits omit true parents or direct and indirect effects cancel in the total effect.
\item \textbf{Near-optimal sample complexity in polynomial time.} Under explicit model conditions, HOST recovers the DAG with sample complexity of order $d\log p$ and takes polynomial time without subset search. Under the fixed-parameter convention of \citet{gao2022optimal}, this bound is within a factor of order $\log d$ of the minimax-optimal sample complexity $d\log(p/d)$ for $2\le d\le p/2$.
\end{itemize}

\section{Preliminaries}
\label{SecProb}\label{sec:model}
\subsection{Notation}
For a vector $v\in\mathbb R^m$, a matrix $M\in\mathbb R^{m\times n}$, and index sets $I\subseteq[m]$ and $J\subseteq[n]$, let $v_I$ and $M_{IJ}$ denote the corresponding subvector and submatrix with singleton indices suppressed. We write $\|\cdot\|_1$ and $\|\cdot\|=\|\cdot\|_2$ for the $\ell_1$ and Euclidean norms, respectively. We use $I_k$ for the $k\times k$ identity matrix and $\lambda_{\min}(M)$ for the smallest eigenvalue when $M$ is symmetric. $a\gtrsim b$ and $a\asymp b$ denote inequality and equivalence up to positive constant factors, respectively. For a DAG $G=(V,E)$ with node set $V=[p]=\{1,\ldots,p\}$ and directed edge set $E\subseteq V\times V$, write $k\to j$ if $(k,j)\in E$. A directed path from $a$ to $b$ is a sequence $\Gamma=(v_0\to\cdots\to v_r)$ of distinct nodes with $v_0=a$, $v_r=b$, and length $\ell(\Gamma)=r\ge1$. Let $\mathcal P(a,b)$ denote the set of these paths. An ordering is a permutation $\pi=(\pi_1,\ldots,\pi_p)$ of $V$, compatible with $G$ if $k\to j$ implies $\pi^{-1}(k)<\pi^{-1}(j)$. For $j\in V$, define $\pa(j)=\{k\in V:k\to j\}$ and $\an(j)=\{k\in V:\mathcal P(k,j)\ne\emptyset\}$. Let $d=\max_{j\in V}|\pa(j)|$ be the maximum indegree, and let $q$ denote a supplied indegree bound. A set $S\subseteq V$ is ancestral if $\pa(j)\subseteq S$ for every $j\in S$. For ancestral $S$, let $R(S)=V\setminus S$, $\Src(S)=\{j\in R(S):\pa(j)\subseteq S\}$, and $\Nsrc(S)=R(S)\setminus\Src(S)$ denote the remaining nodes, sources, and nonsources, respectively.

\subsection{Gaussian Linear DAG Models}
\label{SecLine}
\paragraph{Gaussian linear DAG model.}
The Gaussian linear DAG model is defined by the structural equations
\begin{equation}\label{eq:linear-sem}
X_j=\sum_{k\in\pa(j)}B_{kj}X_k+\varepsilon_j,
\qquad \varepsilon_j\overset{\mathrm{ind}}{\sim}N(0,\sigma_j^2),
\qquad j\in V,
\end{equation}
where $\sigma_j>0$ and $B_{kj}$ is the direct effect from $k$ to $j$. The nonzero entries of $B$ define the edges of the DAG $G$. The resulting vector $X\sim N(0,\Sigma)$ has covariance and precision matrices
\[
\Sigma=\Cov(X)=(I_p-B)^{-\top}\Sigma^\varepsilon(I_p-B)^{-1}\succ0,
\qquad \Omega=\Sigma^{-1},
\]
where $\Sigma^\varepsilon=\diag(\sigma_1^2,\ldots,\sigma_p^2)$ is the covariance matrix of the independent errors. The directed-path expansion of \eqref{eq:linear-sem} expresses each variable in terms of these errors \citep{sullivant2010trek}:
\begin{equation}\label{eq:error-path}
X_j=\varepsilon_j+\sum_{k\in\an(j)}B_{k\pathto j}\varepsilon_k,
\qquad
B_{k\pathto j}
=\sum_{\Gamma\in\mathcal P(k,j)}
\prod_{t=1}^{\ell(\Gamma)}B_{v_{t-1}v_t}.
\end{equation}
For each directed path $\Gamma\in\mathcal P(k,j)$, the product of the direct-effect coefficients along its edges gives the effect transmitted along $\Gamma$, and summing over these paths yields the total effect $B_{k\pathto j}$.

\paragraph{DAG identifiability.}
For $S\subseteq V$ and $j\notin S$, define the population ordering score of $j$ given $S$ as
$$
\mathcal S_j(S)=\E[(X_j-X_S^\top\theta_j(S))^2], \quad \theta_j(S)=\argmin_{\theta \in \mathbb{R}^{|S| }} \E[(X_j-X_S^\top\theta)^2].
$$
Under Gaussianity, the least-squares residual is independent of $X_S$, so $\mathcal S_j(S)=\Var(X_j\mid X_S)$. When the selected set $S$ is ancestral, the remaining sources $\Src(S)$ have all their parents in $S$ and can appear next in a compatible ordering. The following lemma gives a score separation under which minimizing the population score selects a remaining source.

\begin{lemma}[Forward Identifiability]\label{lem:identifiability}
Suppose every ancestral $S\subsetneq V$ satisfies
\[
\min_{k\in\Nsrc(S)}\mathcal S_k(S)
>\min_{j\in\Src(S)}\mathcal S_j(S),
\]
where a minimum over the empty set is $+\infty$. Then an ordering compatible with $G$ can be identified from the observational distribution.
\end{lemma}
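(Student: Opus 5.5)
The plan is to give an explicit population procedure that uses only the observational distribution $N(0,\Sigma)$ and show by induction that it outputs an ordering compatible with $G$. Set $S_0=\emptyset$. For $t=1,\ldots,p$, compute $\mathcal S_j(S_{t-1})=\Var(X_j\mid X_{S_{t-1}})$ for every $j\in V\setminus S_{t-1}$. Each such score is a functional of $\Sigma$ alone, namely
\[
\Sigma_{jj}-\Sigma_{j S}\Sigma_{SS}^{-1}\Sigma_{Sj},
\]
and it is well defined because $\Sigma\succ0$. Then choose $\pi_t\in\argmin_{j\notin S_{t-1}}\mathcal S_j(S_{t-1})$, breaking ties by any fixed rule such as smallest index, and set $S_t=S_{t-1}\cup\{\pi_t\}$. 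Since $\Sigma$ is identified by the distribution, so is the output $\pi$. Note that $\pi$ may fail to be unique because of ties among sources, but any output is compatible; this is what identifiability of "an ordering compatible with $G$" means here.

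The induction hypothesis is that $S_{t-1}$ is ancestral and that $(\pi_1,\ldots,\pi_{t-1})$ respects every edge among the selected nodes. The base case holds because $\emptyset$ is ancestral.

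For the inductive step, take $S=S_{t-1}\subsetneq V$.
\begin{itemize}
\item First, $\Src(S)\ne\emptyset$. Since $G$ is acyclic, the induced subgraph on $R(S)\neq\emptyset$ has a node $j$ with no parent in $R(S)$. Then $\pa(j)\subseteq S$, so $j\in\Src(S)$.
\item Hence the right-hand side of the hypothesis is finite. The strict inequality then shows that every minimizer of $\mathcal S_\cdot(S)$ over $R(S)=\Src(S)\cup\Nsrc(S)$ lies in $\Src(S)$. This also covers the case $\Nsrc(S)=\emptyset$, where the minimum over the empty set is $+\infty$.
\item So $\pi_t\in\Src(S)$, which gives $\pa(\pi_t)\subseteq S$. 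It follows that $S_t$ is ancestral, since the other elements already have their parents in $S\subseteq S_t$.
\item Any edge $k\to\pi_t$ has $k\in S$, so $k$ appears earlier in the ordering. Edges out of $\pi_t$ into later nodes are handled when those nodes are selected.
\end{itemize}

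After $p$ steps the output is a permutation $\pi$. For every edge $k\to j$, the parent $k$ was already selected when $j$ was chosen, so $\pi^{-1}(k)<\pi^{-1}(j)$ and $\pi$ is compatible with $G$.

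The only step needing care is the existence of a source at every stage, which is what makes the inequality in the hypothesis nonvacuous. Acyclicity handles it directly. I do not expect any serious obstacle: the Gaussian identity $\mathcal S_j(S)=\Var(X_j\mid X_S)$ is stated just before the lemma, and only the fact that the scores are functions of $\Sigma$ is actually used.
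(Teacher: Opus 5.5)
Your proof is correct and matches the argument the paper sketches and leaves to the reader. You start from $S=\emptyset$, repeatedly add a population score minimizer, and use acyclicity (which guarantees $\Src(S)\neq\emptyset$) together with the strict separation to show inductively that each chosen node is a remaining source, so $S$ stays ancestral and the selection order is compatible with $G$; the details you make explicit (nonemptiness of $\Src(S)$ and the scores being functionals of $\Sigma$) are exactly the ones the paper omits.
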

Starting from $S=\emptyset$, successively adding score minimizers to $S$ keeps $S$ ancestral, so the selection order is compatible with $G$. The proof follows an inductive argument based on conditional variances, similar to those in \citet{chen2019causal} and \citet{park2020conditional}, so we omit the details.

For ancestral $S$ and $j\notin S$, conditioning on $X_S$ is equivalent to conditioning on $\varepsilon_S$, so \eqref{eq:error-path} and independence of the errors give the explicit form of the population ordering score
\begin{equation}\label{eq:population-score-path}
\mathcal S_j(S)=\sigma_j^2+
\sum_{a\in\an(j)\setminus S}B_{a\pathto j}^{\,2}\sigma_a^2.
\end{equation}
For a remaining source, every parent has been selected and the score is simply $\sigma_j^2$. For a remaining nonsource $j$, choose an unselected parent $a$ with no descendant among the other unselected parents of $j$. Then $B_{a\pathto j}=B_{aj}\ne0$, so at least one additional variance term is positive. Equal error variances therefore suffice for the score separation in Lemma~\ref{lem:identifiability} \citep{peters2014identifiability}. With heterogeneous errors, the comparison instead depends jointly on direct effects and error variances.

\section{One-Sided Score Control for Ordering Recovery}
\label{Sec:NewSparsity}

\subsection{Score Estimation Along Forward Search}
Implementing the ordering rule in Lemma~\ref{lem:identifiability} requires estimating the population scores $\mathcal S_j(S)$ at each selected ancestral set $S$. A natural approach uses the residual mean square from a regression of each remaining node on $X_S$ as a score estimator. The following proposition describes the corresponding population regression coefficients.

\begin{proposition}[Regression Coefficients at an Ancestral Set]
\label{prop:ancestral-regression}
In the Gaussian linear DAG model, let $S\subsetneq V$ be ancestral and $j\notin S$. The coefficient $\theta_{kj}(S)$ of $X_k$ in the population regression of $X_j$ on $X_S$ satisfies
\[
\theta_{kj}(S)=B_{k\pathto j}
-\sum_{a\in S\cap\an(j)}B_{ka}B_{a\pathto j}
=B_{kj}+\sum_{a\in\an(j)\setminus S}B_{ka}B_{a\pathto j},\qquad k\in S.
\]
\end{proposition}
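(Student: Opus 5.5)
Because $S$ is ancestral, conditioning on $X_S$ is the same as conditioning on $\varepsilon_S$, and the map $\varepsilon_S\mapsto X_S$ is linear and invertible (unit lower-triangular in a compatible ordering). The plan is to compute $\E[X_j\mid X_S]$ directly from the error expansion \eqref{eq:error-path} and then rewrite it as a linear combination of the $X_k$, $k\in S$.

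First, split the ancestors of $j$ according to membership in $S$:
\[
X_j=\varepsilon_j+\sum_{a\in S\cap\an(j)}B_{a\pathto j}\varepsilon_a+\sum_{a\in\an(j)\setminus S}B_{a\pathto j}\varepsilon_a .
\]
The errors $\varepsilon_j$ and $\varepsilon_a$ with $a\notin S$ are independent of $\varepsilon_S$, hence of $X_S$. Therefore
\[
\E[X_j\mid X_S]=\sum_{a\in S\cap\an(j)}B_{a\pathto j}\varepsilon_a .
\]
For $a\in S$, the structural equation gives $\varepsilon_a=X_a-\sum_{k\in\pa(a)}B_{ka}X_k$, and $\pa(a)\subseteq S$ by ancestrality. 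Substituting and collecting the coefficient of $X_k$ for $k\in S$ gives
\[
\theta_{kj}(S)=B_{k\pathto j}\mathbf 1\{k\in\an(j)\}-\sum_{a\in S\cap\an(j)}B_{ka}B_{a\pathto j}.
\]
With the convention $B_{k\pathto j}=0$ when $k\notin\an(j)$, this is the first expression in the statement. Since the $X_k$, $k\in S$, are linearly independent ($\Sigma\succ0$), these coefficients are the unique population regression coefficients.

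For the second expression, use the first-step path decomposition of the total effect. Every directed path from $k$ to $j$ starts with an edge $k\to a$, and then either $a=j$ or the path continues along a path from $a$ to $j$. Hence
\[
B_{k\pathto j}=B_{kj}+\sum_{a\in\an(j)}B_{ka}B_{a\pathto j}.
\]
Paths remain distinct in this decomposition because acyclicity prevents a path from $a$ to $j$ from revisiting $k$. Splitting the sum over $a\in\an(j)$ into $a\in S$ and $a\notin S$, and subtracting the $S$ part, leaves $B_{kj}+\sum_{a\in\an(j)\setminus S}B_{ka}B_{a\pathto j}$.

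The only delicate point is the bookkeeping. One must check that the first-step decomposition handles the cases $k\notin\an(j)$ (all terms vanish) and $a=j$ correctly. One must also confirm that ancestrality of $S$ is exactly what keeps every $\varepsilon_a$ with $a\in S$ expressible through $X_S$ alone.
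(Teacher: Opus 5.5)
Your proposal is correct and follows the paper's proof essentially step for step. Both arguments identify $X_S$ with $\varepsilon_S$ by ancestrality and drop the errors outside $S$ from $\E[X_j\mid X_S]$. Both then substitute the structural equations for $\varepsilon_a$, $a\in S$, and use the first-edge decomposition of $B_{k\pathto j}$ to pass to the second expression. Your remark that acyclicity makes this decomposition a bijection on paths is a detail the paper leaves implicit.
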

The proof is given in Appendix~\ref{app:ancestral-regression-proof}. For a remaining source $j$, all parents lie in $S$, so $\theta_j(S)=B_{Sj}$ and the regression coincides with its structural equation. For a remaining nonsource, the coefficients also contain indirect effects through unselected nodes. Thus the maximum indegree $d$ does not bound the number of nonzero coefficients in the regression of a remaining nonsource on $X_S$.

\paragraph{Motivating example.}
In the layered DAG of Figure~\ref{graph:layered}(a), every node after the first layer has two parents, yet the regression of the designated remaining nonsource $X_{\rm ns}$ on $X_S$ includes indirect effects from all 64 variables in $S$. The regression of the designated remaining source $X_{\rm src}$ involves only its two parents.

\begin{figure}[htb]
\centering
\begin{tabular}{@{}c@{\hspace{0.12cm}}c@{}}
\begin{minipage}[b]{.273\linewidth}
\centering
\resizebox{\linewidth}{!}{%
\begin{tikzpicture}[x=0.72cm,y=0.078cm]
	\foreach \lev/\last in {0/63,1/31,2/15,3/7,4/3,5/1,6/0}{
		\foreach \i in {0,...,\last}{
			\pgfmathsetmacro{\yy}{pow(2,\lev)*\i+(pow(2,\lev)-1)/2}
			\coordinate (L\lev-\i) at (\lev,\yy);
		}
	}
	\fill[black!10,rounded corners=1pt] (-.27,-.8) rectangle (.27,63.8);
	\foreach \lev/\last in {1/31,2/15,3/7,4/3,5/1,6/0}{
		\pgfmathtruncatemacro{\prevlev}{\lev-1}
		\foreach \i in {0,...,\last}{
			\pgfmathtruncatemacro{\leftchild}{2*\i}
			\pgfmathtruncatemacro{\rightchild}{2*\i+1}
			\draw[black!50,line width=.22pt]
				(L\prevlev-\leftchild)--(L\lev-\i);
			\draw[black!50,line width=.22pt]
				(L\prevlev-\rightchild)--(L\lev-\i);
		}
	}
	\foreach \lev/\last in {0/63,1/31,2/15,3/7,4/3,5/1,6/0}{
		\foreach \i in {0,...,\last}{
			\fill[black!65] (L\lev-\i) circle[radius=.55pt];
		}
	}
	\foreach \i in {0,...,63}{
		\fill[blue!60!black] (L0-\i) circle[radius=.70pt];
	}
\draw[blue!70!black,line width=.65pt] (L0-0)--(L1-0);
\draw[blue!70!black,line width=.65pt] (L0-1)--(L1-0);
\fill[blue!70!black] (L1-0) circle[radius=1.65pt];
	\fill[red!75!black] (L6-0) circle[radius=1.65pt];
	\node[font=\footnotesize\bfseries\boldmath,text=blue!70!black,anchor=south west,
		xshift=1pt,yshift=1pt] at (L1-0) {$X_{\rm src}$};
		\node[font=\footnotesize\bfseries\boldmath,text=red!75!black,anchor=west,
			xshift=2pt] at (L6-0) {$X_{\rm ns}$};
	\node[font=\footnotesize\bfseries\boldmath,text=black,rotate=90] at (-.45,31.5)
		{ancestral set $S$};
	\draw[-latex,black!65,line width=.45pt] (0,67)--(6,67)
		node[midway,above,font=\footnotesize] {ordering};
	\foreach \x/\lab in {0/64,1/32,2/16,3/8,4/4,5/2,6/1}{
		\node[font=\scriptsize] at (\x,-4.5) {\lab};
	}
	\node[font=\scriptsize] at (3,-9) {layer width};
\end{tikzpicture}
}%
\par
{\footnotesize (a) Layered DAG\par}
\end{minipage} &
\begin{minipage}[b]{.651\linewidth}
\centering
\includegraphics[width=\linewidth]
{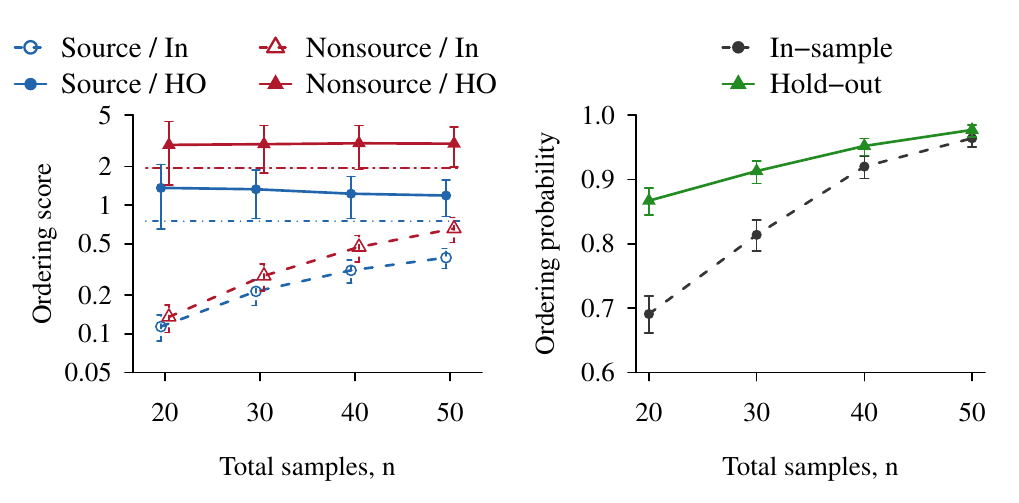}
\par
\noindent\begin{minipage}[t]{.5\linewidth}
\centering{\footnotesize (b) Ordering scores\par}
\end{minipage}%
\begin{minipage}[t]{.5\linewidth}
\centering{\footnotesize (c) Ordering probability\par}
\end{minipage}
\end{minipage}
\end{tabular}
\caption{Panel (a) shows a 127-node layered DAG with the ancestral set $S$ shaded, the designated remaining source $X_{\rm src}$ in blue, and the designated remaining nonsource $X_{\rm ns}$ in red. Panel (b) shows mean Lasso ordering scores for in-sample (In) and hold-out (HO) scoring using $X_S$ on a log scale. Error bars indicate one standard deviation across repetitions, computed on the original score scale. Horizontal lines mark population scores of 0.75 for $X_{\rm src}$ and 1.93 for $X_{\rm ns}$. Panel (c) shows the fraction of repetitions with $X_{\rm src}$ scoring below $X_{\rm ns}$, with 95\% Wilson intervals. Appendix~\ref{app:score-diagnostic} gives the settings.}
\label{graph:layered}
\end{figure}

Minimax lower bounds for residual variance estimation in high-dimensional Gaussian regression can depend on the number of nonzero regression coefficients \citep{verzelen2018adaptive}. As Proposition~\ref{prop:ancestral-regression} and the preceding example show, indirect effects can make this number much larger than the maximum indegree $d$ for remaining nonsources, making accurate estimation of every score difficult even when $d$ is small. However, preserving the population score comparison in Lemma~\ref{lem:identifiability} suffices for ordering recovery. In other words, it suffices to bound score overestimation for remaining sources and score underestimation for remaining nonsources so that the estimated scores preserve the comparison.

\subsection{Hold-Out Scoring}
Score underestimation can be controlled through hold-out scoring because fitting the regression on an independent sample prevents overfitting to the evaluation observations. The following lemma provides a bound on score underestimation that does not depend on the true regression coefficients or the accuracy of their estimates.

\begin{lemma}[Hold-Out Lower Tail]\label{lem:holdout-lower-tail}
Let $\mathcal F$ be a sigma-field. For integers $n\ge1$ and $s\ge0$, suppose $\mathbf X\in\R^{n\times s}$ and both vectors $\theta,\widetilde\theta\in\R^s$ are $\mathcal F$-measurable with finite entries. Suppose $Y=\mathbf X\theta+\xi$, where $\xi\mid\mathcal F\sim N(0,vI_n)$ for a fixed $v>0$. Then for $\widetilde{\mathcal S}=\|Y-\mathbf X\widetilde\theta\|_2^2/n$ and every $0<t<v$,
$$
\P\{\widetilde{\mathcal S}<v-t\mid\mathcal F\}
\le\exp\{-nt^2/(4v^2)\}\qquad\text{almost surely}.
$$
\end{lemma}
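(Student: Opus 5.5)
Conditionally on $\mathcal F$, write $Y-\mathbf X\widetilde\theta=\xi+\mu$ with $\mu=\mathbf X(\theta-\widetilde\theta)$, which is $\mathcal F$-measurable and has finite entries. Then $n\widetilde{\mathcal S}=\|\xi+\mu\|_2^2$, where $\xi\mid\mathcal F\sim N(0,vI_n)$. So $n\widetilde{\mathcal S}/v$ is conditionally a noncentral chi-square with $n$ degrees of freedom and noncentrality $\|\mu\|^2/v$. The plan has two steps. First, reduce to the central case by a stochastic dominance argument: the lower tail of a noncentral chi-square is largest when the noncentrality is zero. Second, apply a standard Chernoff bound to the lower tail of a central chi-square.

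For the first step, I will bound the conditional moment generating function at a negative argument directly, which avoids a separate dominance lemma. For $\lambda>0$, a Gaussian computation coordinatewise gives
\[
\E\bigl[e^{-\lambda\|\xi+\mu\|^2}\bigm|\mathcal F\bigr]
=(1+2\lambda v)^{-n/2}\exp\Bigl\{-\frac{\lambda\|\mu\|^2}{1+2\lambda v}\Bigr\}
\le(1+2\lambda v)^{-n/2}.
\]
This bound does not depend on $\mu$, hence not on $\theta$ or $\widetilde\theta$. By Markov's inequality,
\[
\P\{\|\xi+\mu\|^2< n(v-t)\mid\mathcal F\}
\le\exp\{\lambda n(v-t)\}(1+2\lambda v)^{-n/2}\qquad\text{a.s.}
\]
Since everything is computed under the regular conditional law $N(0,vI_n)$ with $\mu$ fixed, the inequality holds almost surely. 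The case $s=0$ is the special case $\mu=0$.

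For the second step, write $u=t/v\in(0,1)$ and optimize over $\lambda$. The optimizer is $1+2\lambda v=1/(1-u)$, which gives the bound $\exp\{-\tfrac n2[-u-\log(1-u)]\}$. Expanding, $-u-\log(1-u)=\sum_{k\ge2}u^k/k\ge u^2/2$. The bound is therefore at most $\exp\{-nu^2/4\}=\exp\{-nt^2/(4v^2)\}$, as claimed.

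The main subtlety is making sure the noncentral term only helps, i.e. that the factor $\exp\{-\lambda\|\mu\|^2/(1+2\lambda v)\}\le1$. This holds because $\lambda>0$, and it is exactly the "one-sided" phenomenon: a poor estimate $\widetilde\theta$ can only inflate the held-out score. The remaining calculus inequality $-u-\log(1-u)\ge u^2/2$ is routine.
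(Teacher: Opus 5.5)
Your proposal is correct and is essentially the paper's proof: it uses the same conditional Laplace transform of $\|\xi+\mu\|_2^2$, bounded by $(1+2\lambda v)^{-n/2}$ after dropping the noncentral factor, followed by the same Chernoff--Markov step. The only difference is the final calculus: you optimize $\lambda$ exactly and use $-x-\log(1-x)\ge x^2/2$ with $x=t/v$, whereas the paper plugs in the parameter $t/(2v^2)$ and uses $\log(1+z)\ge z-z^2/2$, and both give the exponent $nt^2/(4v^2)$.
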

The proof is given in Appendix~\ref{app:holdout-proof}. For the regression on $X_S$ at a fixed ancestral set $S$, apply the lemma with $v=\mathcal S_j(S)$ and $\theta=\theta_j(S)$, conditioning on the fitted coefficients and held-out values of $X_S$. With $\mathbf X$ denoting the design matrix of the $n$ held-out observations, the conditional expected score is $\E[\widetilde{\mathcal S}\mid\mathcal F]=\mathcal S_j(S)+\|\mathbf X(\theta_j(S)-\widetilde\theta)\|^2/n\ge\mathcal S_j(S)$. Thus coefficient estimation error contributes a nonnegative term, providing intuition for the lower-tail bound. The lemma controls score underestimation for remaining nonsources regardless of the true coefficients, their estimates, or the size of $S$. For remaining sources, score overestimation is controlled through the accuracy of the fitted regression. Proposition~\ref{prop:ancestral-regression} gives $\theta_j(S)=B_{Sj}$, so the population regression involves at most $d$ nonzero coefficients. Estimation bounds for penalized regression can therefore provide the complementary control of score overestimation with sample requirements depending on $d$.

Figure~\ref{graph:layered}(b) and (c) illustrate the distinction between accurate score estimation and preservation of the score comparison. In-sample scoring uses all $n$ observations for both fitting and evaluation, while hold-out scoring uses one half for fitting and the other for evaluation. In-sample scores substantially underestimate the population scores and reduce their separation in panel (b), allowing the remaining nonsource to be selected first. Hold-out scores exceed the population scores on average, yet panel (c) shows a higher probability of placing the remaining source below the remaining nonsource.

\section{HOST Algorithm and Guarantees}
\label{SecAlg}
HOST builds the graph one node at a time. It selects the next node by hold-out scoring, recovers its parents among the selected nodes, and retains a local ordinary least squares (OLS) fit and its residual for later iterations. The parent-recovery step uses these residuals to separate direct effects from indirect effects, with the ordering fits guiding a bounded correction before thresholding.

\paragraph{Initialization.}
Let $\mathbf X$ be the data matrix from $n$ independent observations. Split the observations independently of their values into $I_1\cup I_2=[n]$, with sizes $n_\ell=|I_\ell|$ differing by at most one. For $\ell\in\{1,2\}$, write $\bar\ell$ for the other fold index. Use the same positive ordering penalty $\lambda_{\mathrm{ord}}$ on both folds, positive parent thresholds $\tau_j$, and a fixed clipping parameter $0\le\delta<1$. Choose $\widehat\pi_1=\argmin_{j\in V}\widehat\Sigma_{jj}$, where $\widehat\Sigma_{jj}=\|\mathbf X_j\|^2/n$, and initialize the estimated parent set $\widehat T_{\widehat\pi_1}=\emptyset$ and the residual $\widehat r_{\widehat\pi_1}=\mathbf X_{\widehat\pi_1}$.

\paragraph{Ordering by hold-out scoring.}
At iteration $r\ge2$, the selected nodes form $\widehat S_r=\{\widehat\pi_1,\ldots,\widehat\pi_{r-1}\}$. For every remaining node $j$, fit a Lasso regression on each half of the sample,
\begin{equation}\label{eq:algordering}
\widehat\theta_j^{(\ell)}(r)\in\argmin_\theta
\Bigl\{\frac{\|\mathbf X_{I_\ell j}-\mathbf X_{I_\ell\widehat S_r}\theta\|^2}{2n_\ell}
+\lambda_{\mathrm{ord}}\|\theta\|_1\Bigr\},\qquad \ell=1,2.
\end{equation}
Combine the two opposite-fold residual sums of squares to obtain the hold-out ordering score,
\begin{equation}\label{eq:algscores}
\widehat{\mathcal S}_j(r)=\frac1n\sum_{\ell=1}^2
\|\mathbf X_{I_{\bar\ell}j}-\mathbf X_{I_{\bar\ell}\widehat S_r}
\widehat\theta_j^{(\ell)}(r)\|^2.
\end{equation}
The remaining node with the smallest score becomes $\widehat\pi_r$, with ties resolved by node label. For $j=\widehat\pi_r$, let $\bar\theta_{kj}=\{\widehat\theta_{kj}^{(1)}(r)+\widehat\theta_{kj}^{(2)}(r)\}/2$ be the averaged coefficient of $X_k$ for each earlier node $k$. Define the updated ordering coefficients by refitting each nonzero coordinate as
\begin{equation}\label{eq:coordinate-refit}
\check\theta_{kj}=\argmin\nolimits_{b\in\mathbb R}\sum\nolimits_{i=1}^n
\Bigl(\mathbf X_{ij}
-\sum\nolimits_{a\in\widehat S_r\setminus\{k\}}\mathbf X_{ia}\bar\theta_{aj}
-\mathbf X_{ik}b\Bigr)^2,
\qquad \bar\theta_{kj}\ne0,
\end{equation}
and set $\check\theta_{kj}=0$ for $\bar\theta_{kj}=0$. This update aims to reduce Lasso shrinkage and guides correction of parent estimates from local fits below, without requiring the ordering Lasso to recover the parent set.

\paragraph{Estimating total effects.}
Each $k \in \widehat S_r$ has a previously estimated parent set $\widehat T_k$. The direct-effect estimates from $\widehat T_k$ to $k$ and the residual from its retained local OLS fit are reused from earlier iterations, and are given by
\begin{equation}\label{eq:local-ols}
\widehat B^{\mathrm{LS}}_{\widehat T_k,k}
=\mathbf X_{\widehat T_k}^{\dagger}\mathbf X_k,\quad
\widehat B^{\mathrm{LS}}_{V\setminus\widehat T_k,k}=0,\quad
\widehat r_k=\mathbf X_k-\mathbf X\widehat B^{\mathrm{LS}}_{\cdot k},
\end{equation}
where $\dagger$ denotes the Moore--Penrose inverse. When $\widehat T_k=\pa(k)$, the population residual is $\varepsilon_k$, and \eqref{eq:error-path} gives $\Cov(\varepsilon_k,X_j)/\sigma_k^2=B_{k\pathto j}$. We therefore use $\widehat B_{k\pathto j}=\widehat r_k^\top\mathbf X_j/\|\widehat r_k\|^2$, defined as zero when $\widehat r_k=0$, to estimate the total effect of $X_k$ on $X_j$.

\paragraph{Structural thresholding.}
For $j=\widehat\pi_r$, let $\widehat T_j^{(m)}$ and $\widehat Q_j^{(m)}$ contain the selected parents and nodes retained for subtraction after $m$ steps, both initially empty. At step $m=1,\ldots,r-1$, set $k=\widehat\pi_{r-m}$. Proposition~\ref{prop:ancestral-regression} gives the direct-effect identity and its sample reconstruction:
\begin{equation}\label{eq:structural-pilot}
B_{kj}=B_{k\pathto j}-\sum\nolimits_{a\in\an(j)}B_{ka}B_{a\pathto j},
\qquad
\widehat B^{\mathrm{res}}_{kj}=\widehat B_{k\pathto j}
-\sum\nolimits_{a\in\widehat Q_j^{(m-1)}}\widehat B^{\mathrm{LS}}_{ka}\widehat B_{a\pathto j}.
\end{equation}
With $\clip_t(x)=\max\{-t,\min(x,t)\}$, apply a bounded correction using $\check\theta_{kj}$ to obtain a corrected direct-effect estimate. Select $k$ as a parent of $j$ when the magnitude of this estimate exceeds $\tau_j$:
\begin{equation}\label{eq:corrected-statistic}
\widetilde B_{kj}=\widehat B^{\mathrm{res}}_{kj}
+\clip_{\delta\tau_j}\!\left(\check\theta_{kj}-\widehat B^{\mathrm{res}}_{kj}\right),
\qquad
\widehat T_j^{(m)}=\widehat T_j^{(m-1)}\cup
\{u\in\{k\}:|\widetilde B_{kj}|>\tau_j\}.
\end{equation}
The correction is bounded by $\delta\tau_j$, so parent recovery does not require $\check\theta_{kj}$ to be accurate. Retain $k$ if it is selected as a parent of $j$ or belongs to the estimated parent set of a previously retained node:
\begin{equation}\label{eq:reverse-set}
\widehat Q_j^{(m)}=\widehat Q_j^{(m-1)}\cup
\Bigl(\{k\}\cap\Bigl(\widehat T_j^{(m)}
\cup\bigcup\nolimits_{a\in\widehat Q_j^{(m-1)}}\widehat T_a\Bigr)\Bigr).
\end{equation}
After $r-1$ steps, return $\widehat T_j=\widehat T_j^{(r-1)}$ and use $\widehat T_j$ to compute the local OLS fit $\widehat B^{\mathrm{LS}}_{\cdot j}$ and residual $\widehat r_j$ in \eqref{eq:local-ols} for subsequent iterations.

\begin{algorithm}[htbp]
\caption{\textbf{HOST} Hold-Out Ordering and Structural Thresholding}\label{alg1}
\SetAlgoCaptionSeparator{. }
\SetKwInOut{Input}{Input}\SetKwInOut{Output}{Output}\DontPrintSemicolon
\renewcommand{\Input}[1]{\textbf{Input} #1\;}
\renewcommand{\Output}[1]{\textbf{Output} #1\;}
\Input{Data $\mathbf X$, penalty $\lambda_{\mathrm{ord}}>0$, parent thresholds $\tau_j>0$, and $\delta\in[0,1)$}
\Output{Estimated DAG $\widehat G$}
Split $[n]$ into balanced folds $I_1,I_2$ independently of $\mathbf X$\;
Set $\widehat\pi_1=\argmin_{j\in V}\widehat\Sigma_{jj}$, $\widehat T_{\widehat\pi_1}=\emptyset$, and residual $\widehat r_{\widehat\pi_1}=\mathbf X_{\widehat\pi_1}$\;
\For{$r=2,\ldots,p$ with $\widehat S_r=\{\widehat\pi_1,\ldots,\widehat\pi_{r-1}\}$}{
Fit foldwise Lasso regressions and compute hold-out scores for $j\notin\widehat S_r$ by \eqref{eq:algordering}--\eqref{eq:algscores}\;
Select $j=\widehat\pi_r=\argmin_{a\notin\widehat S_r}\widehat{\mathcal S}_a(r)$ and compute $\check\theta_j$ by \eqref{eq:coordinate-refit}\;
Set $\widehat T_j^{(0)}=\widehat Q_j^{(0)}=\emptyset$ and compute $\widehat B_{k\pathto j}=\widehat r_k^\top\mathbf X_j/\|\widehat r_k\|^2$ for $k\in\widehat S_r$\;
\For{$m=1,\ldots,r-1$}{
Set $k=\widehat\pi_{r-m}$ and subtract indirect effects to obtain $\widehat B^{\mathrm{res}}_{kj}$ by \eqref{eq:structural-pilot}\;
Correct and threshold by \eqref{eq:corrected-statistic} to update $\widehat T_j^{(m)}$, then update $\widehat Q_j^{(m)}$ by \eqref{eq:reverse-set}\;
}
Set $\widehat T_j=\widehat T_j^{(r-1)}$ and store its OLS coefficients and residual by \eqref{eq:local-ols}\;
}
Return $\widehat G=(V,\{k\to j:j\in V,\ k\in\widehat T_j\})$\;
\end{algorithm}

\subsection{Theoretical Guarantees}
\label{SecMain}

Throughout this section, we assume $p\ge2$ and even $n\ge4$ and use a data-independent split into two equal halves. Define $B_{\min}=\min_{k\to j}|B_{kj}|$, with $B_{\min}=+\infty$ for an empty graph. Define $\Sigma_{\max}=\max_j\Sigma_{jj}$, $\sigma_{\max}=\max_j\sigma_j$, and $\Omega_{\max}=\max_k\Omega_{kk}$.

\begin{assumption}[Forward Identifiability Margin]
\label{assu:taumin}
There is a margin $\tau_\star>0$ such that, for every ancestral $S\subsetneq V$ with at least one remaining nonsource,
\[
\min_{k\in\Nsrc(S)}\mathcal S_k(S)
-\min_{j\in\Src(S)}\mathcal S_j(S)\ge\tau_\star.
\]
\end{assumption}
For fixed $p$, the strict separation in Lemma~\ref{lem:identifiability} yields a positive common margin because there are finitely many ancestral sets. With common error variance $\sigma^2$ and $E\ne\emptyset$, one can take $\tau_\star=\sigma^2B_{\min}^2$.

\begin{theorem}[Ordering Recovery]
\label{thm:ordering-recovery}
Suppose the data follow the Gaussian linear DAG model in Section~\ref{SecLine} and Assumption~\ref{assu:taumin} holds. Choose $\lambda_{\mathrm{ord}}\asymp \sigma_{\max}\sqrt{\Sigma_{\max}\log p/n}$, with sufficiently large universal constant factors. If
\begin{equation}
\label{eq:ordering-rate}
n\gtrsim\left[
\frac{d\Sigma_{\max}}{\lambda_{\min}(\Sigma)}
\left(1+\frac{\sigma_{\max}^2}{\tau_\star}\right)
+\left(1+\frac{\Sigma_{\max}}{\tau_\star}\right)^2
\right]\log p,
\end{equation}
then $\P\{\widehat\pi\text{ is compatible with }G\}\ge1-p^{-2}$.
\end{theorem}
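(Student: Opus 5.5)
We argue by induction along the forward search, reducing compatibility to a single high-probability event on which every ancestral set the algorithm can reach is handled correctly. The event is a union over iterations $r$ and remaining nodes $j$. At iteration $r$ the selected set $\widehat S_r$ is random, and it depends on both folds through earlier selections. So we will not condition on $\widehat S_r$ directly.

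Instead, we define for each fixed ancestral set $S$ and each $j\notin S$ the quantities $\widehat\theta^{(\ell)}_j(S)$ and $\widehat{\mathcal S}_j(S)$ that the algorithm would compute if $\widehat S_r=S$. On the event that $\widehat\pi_1,\ldots,\widehat\pi_{r-1}$ are compatible, $\widehat S_r$ is ancestral and $\widehat{\mathcal S}_j(r)=\widehat{\mathcal S}_j(\widehat S_r)$. It therefore suffices to show that, with probability at least $1-p^{-2}$, the following holds simultaneously for the ancestral sets visited:
\[
\widehat{\mathcal S}_j(S)<\mathcal S_j(S)+\tau_\star/2 \text{ for some } j\in\Src(S) \text{ attaining } \min_{\Src(S)}\mathcal S,
\qquad
\widehat{\mathcal S}_k(S)>\mathcal S_k(S)-\tau_\star/2 \text{ for all } k\in\Nsrc(S).
\]
Assumption~\ref{assu:taumin} then forces the argmin to be a source. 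The base case $r=1$ is $S=\emptyset$, where $\widehat\Sigma_{jj}$ concentrates around $\Sigma_{jj}$ by $\chi^2$ tails. That step needs $n\gtrsim(\Sigma_{\max}/\tau_\star)^2\log p$, which is the second term of \eqref{eq:ordering-rate}.

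\textbf{Counting problem.} A naive union bound over all ancestral sets is exponential in $p$. We avoid it by noting that the sets actually visited form a single nested chain. The key is to bound the lower-tail event conditionally, using Lemma~\ref{lem:holdout-lower-tail}. Fix an iteration $r$ and condition on $\mathcal F$, the sigma-field generated by fold $I_\ell$ and by $\mathbf X_{I_{\bar\ell}V}$ restricted to the coordinates of $S$. Given $\mathcal F$, the residual $\xi=X_k-X_S^\top\theta_k(S)$ on fold $\bar\ell$ is $N(0,\mathcal S_k(S))$ and independent of $X_S$ (Gaussianity plus ancestrality), and $\widehat\theta^{(\ell)}_k(S)$ is $\mathcal F$-measurable.

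However, $\widehat S_r$ itself depends on $X_k$ on fold $\bar\ell$ through earlier scores. To handle this we use a martingale/stopping argument. For a remaining nonsource $k$ we use the following property of the true residuals: once $S\supseteq$ some set, the residual of $X_k$ on fold $\bar\ell$ equals $\sum_{a\in\an(k)\setminus S}B_{a\pathto k}\varepsilon_a+\varepsilon_k$. These errors are never used in previous scores for sources. We therefore apply Lemma~\ref{lem:holdout-lower-tail} at $t=\tau_\star/2$ to each pair $(r,k)$, conditionally on the history. This gives failure probability $\exp\{-n\tau_\star^2/(16\mathcal S_k(S)^2)\}$ per pair. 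Since $\mathcal S_k(S)\le\Sigma_{\max}$, a union over the at most $p^2$ pairs costs $n\gtrsim(\Sigma_{\max}/\tau_\star)^2\log p$. If the dependence of $\widehat S_r$ on past data is problematic, the fallback is a union bound over $(r,k)$ combined with the observation that the pair $(S,k)$ is determined by ancestral-path events. This is the main obstacle: making the conditioning rigorous when $\widehat S_r$ is data-dependent.

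\textbf{Upper tail for sources.} For $j\in\Src(S)$, Proposition~\ref{prop:ancestral-regression} gives $\theta_j(S)=B_{Sj}$, which has at most $d$ nonzeros and noise variance $\sigma_j^2$. We decompose the hold-out score as
\[
\widehat{\mathcal S}_j=\frac1n\|\varepsilon_j\|^2+\frac1n\sum_\ell\Bigl(\|\mathbf X_{I_{\bar\ell}S}\Delta^{(\ell)}\|^2-2\varepsilon_{I_{\bar\ell}j}^\top\mathbf X_{I_{\bar\ell}S}\Delta^{(\ell)}\Bigr),
\qquad \Delta^{(\ell)}=\widehat\theta^{(\ell)}_j-B_{Sj}.
\]
This uniform-in-$S$ control is clean because the Lasso error bound needs only a restricted eigenvalue condition on $\widehat\Sigma$ over all of $V$, which holds uniformly for every $S$. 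Bounds of that type are standard (Raskutti--Wainwright--Yu) once $n\gtrsim d(\Sigma_{\max}/\lambda_{\min}(\Sigma))\log p$. The bound $\lambda_{\mathrm{ord}}\gtrsim\|\mathbf X_S^\top\varepsilon_j\|_\infty/n$ likewise follows from a union over all $p^2$ pairs of columns, not over sets. The Lasso prediction error is then $\lesssim d\lambda_{\mathrm{ord}}^2/\lambda_{\min}(\Sigma)\asymp d\sigma_{\max}^2\Sigma_{\max}\log p/(n\lambda_{\min}(\Sigma))$. The out-of-fold prediction error is controlled by the same quantity via the same restricted eigenvalue condition on the other fold.

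Requiring this to be at most $\tau_\star/4$ gives the term $d\Sigma_{\max}\sigma_{\max}^2/(\lambda_{\min}(\Sigma)\tau_\star)$. The $\chi^2$ fluctuation of $\|\varepsilon_j\|^2/n$ and the cross term are handled by Cauchy--Schwarz and the $(\Sigma_{\max}/\tau_\star)^2\log p$ requirement. Combining these bounds with the lower-tail bound yields \eqref{eq:ordering-rate} and the stated probability $1-p^{-2}$.
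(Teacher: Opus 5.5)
Your source-side argument is essentially the paper's: a cone restricted-eigenvalue event on $\R^p$ and a pairwise noise event, neither depending on $S$, so no union over sets is needed. The gap is the nonsource lower tail. You flag it as the main obstacle but do not resolve it, and the justification you offer is incorrect.

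The held-out errors $\mathbf W_{I_{\bar\ell}k}$ and $\mathbf W_{I_{\bar\ell}a}$, $a\in\an(k)\setminus\widehat S_r$, \emph{are} used before step $r$. At every earlier step, $k$ and these ancestors were candidates, and their hold-out scores were built from exactly these errors. The event that they lost each argmin is part of $\{\widehat S_r=S\}$. It does not matter that the selected nodes' own scores do not involve these errors. Conditionally on the history, the held-out residual of $X_k$ on $X_S$ is therefore not $N(0,\mathcal S_k(S)I)$, so Lemma~\ref{lem:holdout-lower-tail} cannot be applied. The nested-chain observation does not help either: which chain is visited is itself random, and a union over all chains is exponential. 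Your target $\widehat{\mathcal S}_k(\widehat S_r)>\mathcal S_k(\widehat S_r)-\tau_\star/2$ is also harder than necessary, because $\mathcal S_k(\widehat S_r)$ contains variance contributions of unselected ancestors whose errors drive the selection.

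The missing idea is to decouple through a fixed remaining source.
\begin{itemize}
\item Following unselected parents back from $k$ reaches a remaining source $f$. Then $\widehat S_r\subseteq U_f$, the ancestral set of non-descendants of $f$.
\item Running the selection rule on candidates in $U_f$ only defines a $\mathbf W_{U_f}$-measurable set $S_{r,f}$. It equals $\widehat S_r$ whenever $\widehat S_r\subseteq U_f$, because candidates in $\mathcal D_f$ can only change when the search first leaves $U_f$.
\item Measure $k$ against $\mathcal S_k(U_f)$ rather than $\mathcal S_k(S)$. Relative to $\theta_k(U_f)$, the held-out residual involves only $\mathbf W_{I_{\bar\ell}\mathcal D_f}$, which is independent of $\sigma(\mathbf W_{U_f},\mathbf W_{I_\ell\mathcal D_f})$. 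So the lemma applies, with the fit padded by zeros on $U_f\setminus S_{r,f}$.
\item Assumption~\ref{assu:taumin} at $U_f$, where $f$ is the unique remaining source, gives $\mathcal S_k(U_f)\ge\sigma_f^2+\tau_\star$. Choosing $k$ over $f$ on the source event forces $\widehat{\mathcal S}_{k,S_{r,f}}\le\sigma_f^2+\tau_\star/4$.
\item A union bound over pairs $(f,k)$ and steps $r$ is then polynomial.
\end{itemize}

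Separately, bounding the held-out cross term $\varepsilon_{I_{\bar\ell}j}^\top\mathbf X_{I_{\bar\ell}S}\Delta^{(\ell)}$ by Cauchy--Schwarz costs $\sigma_j(d\lambda_{\mathrm{ord}}^2/\lambda_{\min}(\Sigma))^{1/2}$. That forces $n\gtrsim d(\Sigma_{\max}/\lambda_{\min}(\Sigma))(\sigma_{\max}^2/\tau_\star)^2\log p$, which is worse than \eqref{eq:ordering-rate}. Use H\"older with the pairwise noise event on the held-out fold instead; this is valid since $S$ contains only non-descendants of $j$. It gives a term of order $\lambda_{\mathrm{ord}}\|\Delta^{(\ell)}\|_1\lesssim d\lambda_{\mathrm{ord}}^2/\lambda_{\min}(\Sigma)$.
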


The proof is given in Appendix~\ref{app:ordering-proof}. It combines score estimation bounds for remaining sources with hold-out lower-tail bounds for remaining nonsources. The argument uses conditional bounds for selected sets to account for dependence induced by adaptive node selection across iterations.

\begin{theorem}[Parent Recovery]
\label{thm:parent-recovery}
Under the Gaussian linear DAG model, fix $0\le\delta<1$ and use $\tau_j\asymp\sqrt{\Sigma_{\max}\Omega_{\max}\log p/n}$ in Algorithm~\ref{alg1}, with a sufficiently large constant depending only on $\delta$. If
\begin{equation}\label{eq:parent-rate}
n\gtrsim d+\log p,\qquad
B_{\min}\gtrsim\sqrt{{\Sigma_{\max}\Omega_{\max}\log p}/{n}},
\end{equation}
then, for any ordering $\widehat\pi$ computed from the same observations,
\[
\P\{\widehat G\ne G\}
\le\P\{\widehat\pi\text{ is incompatible with }G\}+p^{-2}.
\]
\end{theorem}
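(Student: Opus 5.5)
The plan is to work on the event that $\widehat\pi$ is compatible with $G$ and show that the parent sets are recovered exactly there, except on an event $\mathcal E^c$ of probability at most $p^{-2}$. Since $\widehat\pi$ depends on the same data, I will not condition on it. Instead I will define $\mathcal E$ through uniform concentration bounds over all pairs $(k,j)$ and over all \emph{population} objects that could arise for some compatible ordering. These bounds do not involve $\widehat\pi$, so on $\mathcal E\cap\{\widehat\pi \text{ compatible}\}$ the argument is deterministic. This gives $\P\{\widehat G\ne G\}\le \P\{\widehat\pi\text{ incompatible}\}+\P(\mathcal E^c)$.

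\textbf{Step 1: the event $\mathcal E$.} For each $k$, consider the oracle residual $\mathbf r_k^\star=\mathbf X_k-\mathbf X_{\pa(k)}\widehat B^{\mathrm{LS}}_{\pa(k),k}$, computed with the true parent set. This is the projection of $\boldsymbol\varepsilon_k$ onto the orthogonal complement of $\mathbf X_{\pa(k)}$. Because $n\gtrsim d+\log p$, we have $\|\mathbf r_k^\star\|^2/n\ge\sigma_k^2/2$ with high probability (chi-square with $n-|\pa(k)|$ degrees of freedom). For every $j$, the quantity $\mathbf r_k^{\star\top}\mathbf X_j/\|\mathbf r_k^\star\|^2-B_{k\pathto j}$ is an average of Gaussian products. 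It should be bounded by $C\sqrt{\Var(X_j)\log p/(n\sigma_k^2)}\le C\sqrt{\Sigma_{\max}\Omega_{\max}\log p/n}$, using $1/\sigma_k^2\le\Omega_{kk}$. I will write this as $|\widehat B_{k\pathto j}-B_{k\pathto j}|\le c_0\eta$ with $\eta=\sqrt{\Sigma_{\max}\Omega_{\max}\log p/n}$. I also need the OLS errors $|\widehat B^{\mathrm{LS}}_{ka}-B_{ka}|\le c_0\eta$, and I expect that the combined error in the sum $\sum_a \widehat B^{\mathrm{LS}}_{ka}\widehat B_{a\pathto j}$ must be controlled directly rather than term by term. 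A union bound over $p^2$ pairs, with constants chosen large, gives $\P(\mathcal E^c)\le p^{-2}$.

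\textbf{Step 2: induction over $r$ and, inside it, over $m$.} The outer hypothesis is that $\widehat T_k=\pa(k)$ for all earlier $k$, so that $\widehat r_k=\mathbf r_k^\star$ and $\widehat B^{\mathrm{LS}}$ equals the oracle OLS. For $j=\widehat\pi_r$, compatibility gives $\an(j)\subseteq\widehat S_r$. The inner hypothesis at step $m$ is that $\widehat T_j^{(m-1)}$ consists of exactly the true parents among the later nodes, and that $\widehat Q_j^{(m-1)}$ equals the set of later nodes lying in $\an(j)$ that are reachable through true parent sets. The second part follows from the first via \eqref{eq:reverse-set}, since every ancestor of $j$ is a parent of $j$ or a parent of some retained ancestor. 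With $k=\widehat\pi_{r-m}$, all ancestors $a$ of $j$ with $B_{ka}\ne0$ come after $k$ in the ordering and so lie in $\widehat Q_j^{(m-1)}$. Moreover, $\widehat B^{\mathrm{LS}}_{ka}=0$ for every other $a$ in $\widehat Q_j^{(m-1)}$. The population identity \eqref{eq:structural-pilot} therefore holds with the sum restricted to $\widehat Q_j^{(m-1)}$, and $\widehat B^{\mathrm{res}}_{kj}-B_{kj}$ is a sum of estimation errors. Since $|\widetilde B_{kj}-\widehat B^{\mathrm{res}}_{kj}|\le\delta\tau_j$, thresholding at $\tau_j$ is correct whenever $|\widehat B^{\mathrm{res}}_{kj}-B_{kj}|<(1-\delta)\tau_j$ and $B_{\min}>(1+\delta)\tau_j+(1-\delta)\tau_j$. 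Both hold under \eqref{eq:parent-rate} once the constant in $\tau_j$ depends on $\delta$ as stated.

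\textbf{Main obstacle.} The hard step is bounding $\widehat B^{\mathrm{res}}_{kj}-B_{kj}$ uniformly by $O(\eta)$ without picking up factors of $d$ or of path lengths. My plan is to avoid expanding the error term by term. Instead, I would show algebraically that, with oracle residuals, $\widehat B^{\mathrm{res}}_{kj}$ equals an exact sample quantity: the coefficient of $\mathbf r_k^\star$ when $\mathbf X_j$ is regressed on the residuals of the nodes in $\widehat Q_j^{(m-1)}\cup\{k\}$, or equivalently the correlation of $\mathbf r_k^\star$ with the sample residual $\mathbf X_j-\sum_{a\in Q}\widehat B_{a\pathto j}\cdot(\text{reconstructed }\mathbf X_a\text{ part})$, which telescopes to $\mathbf X_{\pa(j)}B_{\pa(j),j}+\boldsymbol\varepsilon_j$ plus a remainder. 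If this telescoping works, the error reduces to $\mathbf r_k^{\star\top}(\boldsymbol\varepsilon_j+\text{controlled term})/\|\mathbf r_k^\star\|^2$, which is a single Gaussian average of order $\sqrt{\sigma_j^2\log p/(n\sigma_k^2)}\le\eta$. If it does not telescope exactly, I will fall back to conditioning on $\mathbf X_{\pa(k)}$ and using independence of $\boldsymbol\varepsilon_k$ from the past to bound the leftover cross terms.
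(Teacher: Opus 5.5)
\textbf{Verdict: there is a genuine gap at the step you flag as the main obstacle, and neither of your two routes closes it.} The rest of your architecture matches the paper's proof:
\begin{itemize}
\item an event that does not depend on $\widehat\pi$;
\item a deterministic induction over targets and reverse steps;
\item the observation that the subtraction over $\widehat Q_j^{(m-1)}$ equals the full sum over $\an(j)$, because $\widehat B^{\mathrm{LS}}_{ka}=0$ for every unprocessed ancestor $a$;
\item thresholding with reconstruction error below $(1-\delta)\tau_j$ and $B_{\min}>2\tau_j$.
\end{itemize}
The missing step carries essentially all of the probabilistic content.

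\textbf{Why entrywise bounds are not enough.} The error $H_{kj}=\widehat B_{k\pathto j}-\sum_{a\in\an(j)}\widehat B^{\mathrm{LS}}_{ka}\widehat B_{a\pathto j}-B_{kj}$ involves every child $a$ of $k$ in $\an(j)$. The outdegree of $k$ is not bounded by $d$. The weights are controlled only in squared form: $\sum_a\sigma_a^2B_{a\pathto j}^2\le\Sigma_{jj}$ and $\sum_a(\mathbf 1\{a=k\}-B_{ka})^2/\sigma_a^2\le\Omega_{kk}$. The summands are also dependent, through the shared response $\mathbf X_j$ and the nested designs. So your Step~1 entrywise bounds combine only into $\ell_1$ quantities such as $\sum_a|B_{ka}|$.

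\textbf{Your proposed identity is false.} Take the chain $k\to a\to j$ with $\pa(k)=\emptyset$.
\begin{itemize}
\item Here $\widehat r_k=\mathbf X_k\perp\widehat r_a$. So the coefficient of $\widehat r_k$ in the regression of $\mathbf X_j$ on $(\widehat r_k,\widehat r_a)$ is $\widehat B_{k\pathto j}$, not $\widehat B^{\mathrm{res}}_{kj}=\widehat B_{k\pathto j}-\widehat B^{\mathrm{LS}}_{ka}\widehat B_{a\pathto j}$.
\item In this example $\widehat B^{\mathrm{res}}_{kj}$ is actually the coefficient of $\mathbf X_k$ in the OLS of $\mathbf X_j$ on $(\mathbf X_k,\mathbf X_a)$. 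That is, it equals $\tilde{\mathbf x}_k^\top\mathbf W_j/\|\tilde{\mathbf x}_k\|^2$, with $\tilde{\mathbf x}_k$ the residual of $\mathbf X_k$ on $\mathbf X_a$.
\item Its fluctuation is of order $\sqrt{\sigma_j^2\Omega_{kk}/n}$ with $\Omega_{kk}=\sigma_k^{-2}+B_{ka}^2\sigma_a^{-2}$. It is not a single average $\mathbf r_k^{\star\top}(\cdot)/\|\mathbf r_k^\star\|^2$ of size $\sqrt{\sigma_j^2\log p/(n\sigma_k^2)}$.
\end{itemize}
So the ``remainder'' in your telescoping is not negligible. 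It is exactly the children-of-$k$ contribution that must be controlled uniformly. Your fallback of conditioning on $\mathbf X_{\pa(k)}$ handles only the $\widehat r_k$ term.

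\textbf{The ingredient you are missing: a martingale decomposition.} The paper fixes a compatible ordering and sets $\mathcal F_m=\sigma(\mathbf W_1,\ldots,\mathbf W_m)$.
\begin{itemize}
\item Write $H_{kj}=v^\top\mathbf X_j-B_{kj}$, with $v=\sum_{a\in\an(j)\cup\{k\}}u_a$ and $u_a=(\mathbf 1\{a=k\}-\widehat B^{\mathrm{LS}}_{ka})\widehat r_a/\|\widehat r_a\|^2$. Expand $\mathbf X_j=\sum_\ell\gamma_\ell\mathbf W_\ell$ with $\gamma_\ell=\mathbf 1\{\ell=j\}+B_{\ell\pathto j}$.
\item The non-centered diagonal terms $\gamma_au_a^\top\mathbf W_a$ sum exactly to $B_{kj}-\sum_aB_{a\pathto j}h_{ka}^\top\mathbf W_a$, so $B_{kj}$ cancels. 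This uses three facts: $\widehat r_a^\top\mathbf W_a=\|\widehat r_a\|^2$; $\widehat B^{\mathrm{LS}}_{ka}-B_{ka}=h_{ka}^\top\mathbf W_a$ for an $\mathcal F_{a-1}$-measurable vector $h_{ka}$ in the span of $\mathbf X_{\pa(a)}$; and $B_{kj}=B_{k\pathto j}-\sum_{a\in\an(j)}B_{ka}B_{a\pathto j}$.
\item The cross terms regroup by topological position into martingale differences $B_{a\pathto j}\mathbf W_a^\top V_{a-1}$, $V_{j-1}^\top\mathbf W_j$ and $u_a^\top M_{a-1}$. Here $V_m=\sum_{a\le m}u_a$ and $M_m=\E[\mathbf X_j\mid\mathcal F_m]$ are themselves martingales.
\item Doob's inequality applied to $\exp\{c\|V_m\|_2^2\}$ and $\exp\{c\|M_m\|_2^2\}$ gives $\max_m\|V_m\|_2^2\le64\Omega_{kk}/n$ and $\max_m\|M_m\|_2^2\le4n\Sigma_{jj}$, except with probability $2e^{-n/2}$.
\item The two squared-weight identities then bound the predictable variance by $164\,\Omega_{kk}\Sigma_{jj}/n$. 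An exponential supermartingale gives the tail $2\exp\{-nu^2/(656\,\Omega_{kk}\Sigma_{jj})\}$, free of $d$, outdegree and path length.
\item Clipping $\widehat B^{\mathrm{LS}}_{ka}$ at $2|B_{ka}|$ and flooring $\|\widehat r_a\|^2$ at $n\sigma_a^2/2$ keep the increments controlled without conditioning on the local-fit event, where both operations are inactive.
\end{itemize}
Without this argument, or another in which conditional variances rather than absolute errors accumulate, you have not shown that your event $\mathcal E$ has probability at least $1-p^{-2}$.
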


The proof is given in Appendix~\ref{app:parent-proof}. It uniformly controls errors in the reconstructed direct effects over compatible orderings. An induction argument then shows that thresholding recovers all parents, even when the ordering is learned from the same data, without requiring accurate ordering fits. Ordering guarantees concern the exact minimizers in \eqref{eq:algordering} at the fixed penalty specified above.

\begin{theorem}[Exact DAG Recovery]
\label{thm:dag-recovery}
Suppose Assumption~\ref{assu:taumin} holds in the Gaussian linear DAG model. Choose the fixed ordering penalty and parent thresholds in Theorems~\ref{thm:ordering-recovery} and~\ref{thm:parent-recovery}. If
\begin{equation}\label{eq:dag-rate}
n\gtrsim\left[
\frac{d\Sigma_{\max}}{\lambda_{\min}(\Sigma)}
\left(1+\frac{\sigma_{\max}^2}{\tau_\star}\right)
+\left(1+\frac{\Sigma_{\max}}{\tau_\star}\right)^2
+\frac{\Sigma_{\max}\Omega_{\max}}{B_{\min}^2}
\right]\log p,
\end{equation}
then $\P\{\widehat G=G\}\ge1-p^{-1}$.
\end{theorem}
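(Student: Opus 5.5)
Theorem~\ref{thm:dag-recovery} follows by combining Theorems~\ref{thm:ordering-recovery} and~\ref{thm:parent-recovery} with a union bound. The plan is to check that the sample-size condition \eqref{eq:dag-rate} implies both \eqref{eq:ordering-rate} and \eqref{eq:parent-rate}, for suitable choices of the universal constants. The tuning parameters are the same fixed ones used in the two component theorems. Once both hypotheses are verified, Theorem~\ref{thm:parent-recovery}, applied to the ordering $\widehat\pi$ produced by Algorithm~\ref{alg1} from the same observations, gives
\[
\P\{\widehat G\ne G\}\le \P\{\widehat\pi\text{ is incompatible with }G\}+p^{-2}.
\]
Theorem~\ref{thm:ordering-recovery} bounds the first term by $p^{-2}$. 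Hence $\P\{\widehat G\ne G\}\le 2p^{-2}\le p^{-1}$, using $p\ge2$.

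\textbf{Ordering condition.} The first two bracketed terms in \eqref{eq:dag-rate} are exactly the bracket in \eqref{eq:ordering-rate}. The third term is nonnegative. So \eqref{eq:dag-rate}, with the same constant, implies \eqref{eq:ordering-rate}.

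\textbf{Parent condition, second part.} The condition $B_{\min}\gtrsim\sqrt{\Sigma_{\max}\Omega_{\max}\log p/n}$ is equivalent to $n\gtrsim (\Sigma_{\max}\Omega_{\max}/B_{\min}^2)\log p$. This is dominated by the third bracketed term of \eqref{eq:dag-rate}. When $E=\emptyset$ we have $B_{\min}=+\infty$, so this term vanishes and the condition is vacuous.

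\textbf{Parent condition, first part.} It remains to show $n\gtrsim d+\log p$, and this needs a small argument.
\begin{itemize}
\item For $\log p$: the term $(1+\Sigma_{\max}/\tau_\star)^2\log p$ is at least $\log p$, so $n\gtrsim\log p$.
\item For $d$: note that $\lambda_{\min}(\Sigma)\le\Sigma_{jj}\le\Sigma_{\max}$ for every $j$, so $d\Sigma_{\max}/\lambda_{\min}(\Sigma)\ge d$. Since $\log p\ge\log2$, this gives $n\gtrsim d\log p\gtrsim d$.
\end{itemize}
Adding the two bounds yields $n\gtrsim d+\log p$ after adjusting constants.

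\textbf{Where care is needed.} The only delicate point is logical rather than technical. Theorem~\ref{thm:parent-recovery} is stated for any ordering computed from the same observations, so there is no independence issue when the two results are chained; I would emphasize this explicitly. I also have to confirm that the constants in $\lambda_{\mathrm{ord}}$ and $\tau_j$ in Algorithm~\ref{alg1} are the ones both theorems require. They are, since $\lambda_{\mathrm{ord}}$ enters only the ordering step and $\tau_j,\delta$ enter only parent recovery. Finally, the implicit constant in \eqref{eq:dag-rate} is taken as the maximum of the constants needed for the two component conditions.
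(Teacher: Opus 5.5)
Your proposal is correct and follows the paper's proof: the first two terms of \eqref{eq:dag-rate} give \eqref{eq:ordering-rate}, together with $n\gtrsim d+\log p$ via $\Sigma_{\max}/\lambda_{\min}(\Sigma)\ge1$ and $p\ge2$; the last term gives the signal condition, which is vacuous for an empty graph; and chaining Theorems~\ref{thm:ordering-recovery} and~\ref{thm:parent-recovery} yields $2p^{-2}\le p^{-1}$. One small correction to your closing remark: $\lambda_{\mathrm{ord}}$ also enters parent recovery through $\check\theta_{kj}$, but this is harmless because the correction is clipped at $\delta\tau_j$, so Theorem~\ref{thm:parent-recovery} imposes no requirement on $\lambda_{\mathrm{ord}}$.
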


The proof in Appendix~\ref{app:dag-recovery-proof} combines the ordering and parent bounds. This high-dimensional bound gives recovery probability tending to one as $p=p_n\to\infty$ under \eqref{eq:dag-rate}.

For computational complexity, we use cyclic coordinate descent for Lasso and singular value decomposition (SVD) for minimum-norm local OLS fits. We report complexity for dense implementations, suppressing solver iteration counts and numerical tolerance factors.

\begin{proposition}[Computational Complexity]\label{prop:computational-complexity}
Under the computational setup above, Algorithm~\ref{alg1} has arithmetic complexity $O(np^3+p^4)$ for a fixed ordering penalty and fixed parent thresholds.
\end{proposition}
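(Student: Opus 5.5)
The plan is to count arithmetic operations iteration by iteration over $r=2,\ldots,p$ and sum, splitting each iteration into three parts: the ordering step, the coefficient refit, and the structural thresholding with local OLS. Two preprocessing steps come first. We form the fold Gram matrices $\mathbf X_{I_\ell}^\top\mathbf X_{I_\ell}$ for $\ell=1,2$ and the full Gram matrix $\mathbf X^\top\mathbf X$ once, at cost $O(np^2)$. Every Lasso coordinate descent sweep and every hold-out residual sum of squares will then be expressed through these Gram entries.

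\textbf{Ordering step.} For a remaining node $j$ and selected set $\widehat S_r$ of size $r-1$, one cyclic coordinate descent sweep using covariance updates costs $O(r)$ per coordinate, hence $O(r^2)$ per sweep; with iteration counts suppressed this is $O(p^2)$. The hold-out score is expanded as
\[
\|\mathbf X_{I_{\bar\ell}j}-\mathbf X_{I_{\bar\ell}\widehat S_r}\theta\|^2=\mathbf X_{I_{\bar\ell}j}^\top\mathbf X_{I_{\bar\ell}j}-2\theta^\top \mathbf X_{I_{\bar\ell}\widehat S_r}^\top\mathbf X_{I_{\bar\ell}j}+\theta^\top \mathbf X_{I_{\bar\ell}\widehat S_r}^\top\mathbf X_{I_{\bar\ell}\widehat S_r}\theta,
\]
which costs $O(p^2)$ from the opposite-fold Gram matrix. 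Alternatively, computing the residual directly costs $O(np)$. Over at most $p$ remaining nodes this gives $O(p^3+np^2)$ per iteration, and hence $O(p^4+np^3)$ over all iterations.

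\textbf{Coefficient refit.} For the selected node $j$, each coordinate refit in \eqref{eq:coordinate-refit} is a univariate least-squares problem. Its solution is $\check\theta_{kj}=\mathbf X_k^\top(\mathbf X_j-\mathbf X\bar\theta_j+\mathbf X_k\bar\theta_{kj})/\|\mathbf X_k\|^2$. We compute the residual $\mathbf X_j-\mathbf X\bar\theta_j$ once in $O(np)$, after which each coordinate costs $O(n)$. This gives $O(np)$ per iteration.

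\textbf{Structural thresholding and local OLS.} Each total effect $\widehat B_{k\pathto j}$ needs one inner product of length $n$, so all of them cost $O(np)$ per iteration, using the stored residuals $\widehat r_k$. Each step $m$ of \eqref{eq:structural-pilot} sums over $|\widehat Q_j^{(m-1)}|\le p$ terms, and the update \eqref{eq:reverse-set} checks membership of $k$ in $\bigcup_a\widehat T_a$. To make that check cheap, we maintain a boolean marker array updated when each node enters $\widehat Q_j$, at cost $O(p)$ per addition. The full reverse pass therefore costs $O(p^2)$. The local OLS fit uses an SVD of $\mathbf X_{\widehat T_j}$, an $n\times|\widehat T_j|$ matrix with $|\widehat T_j|\le p$, costing $O(np\min(n,p))\le O(np^2)$. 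Forming $\widehat r_j$ then costs $O(np)$.

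Summing over $r\le p$ iterations gives $O(np^3+p^4)$ for the ordering step, $O(np^2)$ for the refits, $O(p^3+np^2)$ for the reverse passes and total effects, and $O(np^3)$ for the SVDs. The total is $O(np^3+p^4)$.

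The main obstacle is the bookkeeping in the ordering step. A naive implementation that recomputes residuals of length $n_\ell$ inside each coordinate update would cost $O(np)$ per sweep per node, giving $O(np^4)$ overall. The Gram or residual-update implementation has to be fixed explicitly, with a per-sweep cost of $O(np)$ or $O(p^2)$ per node. Then the $p$ nodes over $p$ iterations give exactly the claimed $np^3+p^4$. I would state this convention as part of the computational setup, together with suppressing the iteration and tolerance factors.
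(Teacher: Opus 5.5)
Your proposal is correct and follows the paper's proof essentially step for step. The shared ingredients are $O(np^2)$ Gram-matrix preprocessing, $O(s^2)$ coordinate-descent sweeps with quadratic-form hold-out scores (giving $O(p^4)$), one SVD per node (giving $O(np^3)$), $O(np^2)$ total effects, and an $O(p^2)$-per-target reverse pass. Where you pull the subtraction sum using a marker array, the paper pushes subtractions to $\widehat T_k$ whenever $k$ is retained, at the same cost.

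One small slip in your closing remark: recomputing the full residual in every coordinate update costs $O(np^2)$ per sweep, not $O(np)$. That $O(np^2)$ figure is what yields the $O(np^4)$ total you cite.
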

Appendix~\ref{app:computation-proof} provides the proof. The complexity bound concerns the numerical implementation, whereas the statistical guarantees use exact Lasso minimizers.

\subsection{Comparison with Existing Forward-Search Methods}\label{sec:statistical-comparison}
For equal-error-variance models, let $\sigma^2$ be the common error variance. Under $M^{-1}I\preceq\Sigma\preceq MI$ for a constant $M>1$, \citet{gao2022optimal} establish the sample complexity lower bound $n\gtrsim\max\{B_{\min}^{-2}\log p,\,(M^2-1)^{-1}d\log(p/d)\}$ for exact DAG recovery when $1\le d\le p/2$. Under their convention of fixed $M$, $B_{\min}$, and $\sigma^2$, their subset-search algorithm attains minimax-optimal $d\log(p/d)$ sample complexity. For HOST, the covariance bounds and $\tau_\star=\sigma^2B_{\min}^2$ reduce \eqref{eq:dag-rate} to the sufficient condition
\[
n\gtrsim\bigl[dM^2(1+B_{\min}^{-2})+(1+M\sigma^{-2}B_{\min}^{-2})^2+M^2B_{\min}^{-2}\bigr]\log p.
\]
Under the same convention, the sample complexity of HOST is within a factor of order $\log d$ of theirs for $2\le d\le p/2$. \citet{gao2022optimal} require a supplied indegree bound $q$ and $p^{O(q)}$ subset search, whereas HOST runs in polynomial time.

For equal-error-variance models without this eigenvalue restriction, \citet{daskalakis2025condition} establish a sample complexity lower bound $n\gtrsim\max\{B_{\min}^{-4},TB_{\min}^{-2}\}\log p$, where $T=\sigma^2\Omega_{\max}=1+\max_i\sum_{j\colon i\to j}B_{ij}^2$. For HOST, using the parameter relationships $\tau_\star=\sigma^2B_{\min}^2$, $\sigma^2/\lambda_{\min}(\Sigma)\le(d+1)T$, and $V=\Sigma_{\max}/\sigma^2\ge1+B_{\min}^2$ for a nonempty DAG reduces \eqref{eq:dag-rate} to the sufficient condition
\begin{equation}\label{eq:host-lower-comparison}
n\gtrsim\bigl[d^2VT(1+B_{\min}^{-2})+V^2B_{\min}^{-4}\bigr]\log p.
\end{equation}
Their Algorithm~1 has sample complexity $n\gtrsim\max\{B_{\min}^{-4},TB_{\min}^{-2}\}\,d\log(p/d)$ and is based on subset search. For bounded $V$, this sample complexity is within a factor $d$ of the lower bound and has milder $d$ dependence than HOST, but Algorithm~1 requires a supplied indegree bound $q$ and $p^{O(q)}$ computation. Their Algorithms~2 and~3 provide polynomial-time alternatives with sample complexity $n\gtrsim Vd^4T^3B_{\min}^{-4}\log p$. For bounded $V$, HOST improves the dependence on $d$ and $T$ from $d^4T^3$ to $d^2T$, although the dependence on $V$ differs.

\section{Simulation Studies}\label{sec:simulation}
We evaluate HOST across sample sizes, dimensions, and maximum indegrees, comparing its recovery accuracy and runtime with existing methods. All experiments use Gaussian linear DAGs. In this section, each node has at most $d$ randomly selected parents. Nonzero direct effects have magnitudes uniform on $[0.4,0.6]$ and symmetric signs, and independent Gaussian errors have variance $0.75$. Appendix~\ref{app:simulation} details the experimental settings and includes additional graph families (\ref{app:additional-graphs}), ordering comparisons (\ref{app:ordering-bound}), and parent-recovery comparisons (\ref{app:oracle-parent}).

Figure~\ref{fig:host-scaling} shows improving graph recovery with increasing sample size, with qualitative trends consistent with Theorem~\ref{thm:dag-recovery}. The experiment covers $p$ = 30, 50, 100, 200, $d$ = 2, 3, 4, 5, and $n$ = 128, 256, 512, 1024, 2048, with 200 repetitions per setting. The graph recovery rate is the fraction of exactly recovered graphs. For each $(d,p)$, the rate tends toward one as $n$ grows. Panels (a)--(d) vary maximum indegree $d$ at fixed dimension $p$ and generally show higher graph recovery rates for smaller $d$ at a given sample size. Panels (e)--(h) vary dimension $p$ at fixed maximum indegree $d$ and show that larger graphs generally require more observations to attain a comparable graph recovery rate.

\begin{figure}[htb]\centering
\captionsetup[subfloat]{captionskip=2pt,farskip=0pt,nearskip=0pt}
\subfloat[$p$ = 30]{\includegraphics[width=.272727\linewidth]{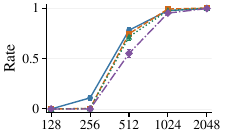}\label{fig:host-scaling-p30}}\hfill
\subfloat[$p$ = 50]{\includegraphics[width=.227273\linewidth]{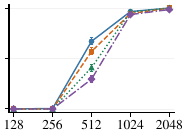}\label{fig:host-scaling-p50}}\hfill
\subfloat[$p$ = 100]{\includegraphics[width=.227273\linewidth]{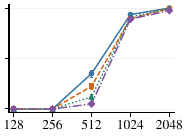}\label{fig:host-scaling-p100}}\hfill
\subfloat[$p$ = 200]{\includegraphics[width=.227273\linewidth]{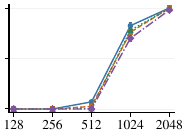}\label{fig:host-scaling-p200}}\par
\subfloat[$d$ = 2]{\includegraphics[width=.272727\linewidth]{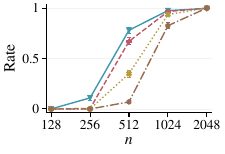}\label{fig:host-scaling-d2}}\hfill
\subfloat[$d$ = 3]{\includegraphics[width=.227273\linewidth]{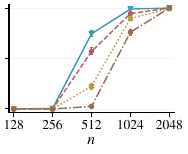}\label{fig:host-scaling-d3}}\hfill
\subfloat[$d$ = 4]{\includegraphics[width=.227273\linewidth]{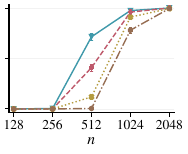}\label{fig:host-scaling-d4}}\hfill
\subfloat[$d$ = 5]{\includegraphics[width=.227273\linewidth]{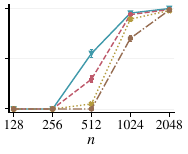}\label{fig:host-scaling-d5}}\par
\includegraphics[width=\linewidth]{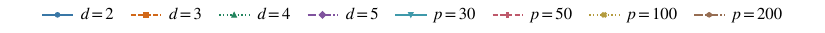}\par
\caption{Graph recovery rate of HOST with $\pm1$ standard error over 200 repetitions. Panels (a)--(d) overlay curves for different $d$ at fixed $p$, and (e)--(h) for different $p$ at fixed $d$.}
\label{fig:host-scaling}\end{figure}

Figure~\ref{fig:all} compares HOST with fixed tuning and HOST-CV/EBIC with data-selected tuning (Appendix~\ref{app:ebic-proof}) against HLSM \citep{park2021learning2}, TD-BSS and TD-BIC \citep{gao2022optimal}, TD-Lasso \citep{chen2019causal}, Daskalakis Algs.~1 and~2 \citep{daskalakis2025condition}, and GOLEM-EV \citep{ng2020role}. Graph recovery is evaluated using the directed F1 score at $p$ = 50, 200, $d$ = 3, and $n$ = 100, 200, 300, 400, 500, with 50 paired repetitions. Full-fit runtime, including parameter selection where applicable, is evaluated at $n$ = 100, $p$ = 10, 30, 50, 70, 100, and $d$ = 1, 2, 3, 4, 5, with 20 paired repetitions. Daskalakis Alg.~1 is evaluated only for $p\le50$, and GOLEM-EV is excluded from timing. All recovery metrics and the full runtime comparison are reported in Appendices~\ref{app:simulation-tuning} and~\ref{app:timing-protocol}, respectively.

\begin{figure}[htb]\centering
\captionsetup[subfloat]{captionskip=2pt,farskip=0pt,nearskip=0pt}
\subfloat[Directed F1 versus $n$ at $p$ = 50]{\includegraphics[width=.398182\linewidth]{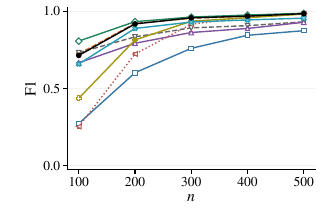}\label{fig:main-f1-p50}}\hspace{.010909\linewidth}%
\subfloat[Directed F1 versus $n$ at $p$ = 200]{\includegraphics[width=.358182\linewidth]{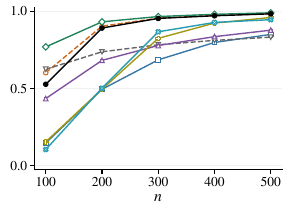}\label{fig:main-f1-p200}}\par
\subfloat[Runtime versus $d$ at $p$ = 100]{\includegraphics[width=.398182\linewidth]{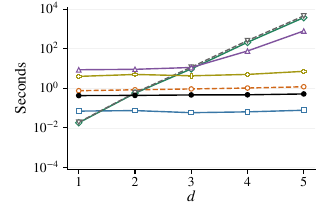}\label{fig:time-p100}}\hspace{.010909\linewidth}%
\subfloat[Runtime versus $p$ at $d$ = 5]{\includegraphics[width=.358182\linewidth]{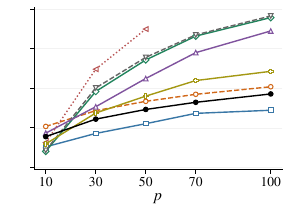}\label{fig:time-d5}}\par
\includegraphics[width=\linewidth]{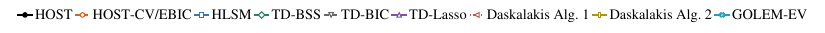}\par
\caption{Recovery accuracy and runtime scaling. Recovery uses $d$ = 3 and shows means $\pm1$ standard error over 50 paired repetitions. Runtime uses $n$ = 100 and reports mean full-fit time in seconds (log scale), including tuning, over 20 paired repetitions.}
\label{fig:all}\end{figure}

Figure~\ref{fig:all} shows the main empirical advantage of \textsc{HOST}: at sufficiently large sample sizes, it achieves recovery accuracy close to TD-BSS, while its runtime grows much more slowly with $d$ and it requires no supplied indegree bound. In panels (a)--(b), \textsc{HOST}-CV/EBIC improves F1 over fixed-tuning \textsc{HOST} at $n=100$ for both values of $p$. As the sample size increases, both \textsc{HOST} variants approach the F1 of TD-BSS. At $p=200$, $d=3$, and $n=500$, \textsc{HOST} and \textsc{HOST}-CV/EBIC attain directed F1 $0.984$ and $0.985$, respectively, compared with $0.991$ for TD-BSS. Panel (c) shows that TD-BSS is fast for small $d$ but becomes much slower as $d$ increases. At $p=100$, increasing $d$ from $1$ to $5$ raises its runtime from $0.018$ to $3716$ seconds, whereas the runtime of \textsc{HOST} changes only from $0.43$ to $0.52$ seconds and that of \textsc{HOST}-CV/EBIC changes from $0.76$ to $1.20$ seconds. Thus, at $d=5$, \textsc{HOST} is more than $7{,}000$ times faster than TD-BSS in this setting. Panel (d) shows a similar runtime advantage as $p$ increases at $d=5$. HLSM is consistently faster than \textsc{HOST}, but its F1 is consistently lower.

\section{Conclusion}
We prove a new theoretical principle underlying \textsc{HOST}: accurate estimation of every candidate score is unnecessary, and it suffices to control only errors that can cause an incorrect next-node selection. Building on this result, \textsc{HOST} achieves exact Gaussian DAG recovery with sample complexity of order $d\log p$ and polynomial computation, without a supplied indegree bound or subset search. Its parent-recovery analysis also avoids exact Lasso support recovery and restrictive conditions such as irrepresentability.

Beyond \textsc{HOST}, our analysis isolates a broader principle for fit-then-score structure learning: when correct selection depends only on one-sided score errors, held-out evaluation can remove the need for uniformly accurate estimation and prevent overfitting-induced premature selection. Extending this principle to nonlinear and non-Gaussian DAG models and developing adaptive tuning with recovery guarantees are promising directions for future work.

\subsection*{AI use statement}
We used OpenAI Codex to help draft sections of the manuscript, translate author-provided content into English, improve readability, identify relevant literature, provide critical ingredients for proving the authors' mathematical claims and assist in writing the proofs, implement methods by writing and debugging method, simulation, and analysis code, and clean and reformat datasets. We did not use generative AI tools to generate synthetic data sets, formulate mathematical claims, develop theoretical models or conceptual frameworks, propose or refine hypotheses, design or provide feedback on research methodology or experiments, or interpret results, and qualitative and thematic data analysis is not applicable to this work. The authors manually checked all mathematical claims and proofs, code, experimental results, and citations, and take full responsibility for the final content.

\bibliography{references}
\appendix
\section{Proofs for Regression Coefficients and Hold-Out Bounds}
\label{app:population-proofs}

\subsection{Proof of Proposition~\ref{prop:ancestral-regression}}
\label{app:ancestral-regression-proof}
\begin{proof}
If $S=\emptyset$, there is no coefficient to identify. Suppose $S\ne\emptyset$. Since $S$ is ancestral, every parent of a node in $S$ also lies in $S$, so the structural equations restricted to $S$ give
\[
\varepsilon_S=(I_{|S|}-B_{SS}^\top)X_S.
\]
Acyclicity makes $I_{|S|}-B_{SS}^\top$ invertible. Thus $X_S$ and $\varepsilon_S$ determine each other and are independent of $\varepsilon_{R(S)}$. Split the error expansion in \eqref{eq:error-path} as
\[
X_j=\sum_{a\in S}B_{a\pathto j}\varepsilon_a
+\varepsilon_j+\sum_{a\in R(S)\setminus\{j\}}B_{a\pathto j}\varepsilon_a.
\]
Taking conditional expectations given $X_S$ removes the errors indexed by $R(S)$. Then substituting $\varepsilon_a=X_a-\sum_{k\in S}B_{ka}X_k$ for $a\in S$ gives
\begin{equation}\label{pop:conditional-mean}
\begin{aligned}[b]
\E(X_j\mid X_S)
&=\sum_{a\in S}B_{a\pathto j}\varepsilon_a\\
&=\sum_{a\in S}B_{a\pathto j}X_a
-\sum_{a\in S}\sum_{k\in S}B_{ka}B_{a\pathto j}X_k\\
&=\sum_{k\in S}\left(B_{k\pathto j}
-\sum_{a\in S}B_{ka}B_{a\pathto j}\right)X_k.
\end{aligned}
\end{equation}
Since $\Sigma_{SS}\succ0$, the linear conditional mean in \eqref{pop:conditional-mean} identifies the unique population regression coefficient $\theta_j(S)$. Comparing coefficients and using $B_{a\pathto j}=0$ for $a\notin\an(j)$ yields
\begin{equation}\label{pop:regression-coefficient}
\theta_{kj}(S)=B_{k\pathto j}-\sum_{a\in S\cap\an(j)}B_{ka}B_{a\pathto j},
\qquad k\in S.
\end{equation}

For $k\ne j$, partitioning $\mathcal P(k,j)$ by the first edge and using the path-sum definition in \eqref{eq:error-path} gives
\begin{equation}\label{pop:first-edge-expansion}
\begin{aligned}[b]
B_{k\pathto j}
&=B_{kj}+\sum_{a\in\an(j)}B_{ka}
\left(\sum_{\Gamma\in\mathcal P(a,j)}\prod_{t=1}^{\ell(\Gamma)}B_{v_{t-1}v_t}\right)\\
&=B_{kj}+\sum_{a\in\an(j)}B_{ka}B_{a\pathto j}.
\end{aligned}
\end{equation}
Substituting \eqref{pop:first-edge-expansion} into \eqref{pop:regression-coefficient} cancels the terms indexed by $S\cap\an(j)$, yielding
\[
\theta_{kj}(S)=B_{kj}+\sum_{a\in\an(j)\setminus S}B_{ka}B_{a\pathto j}.
\]
For $k\ne j$ with $j\notin\an(k)$, take $S=\an(j)\cup\an(k)\cup\{k\}$, which is ancestral and excludes $j$. Since $\an(j)\subseteq S$, the two expressions for $\theta_{kj}(S)$ give the population identity in \eqref{eq:structural-pilot}. If $j\in\an(k)$, every term in that identity is zero by acyclicity.
\end{proof}

\subsection{Proof of Lemma~\ref{lem:holdout-lower-tail}}
\label{app:holdout-proof}
\begin{proof}
The conditional Gaussian model and $\mathcal F$-measurability of $\mathbf X,\theta,\widetilde\theta$ give $Y-\mathbf X\widetilde\theta\mid\mathcal F\sim N(\mathbf X(\theta-\widetilde\theta),vI_n)$. Since $n\widetilde{\mathcal S}=\|Y-\mathbf X\widetilde\theta\|_2^2$, Gaussian integration gives, for $u>0$,
\begin{equation}\label{holdout:laplace-bound}
\E[e^{-nu\widetilde{\mathcal S}}\mid\mathcal F]=(1+2vu)^{-n/2}\exp\left\{-\frac{u\|\mathbf X(\theta-\widetilde\theta)\|_2^2}{1+2vu}\right\}\le(1+2vu)^{-n/2}.
\end{equation}
Applying the Markov inequality conditionally on $\mathcal F$ and using \eqref{holdout:laplace-bound} gives
\begin{equation}\label{holdout:chernoff-bound}
\begin{aligned}[b]
\P\{\widetilde{\mathcal S}<v-t\mid\mathcal F\}
&=\P\{e^{-nu\widetilde{\mathcal S}}>e^{-nu(v-t)}\mid\mathcal F\}\\
&\le e^{nu(v-t)}\E[e^{-nu\widetilde{\mathcal S}}\mid\mathcal F]\\
&\le\exp\{nu(v-t)-(n/2)\log(1+2vu)\}.
\end{aligned}
\end{equation}
Using $\log(1+z)\ge z-z^2/2$ for $z\ge0$ in \eqref{holdout:chernoff-bound} and taking $u=t/(2v^2)$, we obtain
\begin{align*}
\P\{\widetilde{\mathcal S}<v-t\mid\mathcal F\}
&\le\exp\{-nut+nv^2u^2\}\\
&=\exp\{-nt^2/(4v^2)\}\qquad\text{almost surely}.\qedhere
\end{align*}
\end{proof}

\section{Recovery and Computational Guarantees}
\label{app:recovery-proofs}\label{app:completion-proofs}

\subsection{Auxiliary lemmas}
\label{app:auxiliary-lemmas}\label{aux:external-results}\label{aux:gaussian-tails-section}
Lemmas~\ref{aux:weighted-chi-square} and~\ref{lem:gaussian-products} control squared residual norms and inner products between independent predictor and error columns. Lemmas~\ref{aux:gaussian-matrix-deviation} and~\ref{lem:gaussian-maximum} control the designs uniformly over the sparse-regression cones. Lemma~\ref{aux:doob-maximal} controls the martingale maxima in the parent-recovery proof.

\begin{lemma}[{\citet[Lemma~1]{laurent2000adaptive}}]
\label{aux:weighted-chi-square}\label{aux:gaussian-tails}
Let $g_1,\ldots,g_r$ be independent standard Gaussian variables, and let $a\in[0,\infty)^r\setminus\{0\}$ be a deterministic vector of weights. For $Z=\sum_{i=1}^r a_i(g_i^2-1)$ and every $x>0$,
\begin{align}
\P\{Z\ge2\|a\|_2\sqrt{x}+2\|a\|_\infty x\}&\le e^{-x},\label{aux:weighted-upper-tail}\\
\P\{Z\le-2\|a\|_2\sqrt{x}\}&\le e^{-x}.\label{aux:weighted-lower-tail}
\end{align}
\end{lemma}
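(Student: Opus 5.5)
The plan is the Chernoff method applied separately to each tail, using explicit bounds on the cumulant generating function of a weighted centered chi-square. Write $\|a\|=\|a\|_2$ and $\|a\|_\infty$. By independence, for any admissible $u$,
\[
\log\E e^{uZ}=\sum_{i=1}^r\Bigl\{-a_iu-\tfrac12\log(1-2a_iu)\Bigr\},\qquad 0<u<1/(2\|a\|_\infty),
\]
\[
\log\E e^{-uZ}=\sum_{i=1}^r\Bigl\{a_iu-\tfrac12\log(1+2a_iu)\Bigr\},\qquad u>0,
\]
since $\E e^{sg^2}=(1-2s)^{-1/2}$ for $s<1/2$.

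\textbf{Lower tail \eqref{aux:weighted-lower-tail}.} This is the easier case; it is the same computation as in the proof of Lemma~\ref{lem:holdout-lower-tail}.
\begin{itemize}[leftmargin=1.1em,labelsep=0.4em]
\item Apply $\log(1+y)\ge y-y^2/2$ for $y\ge0$ with $y=2a_iu$. This gives $\log\E e^{-uZ}\le\|a\|^2u^2$ for every $u>0$.
\item Markov's inequality then yields $\P\{Z\le-t\}\le\exp\{-ut+\|a\|^2u^2\}$.
\item Take $u=t/(2\|a\|^2)$ to get $\exp\{-t^2/(4\|a\|^2)\}$.
\item Setting $t=2\|a\|\sqrt x$ gives $e^{-x}$.
\end{itemize}

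\textbf{Upper tail \eqref{aux:weighted-upper-tail}.}
\begin{itemize}[leftmargin=1.1em,labelsep=0.4em]
\item Use the elementary inequality $-\log(1-y)-y\le y^2/\{2(1-y)\}$ for $0\le y<1$, which follows by comparing power series. With $y=2a_iu$ and $a_i\le\|a\|_\infty$, this gives the sub-gamma bound
\[
\log\E e^{uZ}\le\frac{\|a\|^2u^2}{1-2\|a\|_\infty u},\qquad 0<u<\frac1{2\|a\|_\infty}.
\]
\item Set $v=2\|a\|^2$ and $c=2\|a\|_\infty$, so that the bound reads $vu^2/\{2(1-cu)\}$.
\item Markov's inequality gives $\P\{Z\ge t\}\le\exp\{-ut+vu^2/(2(1-cu))\}$.
\item With $t=\sqrt{2vx}+cx=2\|a\|\sqrt x+2\|a\|_\infty x$, the goal is a $u$ for which the exponent is at most $-x$.
\end{itemize}

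\textbf{Main obstacle.} The only step needing care is this choice of $u$, i.e., inverting the Legendre transform of $vu^2/\{2(1-cu)\}$. I would try
\[
u=\frac{\sqrt{2x/v}}{1+c\sqrt{2x/v}},
\]
which lies in $(0,1/c)$. For this $u$ one has $1-cu=1/(1+c\sqrt{2x/v})$, so
\[
\frac{vu^2}{2(1-cu)}=\frac{x}{1+c\sqrt{2x/v}}.
\]
Also $ut=\sqrt{2x/v}\,(\sqrt{2vx}+cx)/(1+c\sqrt{2x/v})=2x\cdot\frac{1+c\sqrt{x/(2v)}}{1+c\sqrt{2x/v}}$. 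Writing $w=c\sqrt{2x/v}$, the exponent $-ut+vu^2/\{2(1-cu)\}$ becomes $\{-2x(1+w/2)+x\}/(1+w)=-x$. This gives exactly $e^{-x}$.

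If that algebra went awry, the fallback is the standard sub-gamma fact that the Cramér transform equals $(v/c^2)h(ct/v)$ with $h(w)=1+w-\sqrt{1+2w}$, whose inverse is $\sqrt{2vx}+cx$.
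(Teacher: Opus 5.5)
Your proof is correct. The paper does not prove this lemma; it quotes it from Laurent and Massart, and your argument is essentially the standard proof given there: the same Chernoff method with the log-MGF bounds $\|a\|_2^2u^2/(1-2\|a\|_\infty u)$ for the upper tail and $\|a\|_2^2u^2$ for the lower tail, except that you carry out the Legendre inversion explicitly with your choice of $u$ rather than appealing to the general sub-gamma tail bound.
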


\begin{lemma}[Gaussian squares and products]\label{lem:gaussian-products}
Let $r\ge1$ be an integer, and let the $2r$ variables $g_1,\ldots,g_r,h_1,\ldots,h_r$ be mutually independent standard Gaussian variables. For every $x>0$,
\begin{align}
\P\left\{\sum_{i=1}^r g_i^2>r+2\sqrt{rx}+2x\right\}&\le e^{-x},\label{aux:gaussian-square-tail}\\
\P\left\{\sum_{i=1}^r g_i^2\le r-2\sqrt{rx}\right\}&\le e^{-x},\label{aux:chi-square-lower-tail}\\
\P\left\{\left|\sum_{i=1}^r g_ih_i\right|>2(\sqrt{rx}+x)\right\}&\le2e^{-x}.\label{ord:product-tail}
\end{align}
\end{lemma}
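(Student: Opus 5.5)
The first two bounds, \eqref{aux:gaussian-square-tail} and \eqref{aux:chi-square-lower-tail}, follow directly from Lemma~\ref{aux:weighted-chi-square} with the uniform weight vector $a=(1,\ldots,1)\in\R^r$. For these weights $\|a\|_2=\sqrt r$ and $\|a\|_\infty=1$. Applying \eqref{aux:weighted-upper-tail} to $Z=\sum_i g_i^2-r$ gives $\P\{\sum_i g_i^2\ge r+2\sqrt{rx}+2x\}\le e^{-x}$, and the strict inequality in \eqref{aux:gaussian-square-tail} is implied by this. Applying \eqref{aux:weighted-lower-tail} gives \eqref{aux:chi-square-lower-tail} verbatim.

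For the product bound \eqref{ord:product-tail}, I will use the polarization identity
\[
g_ih_i=\tfrac14\bigl\{(g_i+h_i)^2-(g_i-h_i)^2\bigr\}.
\]
Set $u_i=(g_i+h_i)/\sqrt2$ and $w_i=(g_i-h_i)/\sqrt2$. These are jointly Gaussian, have unit variance, and are uncorrelated, so the $2r$ variables $u_1,\ldots,u_r,w_1,\ldots,w_r$ are i.i.d.\ standard Gaussian. This gives
\[
\sum_i g_ih_i=\tfrac12\sum_i(u_i^2-1)-\tfrac12\sum_i(w_i^2-1).
\]

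\emph{Upper tail.} Apply Lemma~\ref{aux:weighted-chi-square} once to the combined family of $2r$ variables. Use weight $1/2$ on each $u_i^2$ and treat the $w$-part separately. Concretely, \eqref{aux:weighted-upper-tail} with weights $1/2$ on the $u_i$ gives $\tfrac12\sum(u_i^2-1)\le\sqrt{rx}+x$ except on probability $e^{-x}$. Applying \eqref{aux:weighted-lower-tail} to the $w_i$ with weights $1/2$ gives $-\tfrac12\sum(w_i^2-1)\le\sqrt{rx}$ except on probability $e^{-x}$. A union bound would then give the threshold $2\sqrt{rx}+x$ with probability $2e^{-x}$ for the upper tail alone. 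That leaves only $e^{-x}$ or so for the lower tail, so this route is too wasteful.

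\emph{Symmetric treatment.} The cleaner route treats $\sum_i g_ih_i$ as a single weighted chi-square with signed weights. Since Lemma~\ref{aux:weighted-chi-square} requires nonnegative weights, I will instead use a direct Chernoff bound. For $|s|<1$,
\[
\E e^{s g h}=(1-s^2)^{-1/2},
\]
so $\log\E e^{s\sum g_ih_i}=-\tfrac r2\log(1-s^2)$. This is exactly the cumulant generating function of $\tfrac12\sum(u_i^2-1)-\tfrac12\sum(w_i^2-1)$. Its positive part is dominated by that of $\tfrac12\sum(u_i^2-1)+\tfrac12\sum(\tilde w_i^2-1)$ with an independent copy $\tilde w$, because $-\log(1-s)-s\ge -\log(1+s)+s$ for $s\ge0$, so the $-w$ contribution is bounded by a $+w$ contribution.

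That sum is a weighted chi-square over $2r$ variables with weights $1/2$. Here $\|a\|_2=\sqrt{r/2}$ and $\|a\|_\infty=1/2$. The Laurent--Massart Chernoff argument, which bounds only the log-MGF, then yields
\[
\P\Bigl\{\sum_i g_ih_i\ge\sqrt{2rx}+x\Bigr\}\le e^{-x}.
\]
Since $\sqrt{2rx}+x\le 2(\sqrt{rx}+x)$, this suffices. By symmetry, $(g,h)\mapsto(g,-h)$ gives the same bound for the lower tail, and a union bound gives $2e^{-x}$.

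The main obstacle is this last point: Lemma~\ref{aux:weighted-chi-square} is stated for a fixed nonnegative-weight sum, so it cannot be applied verbatim to the signed combination. I plan to rerun its Chernoff proof via the MGF domination described above rather than invoke the lemma as a black box. The constant $2$ in \eqref{ord:product-tail} leaves ample slack for this.
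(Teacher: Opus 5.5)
Your proof is correct, and for the product bound \eqref{ord:product-tail} it takes a different route from the paper. The first two bounds are handled exactly as in the paper.

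The paper conditions on $g$. Since $Z=\sum_i g_ih_i$ satisfies $Z\mid g\sim N(0,\|g\|_2^2)$, a Gaussian tail bound gives $\P\{|Z|>\sqrt{2x}\|g\|_2\mid g\}\le e^{-x}$. Separately, \eqref{aux:gaussian-square-tail} gives $\|g\|_2\le\sqrt r+\sqrt{2x}$ outside an event of probability $e^{-x}$. A union bound then yields the two-sided threshold $\sqrt{2rx}+2x\le2(\sqrt{rx}+x)$ at level $2e^{-x}$. That argument reuses the already-established chi-square tail as a black box and needs only a one-dimensional Gaussian tail.

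Your route works with the exact moment generating function $\E e^{sZ}=(1-s^2)^{-r/2}$ and reruns the Laurent--Massart Chernoff step. It therefore cannot cite Lemma~\ref{aux:weighted-chi-square} verbatim. In exchange, it gives the sharper one-sided bound $\P\{Z\ge\sqrt{2rx}+x\}\le e^{-x}$, and the symmetry $h\mapsto -h$ handles the other tail.

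You can shorten your argument. The polarization identity and the comparison with an independent copy $\tilde w$ are unnecessary: for $0\le s<1$,
$-\tfrac r2\log(1-s^2)\le\tfrac r2\cdot\tfrac{s^2}{1-s^2}\le\tfrac{(r/2)s^2}{1-s}$.
This is already the Laurent--Massart intermediate bound $\|a\|_2^2s^2/(1-2\|a\|_\infty s)$ with $\|a\|_2^2=r/2$ and $\|a\|_\infty=1/2$. Their optimization over $s$ then gives $2\|a\|_2\sqrt x+2\|a\|_\infty x=\sqrt{2rx}+x$.

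Also drop the abandoned ``upper tail'' paragraph from the write-up.
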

\begin{proof}
Apply \eqref{aux:weighted-upper-tail} and \eqref{aux:weighted-lower-tail} with $a=\mathbf1=(1,\ldots,1)^\top$ to obtain \eqref{aux:gaussian-square-tail} and \eqref{aux:chi-square-lower-tail}, respectively.

For \eqref{ord:product-tail}, set $Z=\sum_{i=1}^r g_ih_i$ and $g=(g_1,\ldots,g_r)^\top$. Equation~\eqref{aux:gaussian-square-tail} gives
\begin{equation}\label{aux:gaussian-norm-tail}
\P\{\|g\|_2>\sqrt r+\sqrt{2x}\}\le e^{-x}.
\end{equation}
Since $Z\mid g\sim N(0,\|g\|_2^2)$, the Gaussian tail bound yields
\begin{equation}\label{aux:conditional-product-tail}
\P\{|Z|>\sqrt{2x}\|g\|_2\mid g\}\le e^{-x}.
\end{equation}
Whenever $\|g\|_2\le\sqrt r+\sqrt{2x}$,
\[
\sqrt{2x}\|g\|_2\le\sqrt{2rx}+2x\le2(\sqrt{rx}+x).
\]
Combining \eqref{aux:gaussian-norm-tail} and \eqref{aux:conditional-product-tail} therefore gives
\begin{align*}
\P\{|Z|>2(\sqrt{rx}+x)\}
&\le\P\{\|g\|_2>\sqrt r+\sqrt{2x}\}
+\E\bigl[\P\{|Z|>\sqrt{2x}\|g\|_2\mid g\}\bigr]\\
&\le2e^{-x}.\qedhere
\end{align*}
\end{proof}

\begin{lemma}[{\citet[Theorem~13.17]{bandeira2026mathematics}}]
\label{aux:gaussian-matrix-deviation}
Let $r,m\ge1$ be integers, let $M\in\R^{r\times m}$ have independent $N(0,1)$ entries, and let $\mathcal T$ be a nonempty closed subset of the unit sphere in $\R^m$. Define
\[
a_r=\E\|h\|_2,\quad h\sim N(0,I_r),
\qquad
\omega(\mathcal T)=\E\sup_{u\in \mathcal T}\langle g,u\rangle,\quad g\sim N(0,I_m).
\]
For every $\varepsilon>\omega(\mathcal T)/a_r$, with probability at least
\[
1-2\exp\left\{-\frac{a_r^2}{2}
\left(\varepsilon-\frac{\omega(\mathcal T)}{a_r}\right)^2\right\},
\]
the following bounds hold simultaneously for all $u\in \mathcal T$:
\[
(1-\varepsilon)\|u\|_2
\le\left\|\frac1{a_r}Mu\right\|_2
\le(1+\varepsilon)\|u\|_2.
\]
Here $r^2/(r+1)\le a_r^2\le r$.
\end{lemma}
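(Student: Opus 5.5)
We would prove this with Gordon's Gaussian comparison inequality for expectations, followed by Gaussian concentration for Lipschitz functions; the bounds on $a_r$ come from a Gamma-function identity. Since every $u\in\mathcal T$ has $\|u\|_2=1$, the claim is equivalent to
\[
a_r(1-\varepsilon)\le\inf_{u\in\mathcal T}\|Mu\|_2
\quad\text{and}\quad
\sup_{u\in\mathcal T}\|Mu\|_2\le a_r(1+\varepsilon).
\]
Closedness of $\mathcal T$ makes both extrema attained, which avoids measurability issues.

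\emph{Step 1: expectation bounds.} We aim for
\[
\E\inf_{u\in\mathcal T}\|Mu\|_2\ge a_r-\omega(\mathcal T),
\qquad
\E\sup_{u\in\mathcal T}\|Mu\|_2\le a_r+\omega(\mathcal T).
\]
Write $\|Mu\|_2=\sup_{v\in S^{r-1}}\langle v,Mu\rangle$ and introduce the comparison process $Y_{u,v}=\langle h,v\rangle+\langle g,u\rangle$, with $h\sim N(0,I_r)$ and $g\sim N(0,I_m)$ independent. For unit $u,u',v,v'$, one checks
\[
\E(X_{u,v}-X_{u',v'})^2-\E(Y_{u,v}-Y_{u',v'})^2=-2(1-\langle u,u'\rangle)(1-\langle v,v'\rangle)\le0,
\]
where $X_{u,v}=\langle v,Mu\rangle$, with equality when $u=u'$. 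This is exactly the hypothesis pattern of Gordon's min--max inequality. It gives
\[
\E\min_u\max_v X_{u,v}\ge\E\min_u\max_v Y_{u,v}=a_r-\E\sup_{u\in\mathcal T}\langle g,u\rangle=a_r-\omega(\mathcal T),
\]
using the symmetry $g\mapsto -g$. For the supremum, the Sudakov--Fernique inequality applied to the same pair of processes gives $\E\max_{u,v}X_{u,v}\le a_r+\omega(\mathcal T)$.

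\emph{Step 2: concentration.} The maps $M\mapsto\inf_{u\in\mathcal T}\|Mu\|_2$ and $M\mapsto\sup_{u\in\mathcal T}\|Mu\|_2$ are $1$-Lipschitz in Frobenius norm, since $\bigl|\|Mu\|_2-\|M'u\|_2\bigr|\le\|M-M'\|_F$ for unit $u$. Gaussian concentration therefore gives each one-sided deviation beyond $s>0$ from its mean with probability at most $e^{-s^2/2}$. Take $s=a_r\varepsilon-\omega(\mathcal T)$, which is positive by the hypothesis $\varepsilon>\omega(\mathcal T)/a_r$. A union bound over the two sides, combined with Step 1 and division by $a_r$, yields the stated probability
\[
1-2\exp\Bigl\{-\tfrac{a_r^2}{2}\bigl(\varepsilon-\omega(\mathcal T)/a_r\bigr)^2\Bigr\}.
\]

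\emph{Step 3: bounds on $a_r$.} First, $a_r=\sqrt2\,\Gamma((r+1)/2)/\Gamma(r/2)$, so $a_ra_{r+1}=2\Gamma(r/2+1)/\Gamma(r/2)=r$. Second, Jensen gives $a_r^2\le\E\|h\|_2^2=r$. Third, the sequence $a_r$ is nondecreasing, since $\|h\|_2$ grows stochastically when a coordinate is appended. Hence $a_{r+1}^2\le r+1$, and therefore
\[
a_r^2=\frac{r^2}{a_{r+1}^2}\ge\frac{r^2}{r+1}.
\]

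\emph{Main obstacle.} The delicate step is Step 1: verifying Gordon's covariance conditions in the correct min--max form, including equality on the diagonal $u=u'$. If the direct comparison proves awkward, we would instead cite Gordon's escape-through-the-mesh theorem for the infimum and keep Sudakov--Fernique for the supremum.
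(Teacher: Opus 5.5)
The paper gives no proof of this lemma. It quotes Theorem 13.17 of the cited textbook and uses it as a black box. Your argument is the standard proof of that result, and it is correct.

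The increment identity $\E(X_{u,v}-X_{u',v'})^2-\E(Y_{u,v}-Y_{u',v'})^2=-2(1-\langle u,u'\rangle)(1-\langle v,v'\rangle)$ checks out. It gives the Gordon hypotheses in the direction you state, with equality on $u=u'$ and the correct sign for $u\ne u'$. It also gives the Sudakov--Fernique hypothesis for all pairs. Lipschitz concentration with $s=a_r\varepsilon-\omega(\mathcal T)$ then reproduces the stated probability exactly.

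Three details are worth tightening.

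\emph{Equal variances.} Here $\E X_{u,v}^2=1$ but $\E Y_{u,v}^2=2$. If you invoke Gordon's theorem in its original form, which assumes equal variances, add an independent $\gamma\sim N(0,1)$ to $X_{u,v}$. Since $\|u\|_2=\|v\|_2=1$, this equalizes the variances without changing the increments or $\E\min_u\max_v$. Alternatively, cite the increment-only version, which holds for expectations.

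\emph{Finite index sets.} Both comparison inequalities are proved for finite index sets. You therefore need a brief approximation of $\mathcal T\times S^{r-1}$ by finite subsets, using continuity in $(u,v)$ and dominated convergence with integrable envelopes such as $\|M\|_{\mathrm{op}}$ and $\|h\|_2+\|g\|_2$. Compactness of $\mathcal T$ makes this work, but attainment of the extrema does not by itself give it.

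\emph{Step 3.} The monotonicity of $a_r$ plays no role, so the ``hence'' there is a non sequitur. The bound $a_{r+1}^2\le r+1$ is simply Jensen at $r+1$, and together with $a_ra_{r+1}=r$ it gives $a_r^2\ge r^2/(r+1)$.
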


\begin{lemma}[Gaussian maximum]
\label{lem:gaussian-maximum}
Let $Z_1,\ldots,Z_N$ be centered Gaussian variables, not necessarily independent, with $N\ge1$ and $\max_j\E Z_j^2\le\sigma^2$ for some $\sigma\ge0$. Then
\begin{equation}
\E\max_{1\le j\le N}|Z_j|\le\sigma\sqrt{2\log(2N)}.
\label{aux:gaussian-maximum}
\end{equation}
\end{lemma}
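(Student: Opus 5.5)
We prove \eqref{aux:gaussian-maximum} with the standard exponential-moment (Jensen) argument, applied to the $2N$ centered Gaussian variables $\pm Z_1,\ldots,\pm Z_N$. If $\sigma=0$, every $Z_j$ is almost surely zero and there is nothing to prove, so assume $\sigma>0$.

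The first step is the reduction to a signed maximum. Since $\max_j|Z_j|=\max\{Z_1,\ldots,Z_N,-Z_1,\ldots,-Z_N\}$, it suffices to show that, for $2N$ centered Gaussians $W_1,\ldots,W_{2N}$ with variances at most $\sigma^2$,
\[
\E\max_i W_i\le\sigma\sqrt{2\log(2N)}.
\]
No independence is needed, because the argument only uses the marginal moment generating functions.

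The key estimate comes from Jensen's inequality for the convex map $x\mapsto e^{\lambda x}$ with $\lambda>0$. We bound the exponential of the maximum by the sum of exponentials and use $\E e^{\lambda W_i}=e^{\lambda^2\E W_i^2/2}\le e^{\lambda^2\sigma^2/2}$:
\[
\exp\bigl\{\lambda\E\max_i W_i\bigr\}\le\E\exp\bigl\{\lambda\max_i W_i\bigr\}\le\sum_{i=1}^{2N}\E e^{\lambda W_i}\le 2N e^{\lambda^2\sigma^2/2}.
\]
Taking logarithms gives
\[
\E\max_i W_i\le\frac{\log(2N)}{\lambda}+\frac{\lambda\sigma^2}{2}.
\]
We then choose $\lambda=\sqrt{2\log(2N)}/\sigma$, which is positive because $N\ge1$ gives $\log(2N)\ge\log2>0$. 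With this choice both terms equal $\sigma\sqrt{\log(2N)/2}$, so their sum is $\sigma\sqrt{2\log(2N)}$, as claimed.

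The only point needing care is integrability. It holds because $\max_j|Z_j|\le\sum_j|Z_j|$ and each $|Z_j|$ has finite mean, so the expectation in \eqref{aux:gaussian-maximum} is finite and Jensen's inequality applies. There is no real obstacle beyond this.
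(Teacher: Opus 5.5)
Your proof is correct and is essentially the paper's argument: Jensen's inequality applied to $\exp\{t\,\E\max_j|Z_j|\}$, bounding by $\sum_j\E(e^{tZ_j}+e^{-tZ_j})\le 2Ne^{t^2\sigma^2/2}$, then optimizing over $t>0$. Writing it as a signed maximum over $\pm Z_j$ and handling $\sigma=0$ separately are only cosmetic differences from the paper's version, which takes the infimum over $t>0$ and so covers $\sigma=0$ automatically.
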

\begin{proof}
For $t>0$, Jensen's inequality and $\E e^{\pm tZ_j}\le e^{t^2\sigma^2/2}$ give
\[
\begin{aligned}
\exp\!\left(t\,\E\max_{1\le j\le N}|Z_j|\right)
&\le\E\exp\!\left(t\max_{1\le j\le N}|Z_j|\right)\\
&\le\sum_{j=1}^N\E\bigl(e^{tZ_j}+e^{-tZ_j}\bigr)
\le2N e^{t^2\sigma^2/2}.
\end{aligned}
\]
Taking logarithms and dividing by $t$ yields
\[
\E\max_{1\le j\le N}|Z_j|
\le\frac{\log(2N)}t+\frac{t\sigma^2}2.
\]
Taking the infimum over $t>0$ proves the claim.
\end{proof}

\begin{lemma}[Doob's inequality, {\citet[Theorem~4.4.2]{durrett2019probability}}]
\label{aux:doob-maximal}
Let $X_m$ be a submartingale, $\overline X_n=\max_{0\le m\le n}X_m^+$, $\lambda>0$, and $A=\{\overline X_n\ge\lambda\}$. Then
\begin{equation}\label{aux:doob-inequality}
\lambda\P(A)\le\E[X_n\mathbf1_A]\le\E X_n^+.
\end{equation}
Here $x^+=\max\{x,0\}$ and $\mathbf1_A$ is the indicator of $A$.
\end{lemma}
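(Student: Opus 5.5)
The plan is the classical stopping-time argument. Let $(\mathcal F_m)$ be the filtration for which $X_m$ is a submartingale, and define
\[
N=\min\{m\le n: X_m\ge\lambda\}\wedge n .
\]
Because $\lambda>0$, the conditions $X_m^+\ge\lambda$ and $X_m\ge\lambda$ are equivalent, so $A=\{\overline X_n\ge\lambda\}=\{X_m\ge\lambda\text{ for some }m\le n\}$. Hence $X_N\ge\lambda$ on $A$, and $N=n$ on $A^c$. Since $\{N\le m\}$ depends only on $X_0,\ldots,X_m$, the variable $N$ is a stopping time bounded by $n$.

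\textbf{Step 1 (optional stopping for bounded times).} I will show $\E X_N\le\E X_n$ directly, without invoking a general theorem. Write the telescoping identity
\[
X_n-X_N=\sum_{m=1}^n\mathbf 1\{N<m\}(X_m-X_{m-1}).
\]
The event $\{N<m\}=\{N\le m-1\}$ lies in $\mathcal F_{m-1}$. Therefore
\[
\E[\mathbf 1\{N<m\}(X_m-X_{m-1})]=\E\bigl[\mathbf 1\{N<m\}\,\bigl(\E(X_m\mid\mathcal F_{m-1})-X_{m-1}\bigr)\bigr]\ge0
\]
by the submartingale property. Integrability holds because each $X_m$ is integrable and there are finitely many terms. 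Summing over $m$ gives $\E X_N\le\E X_n$.

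\textbf{Step 2 (localize to $A$).} On $A^c$ we have $X_N=X_n$, so $\E[X_N\mathbf 1_{A^c}]=\E[X_n\mathbf 1_{A^c}]$. Subtracting this from Step 1 yields $\E[X_N\mathbf 1_A]\le\E[X_n\mathbf 1_A]$. Since $X_N\ge\lambda$ on $A$, we get $\lambda\P(A)\le\E[X_N\mathbf 1_A]\le\E[X_n\mathbf 1_A]$.

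\textbf{Step 3 (final bound).} The pointwise inequality $X_n\mathbf 1_A\le X_n^+$ gives $\E[X_n\mathbf 1_A]\le\E X_n^+$, which completes \eqref{aux:doob-inequality}.

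The only step with real content is Step 1. The points to check there are the measurability of $\{N<m\}$ with respect to $\mathcal F_{m-1}$ and the integrability needed to apply the tower property; both are routine because the time horizon is finite.
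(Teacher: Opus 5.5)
Your proof is correct, and it is the standard argument behind the cited result. The paper gives no proof of its own and simply invokes Durrett's Theorem~4.4.2, whose proof uses the same hitting time $N=\inf\{m:X_m\ge\lambda\}\wedge n$, the bounded optional-stopping inequality $\E X_N\le\E X_n$ (which you verify directly by the same predictable telescoping that Durrett uses in Theorem~4.4.1), and the identity $X_N=X_n$ on $A^c$.
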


\subsection{Proof of Theorem~\ref{thm:ordering-recovery}}
\label{app:ordering-proof}
\subsubsection{Setup}
\label{aux:ordering-section}
If $d=0$, every ordering is compatible. Assume $d\ge1$ and keep the penalty $\lambda_{\mathrm{ord}}>0$ deterministic. Let $\mathbf X\in\R^{n\times p}$ have independent $N(0,\Sigma)$ rows, and let $\mathbf W\in\R^{n\times p}$ be the structural error matrix, with independent columns $\mathbf W_j\sim N(0,\sigma_j^2I_n)$. Condition throughout on the data-independent split $I_1\cup I_2=[n]$. Without loss of generality, assume that $n$ is even, so $|I_1|=|I_2|=n/2$. Odd $n$ changes only numerical constants. Let $\bar\ell$ denote the other fold. Both fold designs have independent $N(0,\Sigma)$ rows, and the two folds are independent. Applying \eqref{eq:linear-sem} and \eqref{eq:error-path} to all observations gives
\begin{align}
\mathbf X_j&=\sum_{a\in\pa(j)}B_{aj}\mathbf X_a+\mathbf W_j,\label{ord:sample-sem}\\
\mathbf X_j&=\sum_{a\in\an(j)}B_{a\pathto j}\mathbf W_a+\mathbf W_j\qquad(j\in V).\label{ord:sample-error-path}
\end{align}

For an ancestral set $A\subseteq V$, restriction of \eqref{ord:sample-sem} gives $\mathbf X_A(I_{|A|}-B_{AA})=\mathbf W_A$. Acyclicity makes $I_{|A|}-B_{AA}$ invertible, so
\begin{equation}\label{ord:ancestral-data}
\mathbf X_A=\mathbf W_A(I_{|A|}-B_{AA})^{-1}.
\end{equation}
For each node $f\in V$, let $\mathcal D_f$ contain $f$ and its descendants, and let $U_f$ be its complement:
\[
\mathcal D_f=\{f\}\cup\{k\in V:f\in\an(k)\},
\qquad U_f=V\setminus\mathcal D_f.
\]
The set $U_f$ is ancestral, since a parent outside $U_f$ would make its child a descendant of $f$. Thus \eqref{ord:ancestral-data} shows that $\mathbf X_{U_f}$ is determined by $\mathbf W_{U_f}$.

For a fixed predictor set $S\subsetneq V$, response node $j\notin S$, and fold $\ell\in\{1,2\}$, define the Lasso fit by
\begin{equation}\label{ord:fixed-set-lasso}
\widehat\theta_{Sj}^{(\ell)}\in\argmin_{\theta\in\R^{|S|}}
\left\{\frac{\|\mathbf X_{I_\ell j}-\mathbf X_{I_\ell S}\theta\|_2^2}{n}
+\lambda_{\mathrm{ord}}\|\theta\|_1\right\}.
\end{equation}
For nonempty $S$, positive definiteness of $\Sigma_{SS}$ gives the fitting design $\mathbf X_{I_\ell S}$ a nondegenerate joint Gaussian density, so its columns are in general position almost surely. For such designs, the minimizer in \eqref{ord:fixed-set-lasso} is unique for every response \citep[Lemma~3]{tibshirani2013lasso}. Since $\lambda_{\mathrm{ord}}>0$, comparison with the zero vector gives $\lambda_{\mathrm{ord}}\|\widehat\theta_{Sj}^{(\ell)}\|_1\le\|\mathbf X_{I_\ell j}\|_2^2/n$, so minimizers are locally bounded as the fitting data vary. Joint continuity of the objective and uniqueness then imply continuity of the minimizer on the Borel set of general-position designs. Extending the fit by zero elsewhere gives a version measurable with respect to $\sigma(\mathbf X_{I_\ell S},\mathbf X_{I_\ell j})$. Take the empty vector when $S=\emptyset$. A finite intersection over predictor sets, response nodes, and folds shows that these versions agree with all HOST fits on one event of probability one. Define its loss on the other fold and the combined score by
\begin{equation}\label{ord:fold-score}
\widehat L_{j,S}^{(\ell)}
=\frac{2\|\mathbf X_{I_{\bar\ell}j}-\mathbf X_{I_{\bar\ell}S}\widehat\theta_{Sj}^{(\ell)}\|_2^2}{n},
\qquad
\widehat{\mathcal S}_{j,S}=\frac12\sum_{\ell=1}^2\widehat L_{j,S}^{(\ell)}.
\end{equation}
Write $\widehat S_r$ for the nodes selected before step $r$, with $\widehat S_1=\emptyset$. For $r\ge2$, equations~\eqref{ord:fixed-set-lasso}--\eqref{ord:fold-score} at $S=\widehat S_r$ give $\widehat\theta_{\widehat S_r,j}^{(\ell)}=\widehat\theta_j^{(\ell)}(r)$ and $\widehat{\mathcal S}_{j,\widehat S_r}=\widehat{\mathcal S}_j(r)$ from \eqref{eq:algordering}--\eqref{eq:algscores}. At $S=\emptyset$, \eqref{ord:fold-score} gives $\widehat{\mathcal S}_{j,\emptyset}=\|\mathbf X_j\|_2^2/n=\widehat\Sigma_{jj}$, so its minimizer agrees with the initialization of Algorithm~\ref{alg1}.

\subsubsection{Upper bounds for remaining-source scores}
\label{aux:gaussian-covariance-section}
For $1\le d\le p$, define the union of Lasso cones
\[
 \mathcal C_d=\bigcup_{\substack{T\subseteq[p]\\|T|\le d}}
 \{v\in\R^p:\|v_{T^c}\|_1\le3\|v_T\|_1\}.
\]

\begin{lemma}[Covariance bounds for the fold designs]
\label{aux:cone-covariance}
Fix a sufficiently large universal constant $C_{\mathrm{cov}}$. Under the setup in Section~\ref{aux:ordering-section}, for $p\ge2$ and $1\le d\le p$, the condition
\begin{equation}
 n\ge C_{\mathrm{cov}}\frac{d\Sigma_{\max}}{\lambda_{\min}(\Sigma)}\log(2p)
\label{ord:cone-size}
\end{equation}
implies that, with probability at least $1-4p^{-8}$,
\begin{equation}
 \frac14 v^\top\Sigma v
 \le\frac{2\|\mathbf X_{I_\ell}v\|_2^2}{n}
 \le\frac94v^\top\Sigma v
 \quad(v\in\mathcal C_d,\ \ell=1,2).
\label{ord:cone-event}
\end{equation}
\end{lemma}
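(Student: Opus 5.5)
The plan is to reduce \eqref{ord:cone-event} to a single application of Lemma~\ref{aux:gaussian-matrix-deviation} for each fold, followed by a union bound over $\ell=1,2$.

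\textbf{Reduction to a Gaussian matrix.} Fix $\ell$ and set $r=n/2$. Write $\mathbf X_{I_\ell}=M\Sigma^{1/2}$, where $M\in\R^{r\times p}$ has independent $N(0,1)$ entries. For $v\ne0$, put $u=\Sigma^{1/2}v/\|\Sigma^{1/2}v\|_2$. Then
\[
\frac{2\|\mathbf X_{I_\ell}v\|_2^2}{n}=\frac{a_r^2}{r}\,\Bigl\|\frac1{a_r}Mu\Bigr\|_2^2\,v^\top\Sigma v .
\]
The case $v=0$ is trivial. Both sides of \eqref{ord:cone-event} are homogeneous of degree two in $v$, so it suffices to control $\|Mu\|_2/a_r$ uniformly over
\[
\mathcal T=\{\Sigma^{1/2}v/\|\Sigma^{1/2}v\|_2:\ v\in\mathcal C_d\setminus\{0\}\}.
\]
Each cone in the union is closed and there are finitely many, so $\mathcal C_d$ is closed. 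Because $\mathcal C_d$ is a cone and $\Sigma^{1/2}$ is invertible, $\mathcal T=\Sigma^{1/2}\mathcal C_d\cap\mathbb S^{p-1}$, which is closed and nonempty. Hence Lemma~\ref{aux:gaussian-matrix-deviation} applies.

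\textbf{Gaussian width bound.} For $u\in\mathcal T$,
\[
\langle g,u\rangle=\frac{\langle\Sigma^{1/2}g,v\rangle}{\|\Sigma^{1/2}v\|_2}\le\frac{\|\Sigma^{1/2}g\|_\infty\|v\|_1}{\|\Sigma^{1/2}v\|_2}.
\]
On the cone, $\|v\|_1\le4\|v_T\|_1\le4\sqrt d\|v\|_2\le4\sqrt{d}\,\|\Sigma^{1/2}v\|_2/\sqrt{\lambda_{\min}(\Sigma)}$. The coordinates of $\Sigma^{1/2}g$ are centered Gaussians with variances $\Sigma_{jj}\le\Sigma_{\max}$, so Lemma~\ref{lem:gaussian-maximum} gives $\E\|\Sigma^{1/2}g\|_\infty\le\sqrt{2\Sigma_{\max}\log(2p)}$. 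Combining,
\[
\omega(\mathcal T)\le4\sqrt{2d\Sigma_{\max}\log(2p)/\lambda_{\min}(\Sigma)}.
\]

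\textbf{Choosing $\varepsilon$ and the probability.} Take a fixed $\varepsilon$, say $\varepsilon=0.45$. Since $a_r^2\ge r^2/(r+1)$, condition \eqref{ord:cone-size} with $C_{\mathrm{cov}}$ large makes $\omega(\mathcal T)/a_r\le\varepsilon/2$. The failure probability is then at most $2\exp(-a_r^2\varepsilon^2/8)\le2\exp(-cn)$. We have $d\Sigma_{\max}/\lambda_{\min}(\Sigma)\ge1$, so \eqref{ord:cone-size} forces $n\ge C_{\mathrm{cov}}\log(2p)$ and this is at most $2p^{-8}$. The same condition makes $r$ large, so $a_r^2/r\in[r/(r+1),1]\subseteq[0.9,1]$. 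On the good event the constants become $(1-\varepsilon)^2\cdot0.9\ge0.27\ge1/4$ and $(1+\varepsilon)^2\le2.11\le9/4$. A union bound over the two folds gives probability at least $1-4p^{-8}$.

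\textbf{Main obstacle.} The only delicate point is the width bound, in particular passing from the $\ell_1$ cone to the $\Sigma$-normalized sphere through $\lambda_{\min}(\Sigma)$. This is where the ratio $d\Sigma_{\max}/\lambda_{\min}(\Sigma)$ enters. Everything else is constant bookkeeping, including the factor $a_r^2/r$, which is absorbed by $C_{\mathrm{cov}}$.
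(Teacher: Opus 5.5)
Your proof is correct and follows the paper's argument: the same reduction to Lemma~\ref{aux:gaussian-matrix-deviation} on the $\Sigma^{1/2}$-image of $\mathcal C_d$ intersected with the unit sphere, the same H\"older and cone bound $\omega\le4\sqrt{2d\Sigma_{\max}\log(2p)/\lambda_{\min}(\Sigma)}$ via Lemma~\ref{lem:gaussian-maximum}, and a union bound over the two folds. The only difference is bookkeeping: you fix $\varepsilon=0.45$ and use \eqref{ord:cone-size} to make the exponent at least $8\log p$, whereas the paper takes $\varepsilon=(\omega+4\sqrt{\log p})/a_{n/2}$, which gives the $2p^{-8}$ tail directly, and then uses \eqref{ord:cone-size} to bound the deviation by $\tfrac12\sqrt{n/2-4}$.
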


\begin{proof}
Write $\kappa=\Sigma_{\max}/\lambda_{\min}(\Sigma)\ge1$ and $\mathcal T=\{v\in\mathcal C_d:v^\top\Sigma v=1\}$. Every nonzero $v\in\mathcal C_d$ satisfies $v/\sqrt{v^\top\Sigma v}\in \mathcal T$, so it suffices to prove \eqref{ord:cone-event} on $\mathcal T$. For each such $v$, choose $J\subseteq[p]$ with $|J|\le d$ and $\|v_{J^c}\|_1\le3\|v_J\|_1$. The cone condition and Cauchy--Schwarz give
\[
\|v\|_1\le4\|v_J\|_1
\le4\sqrt d\,\|v\|_2
\le4\sqrt{\frac d{\lambda_{\min}(\Sigma)}},
\]
where the last inequality follows from $1=v^\top\Sigma v\ge\lambda_{\min}(\Sigma)\|v\|_2^2$. For $g\sim N(0,I_p)$, each coordinate of $\Sigma^{1/2}g$ has variance at most $\Sigma_{\max}$. The H\"older inequality and Lemma~\ref{lem:gaussian-maximum} therefore yield
\[
\omega(\Sigma^{1/2}\mathcal T)
\le4\sqrt{\frac d{\lambda_{\min}(\Sigma)}}\,\E\|\Sigma^{1/2}g\|_\infty
\le4\sqrt{2d\kappa\log(2p)}.
\]
The matrix $\mathbf X_{I_\ell}\Sigma^{-1/2}$ has independent standard Gaussian entries, and $\Sigma^{1/2}\mathcal T$ is a closed subset of the unit sphere. Apply Lemma~\ref{aux:gaussian-matrix-deviation} with
\[
\varepsilon=\frac{\omega(\Sigma^{1/2}\mathcal T)+4\sqrt{\log p}}{a_{n/2}}.
\]
Since $r^2/(r+1)\ge r-1$, the lemma gives $\sqrt{n/2-1}\le a_{n/2}\le\sqrt{n/2}$. Taking a union bound over the two folds then gives, with probability at least $1-4p^{-8}$, simultaneously for $v\in \mathcal T$ and $\ell=1,2$,
\[
\begin{aligned}
\|\mathbf X_{I_\ell}v\|_2
&\ge\sqrt{n/2-1}-4\sqrt{2d\kappa\log(2p)}-4\sqrt{\log p},\\
\|\mathbf X_{I_\ell}v\|_2
&\le\sqrt{n/2}+4\sqrt{2d\kappa\log(2p)}+4\sqrt{\log p}.
\end{aligned}
\]
For sufficiently large $C_{\mathrm{cov}}$, condition~\eqref{ord:cone-size} implies
\[
\bigl(4\sqrt{2d\kappa\log(2p)}+4\sqrt{\log p}\bigr)^2
\le96d\kappa\log(2p)\le\frac{n/2-4}{4}.
\]
Consequently,
\[
\begin{aligned}
\|\mathbf X_{I_\ell}v\|_2
&\ge\sqrt{n/2-1}-\frac12\sqrt{n/2-4}
\ge\frac12\sqrt{n/2},\\
\|\mathbf X_{I_\ell}v\|_2
&\le\sqrt{n/2}+\frac12\sqrt{n/2-4}
\le\frac32\sqrt{n/2}.
\end{aligned}
\]
Squaring and rescaling from $\mathcal T$ to $\mathcal C_d$ proves \eqref{ord:cone-event}.
\end{proof}

\begin{lemma}[Uniform upper bounds for remaining-source scores]
\label{aux:source-scores}
Fix sufficiently large universal constants $C_{\mathrm{ord}}$ and $C_{\mathrm L}$. Suppose $d\ge1$, condition~\eqref{ord:cone-size} holds, and
\begin{equation}\label{ord:source-penalty}
 \lambda_{\mathrm{ord}}\ge C_{\mathrm{ord}}\sigma_{\max}\sqrt{\frac{\Sigma_{\max}\log p}{n}}.
\end{equation}
With probability at least $1-6p^{-6}$, every ancestral $S\subsetneq V$, every $j\in\Src(S)$, and both $\ell\in\{1,2\}$ satisfy
\begin{equation}
 \widehat L_{j,S}^{(\ell)}
 \le\sigma_j^2\left(1+8\sqrt{\frac{\log p}{n}}+\frac{32\log p}{n}\right)
  +\frac{C_{\mathrm L}d\lambda_{\mathrm{ord}}^2}{\lambda_{\min}(\Sigma)}.
 \label{aux:source-score-bound}
\end{equation}
Inequality~\eqref{aux:source-score-bound} also holds with $\widehat L_{j,S}^{(\ell)}$ replaced by $\widehat{\mathcal S}_{j,S}$.
\end{lemma}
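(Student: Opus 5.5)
For a remaining source $j\in\Src(S)$, Proposition~\ref{prop:ancestral-regression} gives $\theta_j(S)=B_{Sj}$, supported on $\pa(j)$ with $|\pa(j)|\le d$. Restricted to the rows of any fold, \eqref{ord:sample-sem} reads $\mathbf X_{I j}=\mathbf X_{IS}B_{Sj}+\mathbf W_{Ij}$. Because $S$ is ancestral and $j\notin S$, the column $\mathbf W_j$ is independent of $\mathbf X_S$. Set $\Delta=\widehat\theta^{(\ell)}_{Sj}-B_{Sj}$, and extend it by zero to a vector in $\R^p$. The hold-out loss then expands as
\[
\widehat L^{(\ell)}_{j,S}=\frac{2\|\mathbf W_{I_{\bar\ell}j}\|^2}{n}-\frac{4\,\mathbf W_{I_{\bar\ell}j}^\top\mathbf X_{I_{\bar\ell}S}\Delta}{n}+\frac{2\|\mathbf X_{I_{\bar\ell}S}\Delta\|^2}{n}.
\]
I will bound the three terms separately on one global event. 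That event must not depend on $S$, since $S$ ranges over exponentially many ancestral sets. To achieve this, every probabilistic statement is indexed only by nodes $j$, folds $\ell$, and the coordinates of $\mathbf X$, never by $S$.

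\textbf{Step 1 (global event).} I intersect the following events.
\begin{enumerate}[label=(\roman*)]
\item The cone event \eqref{ord:cone-event} of Lemma~\ref{aux:cone-covariance}, which holds with probability at least $1-4p^{-8}$.
\item For every $j$ and $\ell$, $2\|\mathbf W_{I_\ell j}\|^2/n\le\sigma_j^2(1+8\sqrt{\log p/n}+32\log p/n)$. This follows from \eqref{aux:gaussian-square-tail} with $r=n/2$ and $x\asymp\log p$; the constants $8$ and $32$ come from $x=8\log p$, and a union bound over $2p$ pairs costs $O(p^{-7})$.
\item For every $j$, $\ell$, and $k\ne j$ with $k\in U_j$, $|\mathbf X_{I_\ell k}^\top\mathbf W_{I_\ell j}|/n\lesssim\sigma_j\sqrt{\Sigma_{kk}\log p/n}$. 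This uses \eqref{ord:product-tail}, which applies because $\mathbf X_{U_j}$ is a function of $\mathbf W_{U_j}$ by \eqref{ord:ancestral-data} and is therefore independent of $\mathbf W_j$. A union bound over $2p^2$ triples gives probability $O(p^{-6})$.
\end{enumerate}
Every ancestral $S$ with $j\in\Src(S)$ satisfies $S\subseteq U_j$: if a descendant of $j$ lay in $S$, ancestrality would force $j\in S$. So (iii) covers all relevant columns. Under (iii), $\|\mathbf X_{I_\ell S}^\top\mathbf W_{I_\ell j}\|_\infty/n\le\lambda_{\mathrm{ord}}/4$ once $C_{\mathrm{ord}}$ is large; the factor $\sigma_j$ is absorbed into $\sigma_{\max}$.

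\textbf{Step 2 (in-fold Lasso analysis).} This is the standard deterministic argument. The basic inequality for \eqref{ord:fixed-set-lasso} on fold $\ell$, combined with the gradient bound from (iii), places $\Delta$ in the cone $\|\Delta_{T^c}\|_1\le3\|\Delta_T\|_1$ with $T=\pa(j)$, so $\Delta\in\mathcal C_d$. It also gives
\[
\frac{\|\mathbf X_{I_\ell S}\Delta\|^2}{n}\lesssim\lambda_{\mathrm{ord}}\|\Delta_T\|_1\le\lambda_{\mathrm{ord}}\sqrt d\,\|\Delta\|_2.
\]
The lower half of \eqref{ord:cone-event} gives $\|\mathbf X_{I_\ell S}\Delta\|^2/n\ge\tfrac18\lambda_{\min}(\Sigma)\|\Delta\|^2$. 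Together these yield $\Delta^\top\Sigma\Delta\lesssim d\lambda_{\mathrm{ord}}^2/\lambda_{\min}(\Sigma)$ and $\|\Delta\|_1\lesssim d\lambda_{\mathrm{ord}}/\lambda_{\min}(\Sigma)$.

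\textbf{Step 3 (transfer to the other fold).} The upper half of \eqref{ord:cone-event} on fold $\bar\ell$ bounds the third term by $\tfrac94\Delta^\top\Sigma\Delta\lesssim d\lambda_{\mathrm{ord}}^2/\lambda_{\min}(\Sigma)$. For the cross term, Hölder's inequality with (iii) on fold $\bar\ell$ gives
\[
\frac{4\,|\mathbf W_{I_{\bar\ell}j}^\top\mathbf X_{I_{\bar\ell}S}\Delta|}{n}\lesssim\lambda_{\mathrm{ord}}\|\Delta\|_1\lesssim\frac{d\lambda_{\mathrm{ord}}^2}{\lambda_{\min}(\Sigma)}.
\]
Adding the bound from (ii) for the first term gives \eqref{aux:source-score-bound} with a universal $C_{\mathrm L}$. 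The version for $\widehat{\mathcal S}_{j,S}$ follows by averaging the two fold bounds.

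\textbf{Main obstacle.} The delicate point is the uniformity over the adaptively chosen, exponentially many sets $S$. This is handled by reducing everything to the $O(p^2)$ column-level events (ii) and (iii) and to the $S$-free cone event (i). The key structural facts are that $S\subseteq U_j$ and that $\Delta$ lies in $\mathcal C_d$ regardless of $S$, which is why no union over $S$ is needed.
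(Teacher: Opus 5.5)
Your proposal is correct and follows essentially the same route as the paper: an $S$-free global event built from the cone bound, the $\|\mathbf W_{I_\ell j}\|^2$ tail, and the product tails over $a\notin\mathcal D_j$ (your $k\in U_j$), then $S\subseteq U_j$, the Lasso basic inequality placing $\Delta$ in $\mathcal C_d$, and transfer to the held-out fold via the upper cone bound plus H\"older on the cross term. The only differences are cosmetic: you pass through $\|\Delta\|_2$ and $\lambda_{\min}(\Sigma)$ instead of $\|\Sigma^{1/2}\Delta\|_2$, and you leave constants implicit. Taking $x=8\log p$ in all three tail bounds reproduces the paper's failure probability $4p^{-8}+2p^{-7}+4p^{-6}\le 6p^{-6}$.
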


\begin{proof}
Lemma~\ref{aux:cone-covariance} gives \eqref{ord:cone-event} with failure probability $4p^{-8}$. If $a\notin\mathcal D_j$, then $j\notin\{a\}\cup\an(a)$. The error representation \eqref{ord:sample-error-path} and the independence of the columns of $\mathbf W$ therefore imply that $\mathbf X_a$ is independent of $\mathbf W_j$.

For independent $g,h\sim N(0,I_{n/2})$, the Gaussian fold distributions give
\[
\begin{aligned}
\frac{\mathbf X_{I_\ell a}^\top \mathbf W_{I_\ell j}}{\sigma_j\sqrt{\Sigma_{aa}}}
&\overset{d}{=}g^\top h=\sum_{i=1}^{n/2}g_i h_i
\qquad(a\notin\mathcal D_j),\\
\frac{\|\mathbf W_{I_\ell j}\|_2^2}{\sigma_j^2}
&\overset{d}{=}\|h\|_2^2=\sum_{i=1}^{n/2}h_i^2
\qquad(j\in V).
\end{aligned}
\]
Apply \eqref{ord:product-tail} and \eqref{aux:gaussian-square-tail} with $r=n/2$ and $x=8\log p$. Using $n/2\ge8\log p$ from \eqref{ord:cone-size} and the penalty condition \eqref{ord:source-penalty} gives, for fixed indices, with respective failure probabilities $2p^{-8}$ and $p^{-8}$,
\begin{align}
\frac{2|\mathbf X_{I_\ell a}^\top \mathbf W_{I_\ell j}|}{n}
&\le2\sigma_j\sqrt{\Sigma_{aa}}\left(4\sqrt{\frac{\log p}{n}}+\frac{16\log p}{n}\right)\notag\\
&\le16\sigma_{\max}\sqrt{\frac{\Sigma_{\max}\log p}{n}}\le\lambda_{\mathrm{ord}}/2
\quad(a\notin\mathcal D_j),\label{ord:noise-event}\\
\frac{2\|\mathbf W_{I_\ell j}\|_2^2}{n}
&\le\sigma_j^2\left(1+8\sqrt{\frac{\log p}{n}}+\frac{32\log p}{n}\right)
\quad(j\in[p]).\label{ord:innovation-event}
\end{align}
A union bound over both folds and at most $p^2$ pairs $(a,j)$ in \eqref{ord:noise-event}, and $p$ nodes in \eqref{ord:innovation-event}, gives respective failure probabilities $4p^{-6}$ and $2p^{-7}$.

Work on the event where \eqref{ord:cone-event}, \eqref{ord:noise-event}, and \eqref{ord:innovation-event} all hold. These three events do not depend on the predictor set $S$. Fix an ancestral $S$, a remaining source $j\in\Src(S)$, and a fitting fold $\ell$. Since $S$ is ancestral and $j\notin S$, it contains no descendant of $j$. Thus $T=\pa(j)\subseteq S\subseteq U_j$ and $|T|\le d$. Define the estimation error $\Delta\in\R^p$ by
\[
\Delta_S=\widehat\theta_{Sj}^{(\ell)}-B_{Sj},
\qquad \Delta_{S^c}=0.
\]
The true coefficient vector $B_{\cdot j}$ is supported on $T\subseteq S$. By \eqref{ord:sample-sem}, optimality of the Lasso fit gives
\[
\begin{aligned}
&\frac{\|\mathbf W_{I_\ell j}-\mathbf X_{I_\ell}\Delta\|_2^2}{n}
+\lambda_{\mathrm{ord}}\|B_{\cdot j}+\Delta\|_1\\
&=\frac{\|\mathbf W_{I_\ell j}\|_2^2}{n}
-\frac{2\mathbf W_{I_\ell j}^\top\mathbf X_{I_\ell}\Delta}{n}
+\frac{\|\mathbf X_{I_\ell}\Delta\|_2^2}{n}
+\lambda_{\mathrm{ord}}\|B_{\cdot j}+\Delta\|_1\\
&\le\frac{\|\mathbf W_{I_\ell j}\|_2^2}{n}
+\lambda_{\mathrm{ord}}\|B_{\cdot j}\|_1.
\end{aligned}
\]
Since $\Delta$ is supported on $S\subseteq U_j$ and $B_{T^c,j}=0$, equation~\eqref{ord:noise-event} and the triangle inequality give
\[
\begin{aligned}
\frac{\|\mathbf X_{I_\ell}\Delta\|_2^2}{n}
&\le\frac{2\mathbf W_{I_\ell j}^\top\mathbf X_{I_\ell}\Delta}{n}
+\lambda_{\mathrm{ord}}\bigl(\|B_{\cdot j}\|_1-\|B_{\cdot j}+\Delta\|_1\bigr)\\
&\le\frac{\lambda_{\mathrm{ord}}}{2}\|\Delta\|_1
+\lambda_{\mathrm{ord}}\bigl(\|\Delta_T\|_1-\|\Delta_{T^c}\|_1\bigr).
\end{aligned}
\]
Rearranging yields the basic inequality
\begin{equation}
\frac{2\|\mathbf X_{I_\ell}\Delta\|_2^2}{n}
+\lambda_{\mathrm{ord}}\|\Delta_{T^c}\|_1
\le3\lambda_{\mathrm{ord}}\|\Delta_T\|_1.
\label{aux:lasso-basic-inequality}
\end{equation}
Hence $\Delta\in\mathcal C_d$. The lower bound in \eqref{ord:cone-event} and Cauchy--Schwarz give
\[
\frac{\|\Sigma^{1/2}\Delta\|_2^2}{4}
\le\frac{2\|\mathbf X_{I_\ell}\Delta\|_2^2}{n}
\le3\lambda_{\mathrm{ord}}\|\Delta_T\|_1
\le3\lambda_{\mathrm{ord}}\sqrt{\frac{|T|}{\lambda_{\min}(\Sigma)}}\,\|\Sigma^{1/2}\Delta\|_2.
\]
The cone inequality $\|\Delta_{T^c}\|_1\le3\|\Delta_T\|_1$ from \eqref{aux:lasso-basic-inequality} also gives
\[
\|\Delta\|_1\le4\|\Delta_T\|_1
\le4\sqrt{\frac{|T|}{\lambda_{\min}(\Sigma)}}\,\|\Sigma^{1/2}\Delta\|_2.
\]
Consequently,
\begin{equation}\label{ord:lasso-error-bounds}
\|\Sigma^{1/2}\Delta\|_2^2
\le\frac{12^2|T|\lambda_{\mathrm{ord}}^2}{\lambda_{\min}(\Sigma)},
\qquad
\lambda_{\mathrm{ord}}\|\Delta\|_1
\le\frac{4\cdot12|T|\lambda_{\mathrm{ord}}^2}{\lambda_{\min}(\Sigma)}.
\end{equation}
On the other fold, expanding the residual norm and applying \eqref{ord:noise-event}, the upper bound in \eqref{ord:cone-event}, and \eqref{ord:lasso-error-bounds} gives
\begin{equation}
\begin{aligned}[b]
\widehat L_{j,S}^{(\ell)}
&=\frac{2\|\mathbf W_{I_{\bar\ell}j}-\mathbf X_{I_{\bar\ell}}\Delta\|_2^2}{n}\\
&\le\frac{2\|\mathbf W_{I_{\bar\ell}j}\|_2^2}{n}
+\frac94\|\Sigma^{1/2}\Delta\|_2^2+\lambda_{\mathrm{ord}}\|\Delta\|_1\\
&\le\frac{2\|\mathbf W_{I_{\bar\ell}j}\|_2^2}{n}
+\frac{C_{\mathrm L}|T|\lambda_{\mathrm{ord}}^2}{\lambda_{\min}(\Sigma)}.
\end{aligned}
\label{aux:lasso-holdout-bound}
\end{equation}
Using \eqref{ord:innovation-event} and $|T|\le d$ in \eqref{aux:lasso-holdout-bound} proves \eqref{aux:source-score-bound}. If $S=\emptyset$, then $T=\emptyset$ and $\Delta=0$, so \eqref{aux:lasso-holdout-bound} still holds. Averaging the two fold bounds according to \eqref{ord:fold-score} proves \eqref{aux:source-score-bound} for $\widehat{\mathcal S}_{j,S}$. The events in \eqref{ord:cone-event}, \eqref{ord:noise-event}, and \eqref{ord:innovation-event} establish \eqref{aux:source-score-bound} for every ancestral $S$ and remaining source $j$, so no union over predictor sets is needed. Adding their failure probabilities gives
\[
4p^{-8}+4p^{-6}+2p^{-7}\le6p^{-6}.\qedhere
\]
\end{proof}

\subsubsection{Lower bounds for remaining-nonsource scores}
\label{aux:lasso-section}

\begin{lemma}[Conditional lower tail for a selected predictor set]
\label{aux:auxiliary-score-lower-tail}
Fix a node $f\in V$ and a proper descendant $k$ of $f$. Let $S\subseteq U_f$ be a random predictor set measurable with respect to $\mathbf W_{U_f}$. For each fold $\ell\in\{1,2\}$, define
\[
\mathcal F_{f,\ell}=\sigma(\mathbf W_{U_f},\mathbf W_{I_\ell\mathcal D_f}).
\]
For every $0<t<\mathcal S_k(U_f)$, almost surely,
\begin{equation}\label{ord:conditional-score-tail}
\P\left\{\widehat L_{k,S}^{(\ell)}
               <\mathcal S_k(U_f)-t\mid\mathcal F_{f,\ell}\right\}
 \le\exp\left\{-\frac{nt^2}{8\mathcal S_k(U_f)^2}\right\}.
\end{equation}
\end{lemma}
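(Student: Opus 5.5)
The plan is to reduce \eqref{ord:conditional-score-tail} to Lemma~\ref{lem:holdout-lower-tail}, applied on the evaluation fold $I_{\bar\ell}$ with $n/2$ observations and $v=\mathcal S_k(U_f)$. The normalization works out as follows. Since $\widehat L_{k,S}^{(\ell)}=\|\mathbf X_{I_{\bar\ell}k}-\mathbf X_{I_{\bar\ell}S}\widehat\theta_{Sk}^{(\ell)}\|_2^2/(n/2)$, the lemma's bound $\exp\{-(n/2)t^2/(4v^2)\}$ is exactly the claimed $\exp\{-nt^2/(8\mathcal S_k(U_f)^2)\}$. So everything comes down to exhibiting the conditional Gaussian regression structure relative to $\mathcal F=\mathcal F_{f,\ell}$.

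\textbf{Step 1: decompose the response.} $U_f$ is ancestral and $k\in\mathcal D_f$, so $k\notin U_f$. By \eqref{ord:sample-error-path}, split
\[
\mathbf X_k=\sum_{a\in U_f\cap\an(k)}B_{a\pathto k}\mathbf W_a+\xi^{\mathrm{full}},
\qquad
\xi^{\mathrm{full}}=\mathbf W_k+\sum_{a\in\an(k)\cap\mathcal D_f}B_{a\pathto k}\mathbf W_a .
\]
By \eqref{ord:ancestral-data}, the first sum equals $\mathbf X_{U_f}\theta_k(U_f)$, the rowwise population regression on the ancestral set $U_f$ (as in the proof of Proposition~\ref{prop:ancestral-regression}). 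By \eqref{eq:population-score-path}, the rows of $\xi^{\mathrm{full}}$ are i.i.d.\ $N(0,\mathcal S_k(U_f))$. Moreover, $\xi^{\mathrm{full}}$ depends only on $\mathbf W_{\mathcal D_f}$.

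\textbf{Step 2: identify the conditioning structure.} We need three measurability facts and one independence fact.
\begin{itemize}
\item The design $\mathbf X:=\mathbf X_{I_{\bar\ell}U_f}$ is a function of $\mathbf W_{U_f}$ by \eqref{ord:ancestral-data}, hence $\mathcal F$-measurable. So is $\theta:=\theta_k(U_f)$, which is deterministic.
\item Set $\widetilde\theta$ to be $\widehat\theta_{Sk}^{(\ell)}$ on the coordinates in $S$ and zero on $U_f\setminus S$. Then $\mathbf X_{I_{\bar\ell}S}\widehat\theta_{Sk}^{(\ell)}=\mathbf X\widetilde\theta$. The fit uses only $\mathbf X_{I_\ell S}$ and $\mathbf X_{I_\ell k}$. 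The former is a function of $\mathbf W_{U_f}$ and the set $S$. The latter is a function of $\mathbf W_{I_\ell U_f}$ and $\mathbf W_{I_\ell\mathcal D_f}$.
\item Because $S$ ranges over finitely many values and is $\mathbf W_{U_f}$-measurable, $\widetilde\theta=\sum_{S_0}\mathbf 1\{S=S_0\}\widehat\theta_{S_0k}^{(\ell)}$ (padded by zeros). Each summand is the measurable version fixed in Section~\ref{aux:ordering-section}, so $\widetilde\theta$ is $\mathcal F$-measurable and finite.
\item The noise $\xi:=\xi^{\mathrm{full}}_{I_{\bar\ell}}$ depends only on $\mathbf W_{I_{\bar\ell}\mathcal D_f}$. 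By independence of the columns of $\mathbf W$ and of the rows across folds, this block is independent of $\mathcal F$. Hence $\xi\mid\mathcal F\sim N(0,\mathcal S_k(U_f)I_{n/2})$.
\end{itemize}

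\textbf{Step 3: conclude.} With $Y=\mathbf X_{I_{\bar\ell}k}=\mathbf X\theta+\xi$, Lemma~\ref{lem:holdout-lower-tail} applies with sample size $n/2$, $s=|U_f|$, and $v=\mathcal S_k(U_f)$, giving the bound almost surely for every $0<t<v$. The edge case $U_f=\emptyset$ (equivalently $S=\emptyset$) is covered by $s=0$.

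The main obstacle is Step 2, specifically the measurability of $\widetilde\theta$ when the predictor set $S$ is itself random. The finite decomposition over possible values of $S$, combined with the measurable Lasso versions already constructed, is what resolves it. One must also check that conditioning on $\mathcal F$ does not reveal anything about the evaluation-fold rows of $\mathbf W_{\mathcal D_f}$; this holds because $\mathcal F$ contains only fold-$\ell$ rows of $\mathbf W_{\mathcal D_f}$.
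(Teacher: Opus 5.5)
Your proposal is correct and follows the paper's proof essentially step for step. You decompose the held-out response as $\mathbf X_{I_{\bar\ell}U_f}\theta_k(U_f)$ plus errors indexed by $\mathcal D_f$, observe that these evaluation-fold errors are independent of $\mathcal F_{f,\ell}$ with conditional law $N(0,\mathcal S_k(U_f)I_{n/2})$, and apply Lemma~\ref{lem:holdout-lower-tail} with sample size $n/2$ and the fitted coefficient padded by zeros on $U_f\setminus S$. Your finite decomposition over the possible values of $S$ makes explicit the measurability claim that the paper asserts directly from the fixed minimizer rule.
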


\begin{proof}
Since $S$ is $\mathbf W_{U_f}$-measurable, \eqref{ord:ancestral-data} makes both fold designs on $S$ measurable with respect to $\mathbf W_{U_f}$. The fitting-fold errors $\mathbf W_{I_\ell V}$ are also $\mathcal F_{f,\ell}$-measurable and determine $\mathbf X_{I_\ell k}$ through \eqref{ord:sample-error-path}. Thus the minimizer rule fixed in Section~\ref{aux:ordering-section} makes $\widehat\theta_{Sk}^{(\ell)}$ and $\mathbf X_{I_{\bar\ell}S}\widehat\theta_{Sk}^{(\ell)}$ $\mathcal F_{f,\ell}$-measurable.

By \eqref{ord:sample-error-path} and Proposition~\ref{prop:ancestral-regression}, the held-out response satisfies
\begin{equation}\label{ord:heldout-response}
\mathbf X_{I_{\bar\ell}k}
=\mathbf X_{I_{\bar\ell}U_f}\theta_k(U_f)+\mathbf W_{I_{\bar\ell}k}
+\sum_{a\in\mathcal D_f\setminus\{k\}}B_{a\pathto k}\mathbf W_{I_{\bar\ell}a}.
\end{equation}
The errors $\mathbf W_{I_{\bar\ell}a}$ for $a\in\mathcal D_f$ are mutually independent and independent of $\mathcal F_{f,\ell}$, because observations in the two folds and errors at different nodes are independent. The error term in \eqref{ord:heldout-response} has conditional distribution
\[
\mathbf W_{I_{\bar\ell}k}
+\sum_{a\in\mathcal D_f\setminus\{k\}}B_{a\pathto k}\mathbf W_{I_{\bar\ell}a}
\,\bigm|\,\mathcal F_{f,\ell}
\sim N\!\left(0,\mathcal S_k(U_f)I_{n/2}\right)
\]
by \eqref{eq:population-score-path}. Apply Lemma~\ref{lem:holdout-lower-tail} to \eqref{ord:heldout-response}, with design $\mathbf X_{I_{\bar\ell}U_f}$, coefficient $\theta_k(U_f)$, and fitted coefficient equal to $\widehat\theta_{Sk}^{(\ell)}$ on $S$ and zero on $U_f\setminus S$. This design and these coefficients are $\mathcal F_{f,\ell}$-measurable. By \eqref{ord:fold-score}, the resulting loss in dimension $n/2$ is $\widehat L_{k,S}^{(\ell)}$, so the conclusion of Lemma~\ref{lem:holdout-lower-tail} is \eqref{ord:conditional-score-tail}.
\end{proof}

\subsubsection{Completion of the proof}
We prove recovery by induction on the number of selected nodes. Let
\[
\mathcal A_r
=\bigcap_{s=1}^r\{\pa(\widehat\pi_s)\subseteq\widehat S_s\},
\qquad r=0,\ldots,p,
\]
be the event that the first $r$ choices are all remaining sources, with $\mathcal A_0$ the sure event. On $\mathcal A_{r-1}$, the set $\widehat S_r$ is ancestral. Define the source upper-bound event by
\begin{equation}
\mathcal E_{\mathrm{src}}
=\bigcap_{\substack{S\subsetneq V\\S\text{ ancestral}}}
\bigcap_{a\in\Src(S)}
\left\{\widehat{\mathcal S}_{a,S}\le\sigma_a^2+\tau_\star/4\right\}.
\label{ord:source-score-event}
\end{equation}
First we bound $\P(\mathcal E_{\mathrm{src}}^c)$. Use the fixed multiplier $C_{\mathrm{ord}}$ in Lemma~\ref{aux:source-scores} and choose $\lambda_{\mathrm{ord}}=C_{\mathrm{ord}}\sigma_{\max}\sqrt{\Sigma_{\max}\log p/n}$, as prescribed in Theorem~\ref{thm:ordering-recovery}. The first term of \eqref{eq:ordering-rate}, with a sufficiently large fixed factor, ensures the design condition \eqref{ord:cone-size} and
\[
\frac{C_{\mathrm L}d\lambda_{\mathrm{ord}}^2}{\lambda_{\min}(\Sigma)}
=C_{\mathrm L}C_{\mathrm{ord}}^2
\frac{d\Sigma_{\max}}{\lambda_{\min}(\Sigma)}
\frac{\sigma_{\max}^2\log p}{n}
\le\frac{\tau_\star}{8}.
\]
The second term of \eqref{eq:ordering-rate} controls the fluctuation of the error norms. Since $\sigma_j^2\le\Sigma_{jj}\le\Sigma_{\max}$, it ensures
\[
\sigma_j^2\left(8\sqrt{\frac{\log p}{n}}+\frac{32\log p}{n}\right)
\le\Sigma_{\max}\left(8\sqrt{\frac{\log p}{n}}+\frac{32\log p}{n}\right)
\le\frac{\tau_\star}{8}.
\]
Lemma~\ref{aux:source-scores} therefore gives
\begin{equation}
\P(\mathcal E_{\mathrm{src}}^c)\le6p^{-6}.
\label{ord:source-event-probability}
\end{equation}

We next bound the probability that a correct prefix gains a remaining nonsource, then iterate over $r$.

\begin{lemma}[One-step ordering error]\label{aux:one-step-ordering}
\normalfont
Under the sample-size and penalty conditions of Theorem~\ref{thm:ordering-recovery}, every $r\in[p]$ satisfies
\begin{equation}
\P(\mathcal E_{\mathrm{src}}\cap\mathcal A_{r-1}\cap\mathcal A_r^c)
\le 2p^{-6}.
\label{ord:one-step-error}
\end{equation}
\end{lemma}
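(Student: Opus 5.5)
The plan is to reduce the bad event to a conditional lower-tail event for a single remaining nonsource. I would then apply Lemma~\ref{aux:auxiliary-score-lower-tail} to that nonsource and finish with a union bound over a deterministic family of pairs $(f,k)$. The difficulty is that Lemma~\ref{aux:auxiliary-score-lower-tail} requires a predictor set measurable with respect to $\mathbf W_{U_f}$, whereas the actual prefix $\widehat S_r$ depends on all the data. So I first need to replace $\widehat S_r$ by a $\mathbf W_{U_f}$-measurable surrogate, and I also need to identify the right node $f$.

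\textbf{Step 1 (choice of $f$ and the population gap).} On $\mathcal A_{r-1}\cap\mathcal A_r^c$, the set $S=\widehat S_r$ is ancestral and $k=\widehat\pi_r\in\Nsrc(S)$. Among the unselected ancestors of $k$, pick one, $f$, with no unselected ancestor; then $f\in\Src(S)$ and $k$ is a proper descendant of $f$. Since $S$ is ancestral and $f\notin S$, no descendant of $f$ lies in $S$, so $S\subseteq U_f$. Next I check that $\Src(U_f)=\{f\}$:
\begin{itemize}
\item $f\in\Src(U_f)$, because $f\notin U_f$ and $\pa(f)\subseteq S\subseteq U_f$;
\item any other node of $\mathcal D_f$ has a parent in $\mathcal D_f$, hence outside $U_f$, so it is not a source of $U_f$.
\end{itemize}
Because $k\in\Nsrc(U_f)$, Assumption~\ref{assu:taumin} applied to the ancestral set $U_f$ together with \eqref{eq:population-score-path} gives $\mathcal S_k(U_f)\ge\sigma_f^2+\tau_\star$. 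On $\mathcal E_{\mathrm{src}}$ we have $\widehat{\mathcal S}_{f,S}\le\sigma_f^2+\tau_\star/4$. Selecting $k$ forces $\widehat{\mathcal S}_{k,S}\le\widehat{\mathcal S}_{f,S}$, and hence $\widehat{\mathcal S}_{k,S}<\mathcal S_k(U_f)-3\tau_\star/4$. Since $\widehat{\mathcal S}_{k,S}$ is the average of the two fold losses, at least one fold $\ell$ satisfies $\widehat L^{(\ell)}_{k,S}<\mathcal S_k(U_f)-3\tau_\star/4$.

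\textbf{Step 2 (measurable surrogate prefix).} For each $f$, define a shadow forward search that runs the same hold-out selection rule, with the same tie-breaking by label, but only over candidates in $U_f$, starting from the initialization restricted to $U_f$. Its first $r-1$ picks form a set $S^f_r$. Candidates and predictors then lie in the ancestral set $U_f$, so by \eqref{ord:ancestral-data} all fits and scores are functions of $\mathbf W_{U_f}$, and $S^f_r$ is $\mathbf W_{U_f}$-measurable.

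I claim that on the event of Step~1, $\widehat S_r=S^f_r$. I would prove this by induction over the first $r-1$ steps. Each true pick lies in $S\subseteq U_f$ and minimizes the score over a superset of the shadow candidates. With identical tie-breaking it therefore also minimizes over the shadow candidates, and both runs use identical scores because they share the same prefix. I expect this coupling to be the main obstacle, and in particular getting ties and the initialization step $r=1$ exactly right.

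\textbf{Step 3 (probability bound).} The bad event is contained in
\[
\bigcup_{f,k,\ell}\Bigl\{\widehat L^{(\ell)}_{k,S^f_r}<\mathcal S_k(U_f)-3\tau_\star/4\Bigr\},
\]
where the union runs over $f\in V$, proper descendants $k$ of $f$, and $\ell\in\{1,2\}$. Apply Lemma~\ref{aux:auxiliary-score-lower-tail} with $t=3\tau_\star/4<\mathcal S_k(U_f)$ and take expectations of the conditional bound. Using $\mathcal S_k(U_f)\le\Sigma_{kk}\le\Sigma_{\max}$, each term is at most $\exp\{-9n\tau_\star^2/(128\Sigma_{\max}^2)\}$. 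The second term of \eqref{eq:ordering-rate}, with a large enough constant, makes this at most $p^{-9}$. A union bound over at most $p^2$ pairs and two folds then yields $2p^{-7}\le2p^{-6}$, which is \eqref{ord:one-step-error}.
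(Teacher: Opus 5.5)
Your proposal is correct and follows essentially the same route as the paper. You pick a remaining source $f$ upstream of the wrongly selected nonsource $k$, replace $\widehat S_r$ by the $\mathbf W_{U_f}$-measurable within-$U_f$ search (the paper's $S_{r,f}$), use $\Src(U_f)=\{f\}$ with Assumption~\ref{assu:taumin} to get $\mathcal S_k(U_f)\ge\sigma_f^2+\tau_\star$, and finish with Lemma~\ref{aux:auxiliary-score-lower-tail} and a union bound over pairs $(f,k)$. One small slip: with $t=3\tau_\star/4$ your chain actually gives $\widehat{\mathcal S}_{k,S}\le\mathcal S_k(U_f)-3\tau_\star/4$, not a strict inequality. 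This is harmless, since the Chernoff argument behind Lemma~\ref{lem:holdout-lower-tail} gives the same bound for $\le$; the paper avoids the issue by taking $t=\tau_\star/4$, which leaves slack.
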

\begin{proof}
Suppose the first $r-1$ choices are remaining sources, so the selected set $\widehat S_r$ is ancestral. For any remaining nonsource $k$, following unselected parents backwards from $k$ reaches a remaining source $f$. Since $f$ is unselected and $\widehat S_r$ is ancestral, none of its descendants has been selected either. Thus all previous choices lie in $U_f$, giving $\widehat S_r\subseteq U_f$ and $|U_f|\ge r-1$.

Fix a node $f$ with $|U_f|\ge r-1$. We now determine the value of $\widehat S_r$ whenever $\widehat S_r\subseteq U_f$, using only the data $\mathbf X_{U_f}$. At a specified selected set $T\subseteq U_f$, every candidate $a\in U_f\setminus T$ uses a response and predictors from $\mathbf X_{U_f}$, so its fits and score depend only on these data. If the next HOST choice is in $U_f$, it must be the smallest-labeled node in
\begin{equation}\label{ord:within-Uf-choice}
\argmin_{a\in U_f\setminus T}\widehat{\mathcal S}_{a,T}.
\end{equation}
Starting from the empty set, successive minimization in \eqref{ord:within-Uf-choice} determines each choice as long as HOST selects within $U_f$. The scores of candidates in $V\setminus U_f$ can determine when HOST first leaves $U_f$, but cannot change the choices preceding that exit.

By the measurability properties in Section~\ref{aux:ordering-section}, applying \eqref{ord:within-Uf-choice} $r-1$ times defines a $\mathbf W_{U_f}$-measurable set
\[
S_{r,f}=S_{r,f}(\mathbf W_{U_f})\subseteq U_f,
\qquad |S_{r,f}|=r-1.
\]
Thus fixing $r$, $f$, and $\mathbf W_{U_f}$ fixes $S_{r,f}$. This set agrees with the actual selected set $\widehat S_r$ whenever $\widehat S_r\subseteq U_f$:
\begin{equation}\label{ord:selected-set-identification}
\widehat S_r\subseteq U_f\quad\Longrightarrow\quad
\widehat S_r=S_{r,f}
\qquad\text{almost surely}.
\end{equation}
Now fix $k\in\mathcal D_f\setminus\{f\}$. On $\mathcal E_{\mathrm{src}}\cap\mathcal A_{r-1}$, if HOST selects $k$ while $f$ is a remaining source, then \eqref{ord:selected-set-identification} gives $\widehat S_r=S_{r,f}$, and score minimization gives
\begin{equation}\label{ord:error-score-comparison}
\widehat{\mathcal S}_{k,S_{r,f}}
\le\widehat{\mathcal S}_{f,S_{r,f}}
\le\sigma_f^2+\tau_\star/4
<\sigma_f^2+3\tau_\star/4.
\end{equation}
Equation~\eqref{ord:error-score-comparison} shows that selecting $k$ under these conditions requires $\widehat{\mathcal S}_{k,S_{r,f}}<\sigma_f^2+3\tau_\star/4$. We now bound the unconditional probability of this score inequality.

With $U_f$ as the predictor set, all parents of $f$ are included, whereas a parent of $k$ on a path from $f$ lies outside it. Thus $f$ is the unique remaining source at $U_f$, and every proper descendant of $f$, including $k$, is a remaining nonsource. Equation~\eqref{eq:population-score-path} gives $\mathcal S_f(U_f)=\sigma_f^2$, and choosing the zero coefficient vector gives $\mathcal S_k(U_f)\le\Sigma_{kk}$. Assumption~\ref{assu:taumin} therefore implies
\begin{equation}\label{ord:descendant-score-margin}
\sigma_f^2+\tau_\star\le\mathcal S_k(U_f)\le\Sigma_{kk}\le\Sigma_{\max}.
\end{equation}
Since $S_{r,f}$ is $\mathbf W_{U_f}$-measurable, \eqref{ord:conditional-score-tail} applies with $S=S_{r,f}$ and $t=\tau_\star/4$. The combined score $\widehat{\mathcal S}_{k,S_{r,f}}$ averages the two fold losses by \eqref{ord:fold-score}, and \eqref{ord:descendant-score-margin} gives $\sigma_f^2+3\tau_\star/4\le\mathcal S_k(U_f)-\tau_\star/4$. Taking expectations in \eqref{ord:conditional-score-tail} therefore yields
\begin{equation}
\begin{aligned}[b]
&\P\{\widehat{\mathcal S}_{k,S_{r,f}}<\sigma_f^2+3\tau_\star/4\}\\
&\quad\le\sum_{\ell=1}^2\E\!\left[
\P\!\left\{\widehat L_{k,S_{r,f}}^{(\ell)}<\mathcal S_k(U_f)-\tau_\star/4
\,\middle|\,\mathcal F_{f,\ell}\right\}\right]\\
&\quad\le 2\exp\!\left\{-\frac{n\tau_\star^2}{128\Sigma_{\max}^2}\right\}
\le 2p^{-8}.
\end{aligned}
\label{ord:lower-event-probability}
\end{equation}
The bound by $2p^{-8}$ in \eqref{ord:lower-event-probability} follows from the second term of \eqref{eq:ordering-rate}, with its fixed factor sufficiently large. Every error on $\mathcal E_{\mathrm{src}}\cap\mathcal A_{r-1}$ yields a remaining source $f$ and a selected descendant $k$ satisfying \eqref{ord:error-score-comparison}. Summing \eqref{ord:lower-event-probability} over at most $p^2$ such pairs gives $p^2\cdot2p^{-8}=2p^{-6}$, proving \eqref{ord:one-step-error}.
\end{proof}

Finally, $\mathcal A_r\subseteq\mathcal A_{r-1}$, so Lemma~\ref{aux:one-step-ordering} gives, for $r=1,\ldots,p$,
\begin{equation}\label{ord:ordering-induction}
\begin{aligned}[b]
\P(\mathcal E_{\mathrm{src}}\cap\mathcal A_r^c)
&\le\P(\mathcal E_{\mathrm{src}}\cap\mathcal A_{r-1}^c)+2p^{-6}\\
&\le 2r p^{-6},
\end{aligned}
\end{equation}
where the last inequality follows by induction from $\P(\mathcal E_{\mathrm{src}}\cap\mathcal A_0^c)=0$. Since $\mathcal A_p$ is exactly the event that $\widehat\pi$ is compatible with $G$, combining \eqref{ord:source-event-probability} and \eqref{ord:ordering-induction} at $r=p$ gives
\[
\begin{aligned}
\P\{\widehat\pi\text{ is incompatible with }G\}
&\le\P(\mathcal E_{\mathrm{src}}^c)
+\P(\mathcal E_{\mathrm{src}}\cap\mathcal A_p^c)\\
&\le 6p^{-6}+2p^{-5}\le p^{-2}.
\end{aligned}
\]

\subsection{Proof of Theorem~\ref{thm:parent-recovery}}
\label{app:parent-proof}\label{app:parent-tools}

We first control the fixed true-parent fits. A direct expansion of the reconstruction error then gives martingale increments, whose exponential bounds yield a uniform reconstruction event. On this event, thresholding and induction recover all parents along every compatible ordering.

\subsubsection{Setup}
\label{res:setup}\label{res:appendix}

Use the data matrix $\mathbf X$ and structural errors $\mathbf W$ from Section~\ref{aux:ordering-section}. Fix one deterministic ordering compatible with $G$ and label $V=[p]$ accordingly throughout this section. Write $\mathcal F_k=\sigma(\mathbf W_1,\ldots,\mathbf W_k)$, with $\mathcal F_0$ trivial. This ordering fixes the order in which we expose the structural errors; the final recovery argument applies to any compatible ordering supplied to HOST. Until that argument, all local fits use the true parent sets:
\begin{equation}\label{res:local-fits}
\begin{gathered}[b]
D_k=\mathbf X_{\pa(k)},\qquad P_k=D_kD_k^\dagger,\\
\widehat B^{\mathrm{LS}}_{\pa(k),k}=D_k^\dagger\mathbf X_k,
\qquad\widehat B^{\mathrm{LS}}_{V\setminus\pa(k),k}=0,\\
\widehat r_k=(I_n-P_k)\mathbf X_k=(I_n-P_k)\mathbf W_k.
\end{gathered}
\end{equation}
For an empty design, $P_k=0$. The sample size assumptions below ensure $n>d$, so every nonempty $D_k$ has full column rank almost surely. For $i\in\pa(k)$, put $h_{ik}=D_k[(D_k^\top D_k)^{-1}]_{\cdot i}$, with columns indexed by $\pa(k)$, and put $h_{ik}=0$ otherwise. Every parent precedes its child, so $D_k,P_k,h_{ik}$ are $\mathcal F_{k-1}$-measurable and $\mathbf W_k$ is independent of $\mathcal F_{k-1}$. The OLS error identity and total-effect estimates are as follows, with any zero-denominator ratio defined as zero:
\begin{align}
\widehat B^{\mathrm{LS}}_{ik}-B_{ik}&=h_{ik}^\top\mathbf W_k,
\qquad P_kh_{ik}=h_{ik},\label{res:local-coefficient-error}\\
\widehat B_{k\pathto j}&=\frac{\widehat r_k^\top\mathbf X_j}{\|\widehat r_k\|_2^2}
\qquad(k\ne j).\label{res:total-effect}
\end{align}

We use the following covariance relations. For $i\in T\subseteq V$, additional Gaussian conditioning can only reduce conditional variance, so
\begin{equation}\label{res:precision-bounds}
(\Sigma_{TT}^{-1})_{ii}
=\frac{1}{\Var(X_i\mid X_{T\setminus\{i\}})}
\le\frac{1}{\Var(X_i\mid X_{V\setminus\{i\}})}=\Omega_{ii}.
\end{equation}
The SEM also gives $\Omega=(I-B)\diag(\sigma_k^{-2}:k\in V)(I-B)^\top$, hence
\begin{equation}\label{res:precision-energy}
\Omega_{ii}=\sum_{k\in V}\frac{(\mathbf1\{k=i\}-B_{ik})^2}{\sigma_k^2},
\end{equation}
where $\mathbf1\{\cdot\}$ denotes an indicator.

\subsubsection{Events for the fixed-parent estimates}

Fix a sufficiently large universal constant $C_{\mathrm{par}}$. We use two events for the OLS coefficients and total-effect estimates in \eqref{res:local-fits}--\eqref{res:total-effect}. The first event controls the local fits. For each node $k$, let
\begin{equation}\label{par:design-event}
\mathcal E_k=\{D_k^\top D_k\succeq n\Sigma_{\pa(k),\pa(k)}/4\},
\end{equation}
with $\mathcal E_k$ the whole space for an empty design. Since $D_k$ is $\mathcal F_{k-1}$-measurable, so is $\mathcal E_k$. Define
\begin{equation}\label{par:local-good-event}
\mathcal E_{\mathrm{LS}}
=\bigcap_{k\in V}\left(\mathcal E_k\cap
\{\|\widehat r_k\|_2^2\ge n\sigma_k^2/2\}\right)
\cap\bigcap_{(i,k)\in E}
\{ |\widehat B^{\mathrm{LS}}_{ik}-B_{ik}|\le|B_{ik}|\}.
\end{equation}
On $\mathcal E_{\mathrm{LS}}$, the local coefficient errors are bounded by the corresponding edge magnitudes, and the designs and residual denominators are bounded away from degeneracy. The second event controls the combination of OLS and total-effect estimates used for structural subtraction in HOST:
\begin{equation}\label{par:uniform-reconstruction}
\mathcal E_{\mathrm{rec}}=\left\{
\max_{\substack{i\ne j\\j\notin\an(i)}}\left|
\widehat B_{i\pathto j}
-\sum_{a\in\an(j)}\widehat B^{\mathrm{LS}}_{ia}\widehat B_{a\pathto j}
-B_{ij}\right|
\le C_{\mathrm{par}}\sqrt{\frac{\Sigma_{\max}\Omega_{\max}\log p}{n}}\right\}.
\end{equation}
The pairs in \eqref{par:uniform-reconstruction} include each target and every node earlier than that target in a compatible ordering. We first bound the failure probabilities of these two events and then show how they yield recovery.

\begin{lemma}[Local-fit event]\label{aux:local-fit-event}
\normalfont
Use the true-parent fits from Section~\ref{res:setup}, and suppose
\begin{equation}\label{par:fit-conditions}
n\ge C_{\mathrm{par}}(d+\log p),\qquad
nB_{ik}^2\ge C_{\mathrm{par}}\sigma_k^2
(\Sigma_{\pa(k),\pa(k)}^{-1})_{ii}\log p
\quad(i\to k).
\end{equation}
Then $\P(\mathcal E_{\mathrm{LS}}^c)\le5p^{-6}$.
\end{lemma}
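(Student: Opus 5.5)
We bound the three families of events in $\mathcal E_{\mathrm{LS}}$ separately and take a union bound. Each event at node $k$ involves only $D_k$, which is $\mathcal F_{k-1}$-measurable, and $\mathbf W_k$, which is independent of $\mathcal F_{k-1}$. So we condition on $\mathcal F_{k-1}$ and use the Gaussian tail bounds of Lemmas~\ref{aux:weighted-chi-square}--\ref{lem:gaussian-products}. Each failure probability will be at most a small multiple of $p^{-7}$ or $p^{-8}$ per node or edge. Summing over at most $p$ nodes and $p^2$ edges gives the total $5p^{-6}$.

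\emph{Design events $\mathcal E_k$.} For nonempty $\pa(k)$, the matrix $D_k\Sigma_{\pa(k),\pa(k)}^{-1/2}$ has $n$ independent standard Gaussian rows in dimension $|\pa(k)|\le d$. We apply Lemma~\ref{aux:gaussian-matrix-deviation} with $\mathcal T$ the full unit sphere of $\R^{|\pa(k)|}$, so that $\omega(\mathcal T)\le\sqrt d$, and with $\varepsilon=(\sqrt d+4\sqrt{\log p})/a_n$. The condition $n\ge C_{\mathrm{par}}(d+\log p)$ forces $\varepsilon\le1/2$, which gives $D_k^\top D_k\succeq (1-\varepsilon)^2a_n^2\Sigma_{\pa(k),\pa(k)}\succeq n\Sigma_{\pa(k),\pa(k)}/4$. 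The failure probability is at most $2p^{-8}$ per node.

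\emph{Residual norms.} Conditionally on $\mathcal F_{k-1}$, $I_n-P_k$ is a fixed projection of rank $n-|\pa(k)|\ge n-d$, and $\mathbf W_k$ is independent of it. Hence $\|\widehat r_k\|_2^2/\sigma_k^2$ is $\chi^2_{n-|\pa(k)|}$. The lower tail \eqref{aux:chi-square-lower-tail} with $x=8\log p$ gives $\|\widehat r_k\|_2^2\ge\sigma_k^2(n-d-2\sqrt{8n\log p})\ge n\sigma_k^2/2$ once $C_{\mathrm{par}}$ is large. The failure probability is at most $p^{-8}$ per node.

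\emph{Coefficient errors.} By \eqref{res:local-coefficient-error}, conditionally on $\mathcal F_{k-1}$ we have $\widehat B^{\mathrm{LS}}_{ik}-B_{ik}\sim N(0,\sigma_k^2\|h_{ik}\|_2^2)$. Here $\|h_{ik}\|_2^2=[(D_k^\top D_k)^{-1}]_{ii}$. We intersect with $\mathcal E_k$, which is $\mathcal F_{k-1}$-measurable. On $\mathcal E_k$, this diagonal entry is at most $4(\Sigma_{\pa(k),\pa(k)}^{-1})_{ii}/n$. The Gaussian tail at level $\sqrt{16\log p}$ standard deviations then gives, with conditional failure probability at most $2p^{-8}$,
\[
|\widehat B^{\mathrm{LS}}_{ik}-B_{ik}|\le 8\sigma_k\sqrt{(\Sigma^{-1}_{\pa(k),\pa(k)})_{ii}\log p/n}.
\]
The second condition in \eqref{par:fit-conditions} with $C_{\mathrm{par}}\ge64$ makes this at most $|B_{ik}|$. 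Taking expectations removes the conditioning, and the complement of $\mathcal E_k$ is already charged to the design bound.

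\emph{Main obstacle.} The only delicate point is the conditioning order. Each estimate must condition on $\mathcal F_{k-1}$, and the coefficient bound must be restricted to $\mathcal E_k$ without double-counting failure probability. Beyond that, the work is bookkeeping the constants, $2p^{-7}+p^{-7}+2p^{-6}\le5p^{-6}$, using $p\ge2$.
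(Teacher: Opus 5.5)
Your proposal is correct and follows the paper's proof essentially step for step: the same three event families, the Gaussian matrix-deviation lemma on the unit sphere of $\R^{|\pa(k)|}$ for $\mathcal E_k$, the $\chi^2_{n-|\pa(k)|}$ lower tail for the residual norms, and the conditional Gaussian tail for $h_{ik}^\top\mathbf W_k$ restricted to the $\mathcal F_{k-1}$-measurable event $\mathcal E_k$, with the same count $(3p+2|E|)p^{-8}\le5p^{-6}$. One small constant fix is needed: $\varepsilon\le1/2$ only yields $D_k^\top D_k\succeq a_n^2\Sigma_{\pa(k),\pa(k)}/4$ with $a_n^2<n$, so take $C_{\mathrm{par}}$ large enough that, say, $\varepsilon\le1/3$ (the paper equivalently asks for $\sqrt{n-1}-\sqrt d-4\sqrt{\log p}\ge\sqrt n/2$).
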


\begin{proof}
On $\mathcal E_k$, inverse monotonicity and \eqref{res:precision-bounds} give
\begin{equation}\label{res:local-energy}
\|h_{ik}\|_2^2=[(D_k^\top D_k)^{-1}]_{ii}
\le\frac4n(\Sigma_{\pa(k),\pa(k)}^{-1})_{ii}
\le\frac{4\Omega_{ii}}n
\qquad(i\in\pa(k)).
\end{equation}
The coefficient error satisfies $(\widehat B^{\mathrm{LS}}_{ik}-B_{ik})\mid D_k\sim N(0,\sigma_k^2\|h_{ik}\|_2^2)$. On $\mathcal E_k$, the Gaussian tail bound, \eqref{res:local-energy}, and the signal condition in \eqref{par:fit-conditions} give
\begin{equation}\label{par:local-coefficient-tail}
\P\{|\widehat B^{\mathrm{LS}}_{ik}-B_{ik}|>|B_{ik}|\mid D_k\}
\le2\exp\!\left\{-\frac{nB_{ik}^2}{8\sigma_k^2(\Sigma_{\pa(k),\pa(k)}^{-1})_{ii}}\right\}
\le2p^{-8}.
\end{equation}
Also $\|\widehat r_k\|_2^2/\sigma_k^2\mid D_k\sim\chi^2_{r_k}$ with $r_k=n-|\pa(k)|$. For sufficiently large $C_{\mathrm{par}}$, the sample condition gives $r_k-2\sqrt{8r_k\log p}\ge n/2$, so \eqref{aux:chi-square-lower-tail} yields
\begin{equation}\label{par:local-residual-tail}
\P\{\|\widehat r_k\|_2^2<n\sigma_k^2/2\mid D_k\}\le p^{-8}.
\end{equation}
For a nonempty parent set, put $m=|\pa(k)|$. The matrix $D_k\Sigma_{\pa(k),\pa(k)}^{-1/2}$ is standard Gaussian, and the unit sphere $\mathcal T=\{z\in\R^m:\|z\|_2=1\}$ has Gaussian width $\omega(\mathcal T)=\E\|g\|_2=a_m\le\sqrt m\le\sqrt d$, where $g\sim N(0,I_m)$. Apply Lemma~\ref{aux:gaussian-matrix-deviation} with $r=n$, $M=D_k\Sigma_{\pa(k),\pa(k)}^{-1/2}$, and $\varepsilon=(a_m+4\sqrt{\log p})/a_n$. Using $a_n^2\ge n^2/(n+1)\ge n-1$ gives
\[
\inf_{\|z\|_2=1}\|D_k\Sigma_{\pa(k),\pa(k)}^{-1/2}z\|_2
\ge\sqrt{n-1}-\sqrt d-4\sqrt{\log p}\ge\sqrt n/2
\]
with failure probability at most $2\exp\{-a_n^2(\varepsilon-a_m/a_n)^2/2\}=2p^{-8}$. Hence $\P(\mathcal E_k^c)\le2p^{-8}$; this also holds for an empty parent set. Taking expectations in the conditional bounds and summing over nodes and edges proves
\begin{equation}\label{par:local-fit-event}
\P(\mathcal E_{\mathrm{LS}}^c)
\le(3p+2|E|)p^{-8}\le5p^{-6}.
\end{equation}
\end{proof}

\begin{lemma}[Martingale bounds for reconstruction]\label{aux:reconstruction-martingales}
\normalfont
Suppose $n>d$, and fix $i\ne j$ with $j\notin\an(i)$. Put $K=\an(j)\cup\{i\}$ and define
\begin{equation}\label{res:bounded-weights}
\begin{gathered}[b]
\widetilde B^{\mathrm{LS}}_{ik}=\clip_{2|B_{ik}|}(\widehat B^{\mathrm{LS}}_{ik}),
\qquad R_k=\max\{\|\widehat r_k\|_2^2,n\sigma_k^2/2\},\\
b_k=\mathbf1\{k=i\}-B_{ik},\qquad
u_k=\frac{(\mathbf1\{k=i\}-\widetilde B^{\mathrm{LS}}_{ik})\widehat r_k}{R_k}
\quad(k\in K),\\
V_m=\sum_{\substack{k\in K\\k\le m}}u_k,
\qquad v_{ij}=V_p,\qquad M_m=\E[\mathbf X_j\mid\mathcal F_m].
\end{gathered}
\end{equation}
Here $V_m$ and $M_m$ are defined for every $m=0,\ldots,p$, whether or not $m\in K$; the empty sum gives $V_0=0$.
Then $(V_m)_{m=0}^p$ is a martingale, and for every $\mathcal F_{k-1}$-measurable vector $w\in\R^n$,
\begin{equation}\label{res:row-mgf}
\E[\exp\{w^\top u_k\}\mid\mathcal F_{k-1}]
\le\exp\!\left\{\frac{8b_k^2\|w\|_2^2}{n^2\sigma_k^2}\right\}
\qquad(k\in K).
\end{equation}
The event
\begin{equation}\label{res:maximal-event}
\mathcal G_{ij}=\left\{
\max_{0\le m\le p}\|V_m\|_2^2\le\frac{64\Omega_{ii}}n,
\quad \max_{0\le m\le p}\|M_m\|_2^2\le4n\Sigma_{jj}
\right\}
\end{equation}
satisfies $\P(\mathcal G_{ij}^c)\le2\exp\{-n/2\}$.
\end{lemma}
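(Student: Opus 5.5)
The plan is to establish the three claims in order: the martingale property, the conditional moment generating function bound \eqref{res:row-mgf}, and the maximal bound for $\mathcal G_{ij}$. The maximal bound will follow from the first two through an exponential submartingale and Lemma~\ref{aux:doob-maximal}.

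\textbf{Martingale property.} Fix $k\in K$ and condition on $\mathcal F_{k-1}$, so that $P_k$, $D_k$ and $h_{ik}$ are fixed. Split $\mathbf W_k=P_k\mathbf W_k+(I_n-P_k)\mathbf W_k$. Conditionally these two Gaussian components are independent.
\begin{itemize}
\item By \eqref{res:local-coefficient-error}, $\widehat B^{\mathrm{LS}}_{ik}-B_{ik}=h_{ik}^\top P_k\mathbf W_k$. Hence the weight $c_k=\mathbf1\{k=i\}-\widetilde B^{\mathrm{LS}}_{ik}$ depends only on $P_k\mathbf W_k$.
\item By \eqref{res:local-fits}, $\widehat r_k=(I_n-P_k)\mathbf W_k$, and $R_k$ is a function of $\|\widehat r_k\|_2$.
\end{itemize}
Given $\mathcal F_{k-1}$ and $P_k\mathbf W_k$, the vector $\widehat r_k$ is a centered Gaussian, so its law is symmetric. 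The map $\widehat r_k\mapsto\widehat r_k/R_k$ is odd and bounded by $\sqrt{2/(n\sigma_k^2)}$. Therefore $\E[u_k\mid\mathcal F_{k-1}]=0$. Each $u_k$ is $\mathcal F_k$-measurable, and for $m\notin K$ there is no increment, so $(V_m)$ is a martingale.

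\textbf{Conditional MGF bound \eqref{res:row-mgf}.} First bound the weight:
\begin{itemize}
\item For $k=i$, $B_{ii}=0$, so $c_i=1=b_i$.
\item For $k\ne i$, the clipping gives $|c_k|\le2|B_{ik}|=2|b_k|$. If $B_{ik}=0$, then $c_k=0$.
\end{itemize}
Next write $\widehat r_k=\rho\,\vartheta$, where $\rho=\|\widehat r_k\|_2$ and $\vartheta$ is uniform on the unit sphere of the $(n-|\pa(k)|)$-dimensional range of $I_n-P_k$, independent of $\rho$. Then
\[
w^\top u_k=c_k\,\frac{\rho}{R_k}\,(w^\top\vartheta),\qquad
\frac{\rho}{\max\{\rho^2,n\sigma_k^2/2\}}\le\sqrt{\frac{2}{n\sigma_k^2}}.
\]
Conditioning additionally on $\rho$ and $P_k\mathbf W_k$, I would use that $w^\top\vartheta$ is sub-Gaussian with variance proxy $\|(I_n-P_k)w\|_2^2/(n-|\pa(k)|)$. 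This yields
\[
\exp\!\left\{\frac{4b_k^2\|w\|_2^2}{n\sigma_k^2(n-|\pa(k)|)}\right\}.
\]
The stated constant $8$ requires $n-|\pa(k)|\ge n/2$.

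This is the step I expect to be delicate. With only $n>d$ assumed, I would first try to replace the sphere argument by a direct Gaussian computation. Alternatively I would note that the lemma is only invoked under \eqref{eq:parent-rate}, where $n\ge2d$, and adjust the constant accordingly.

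\textbf{Maximal event.} Take $g\sim N(0,I_n)$ independent of everything and set $w=\sqrt s\,g$.
\begin{itemize}
\item Gaussian averaging gives $\exp\{s\|V_m\|_2^2/2\}=\E_g\exp\{\sqrt s\,g^\top V_m\}$. This is an average of convex functions of a martingale, hence a nonnegative submartingale.
\item Iterating \eqref{res:row-mgf} backwards in $k$ (with $w$ fixed) gives
\[
\E\exp\{w^\top V_p\}\le\exp\Bigl\{8\|w\|_2^2\sum_k b_k^2/(n^2\sigma_k^2)\Bigr\}.
\]
\item By \eqref{res:precision-energy}, $\sum_k b_k^2/\sigma_k^2=\Omega_{ii}$, so
\[
\E\exp\{s\|V_p\|_2^2/2\}\le\bigl(1-16s\Omega_{ii}/n^2\bigr)^{-n/2}.
\]
\item Choose $s=n^2/(32\Omega_{ii})$. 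Lemma~\ref{aux:doob-maximal} at level $\exp\{s\cdot32\Omega_{ii}/n\}=e^{n}$ then gives failure probability at most $2^{n/2}e^{-n}\le e^{-n/2}$.
\end{itemize}

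For $M_m$, \eqref{ord:sample-error-path} gives $M_m=\sum_{k\le m}\beta_k\mathbf W_k$ with deterministic coefficients $\beta_k$ satisfying $\sum_k\beta_k^2\sigma_k^2=\Sigma_{jj}$. This is a Gaussian martingale with exact MGF $\exp\{\|w\|_2^2\Sigma_{jj}/2\}$. The same Gaussian-direction trick with $s=1/(4\Sigma_{jj})$ and level $4n\Sigma_{jj}$ gives failure probability at most $2^{n/2}e^{-n}\le e^{-n/2}$. A union bound over the two parts yields $\P(\mathcal G_{ij}^c)\le2e^{-n/2}$.
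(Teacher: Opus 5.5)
Your proposal follows the paper's structure: conditional independence of $P_k\mathbf W_k$ and $\widehat r_k$ for the martingale property, iteration of the one-step bound, averaging over an independent $g\sim N(0,I_n)$ to obtain an exponential moment of $\|V_p\|_2^2$, and Doob's inequality applied to convex transforms of $V_m$ and $M_m$. There is a genuine gap in \eqref{res:row-mgf}, however. You split $\widehat r_k=\rho\vartheta$ and bound $\rho/R_k$ by its worst-case value $\sqrt{2/(n\sigma_k^2)}$, and then you still pay the sphere's variance proxy $1/(n-|\pa(k)|)$. The result is $4b_k^2\|w\|_2^2/\{n\sigma_k^2(n-|\pa(k)|)\}$, which is larger than the claimed $8b_k^2\|w\|_2^2/(n^2\sigma_k^2)$ whenever $|\pa(k)|>n/2$. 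The hypothesis $n>d$ permits this, for example with $d=n-1$. As written, you therefore prove the lemma only under $n\ge2d$. That is enough for its later use, since Lemma~\ref{aux:reconstruction-event} assumes $n\ge C_{\mathrm{par}}(d+\log p)$, but it is not the statement. The ``direct Gaussian computation'' you allude to is exactly the missing step.

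The missing idea is to avoid separating norm and direction and to reuse the conditional symmetry you already established for the martingale property. Since $u_k\mid\mathcal F_{k-1}\overset{d}{=}-u_k\mid\mathcal F_{k-1}$, you have $\E[e^{w^\top u_k}\mid\mathcal F_{k-1}]=\E[\cosh(w^\top u_k)\mid\mathcal F_{k-1}]$. Because $\cosh$ is even and increasing in $|x|$, the bounds $|c_k|\le2|b_k|$ and $R_k\ge n\sigma_k^2/2$ give the pointwise inequality $\cosh(w^\top u_k)\le\cosh\{4b_kw^\top\widehat r_k/(n\sigma_k^2)\}$. This replaces the random factor $c_k/R_k$ by a deterministic one. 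Now $w^\top\widehat r_k\mid\mathcal F_{k-1}\sim N(0,\sigma_k^2\|(I_n-P_k)w\|_2^2)$ exactly, and $\E\cosh(tZ)=e^{t^2/2}$ yields $\exp\{8b_k^2\|(I_n-P_k)w\|_2^2/(n^2\sigma_k^2)\}$. No condition on $n-|\pa(k)|$ is needed. The denominator floor does its work precisely in the regime where $\rho$ is small, which your uniform bound on $\rho/R_k$ ignores.

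There is also a small constant slip in the $M_m$ step. In your normalization $\exp\{s\|\cdot\|_2^2/2\}$ you need $s=1/(2\Sigma_{jj})$, not $1/(4\Sigma_{jj})$. With $1/(4\Sigma_{jj})$ the Doob level is $e^{n/2}$ and $\E\exp\{\chi^2_n/8\}=(4/3)^{n/2}$, which gives $(4/(3e))^{n/2}$ rather than the $2^{n/2}e^{-n}\le e^{-n/2}$ you state.
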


\begin{proof}
Conditional on $\mathcal F_{k-1}$, the Gaussian projections $P_k\mathbf W_k$ and $\widehat r_k=(I_n-P_k)\mathbf W_k$ are independent. By \eqref{res:local-coefficient-error}, the clipped coefficient depends only on the first projection, whereas $\widehat r_k/R_k$ is an odd function of the second. Hence
\begin{equation}\label{res:row-mean-zero}
\E[u_k\mid\mathcal F_{k-1}]=0\qquad(k\in K).
\end{equation}
We first verify the martingale claim. Each $V_m$ is $\mathcal F_m$-measurable and integrable: its summands are measurable functions of the errors exposed by step $m$, and the clipped coefficients and denominator floors give $\|u_k\|_2\le4|b_k|\|\widehat r_k\|_2/(n\sigma_k^2)$, which has finite expectation. If $m\in K$, then $V_m=V_{m-1}+u_m$, so \eqref{res:row-mean-zero} gives
\[
\E[V_m\mid\mathcal F_{m-1}]
=V_{m-1}+\E[u_m\mid\mathcal F_{m-1}]
=V_{m-1}.
\]
If $m\notin K$, then $V_m=V_{m-1}$ and the same conditional-expectation identity holds. Thus $(V_m)_{m=0}^p$ is a martingale. Since $v_{ij}=V_p$, repeated conditioning also gives $V_m=\E[v_{ij}\mid\mathcal F_m]$.

We next prove the conditional exponential-moment bound \eqref{res:row-mgf}. Iterating this bound will control the exponential moment of $\|v_{ij}\|_2^2$; together with the corresponding Gaussian bound for $\mathbf X_j$, this will let us apply Doob's inequality to control the maxima of $V_m$ and $M_m$ in \eqref{res:maximal-event}. For $\mathcal F_{k-1}$-measurable $w$ and a standard Gaussian variable $Z\sim N(0,1)$, we obtain
\[
\begin{aligned}
&\E[\exp\{w^\top u_k\}\mid\mathcal F_{k-1}]\\
&=\E[\cosh(w^\top u_k)\mid\mathcal F_{k-1}]
&&\quad\left(\because\;u_k\mid\mathcal F_{k-1}\overset{d}{=}-u_k\mid\mathcal F_{k-1}\right)\\
&\le\E\!\left[\cosh\!\left(\frac{2b_kw^\top\widehat r_k}{R_k}\right)\middle|\mathcal F_{k-1}\right]
&&\quad\left(\because\;|\mathbf1\{k=i\}-\widetilde B^{\mathrm{LS}}_{ik}|\le2|b_k|\right)\\
&\le\E\!\left[\cosh\!\left(\frac{4b_kw^\top\widehat r_k}{n\sigma_k^2}\right)\middle|\mathcal F_{k-1}\right]
&&\quad\left(\because\;R_k\ge n\sigma_k^2/2\right)\\
&=\E_Z\cosh\!\left(\frac{4b_k\|(I_n-P_k)w\|_2 Z}{n\sigma_k}\right)
&&\quad\left(\because\;\widehat r_k\mid\mathcal F_{k-1}\sim N(0,\sigma_k^2(I_n-P_k))\right)\\
&=\exp\!\left\{\frac{8b_k^2\|(I_n-P_k)w\|_2^2}{n^2\sigma_k^2}\right\}
&&\quad\left(\because\;\E\cosh(tZ)=e^{t^2/2}\right)\\
&\le\exp\!\left\{\frac{8b_k^2\|w\|_2^2}{n^2\sigma_k^2}\right\}
&&\quad\left(\because\;\|(I_n-P_k)w\|_2\le\|w\|_2\right).
\end{aligned}
\]
The path expansion and \eqref{res:precision-energy} yield
\begin{equation}\label{res:pair-energy}
\sum_{k\in K}\sigma_k^2B_{k\pathto j}^2=\Sigma_{jj}-\sigma_j^2,
\qquad \sum_{k\in K}\frac{b_k^2}{\sigma_k^2}\le\Omega_{ii}.
\end{equation}
Fix deterministic $w\in\R^n$. For $m\in K$, conditioning on $\mathcal F_{m-1}$ and applying \eqref{res:row-mgf} give
\[
\begin{aligned}
&\E\exp\{w^\top V_m\}\\
&=\E\!\left[\exp\{w^\top V_{m-1}\}
\E[\exp\{w^\top u_m\}\mid\mathcal F_{m-1}]\right]
&&\quad(\because\;V_m=V_{m-1}+u_m)\\
&\le\exp\!\left\{\frac{8b_m^2\|w\|_2^2}{n^2\sigma_m^2}\right\}
\E\exp\{w^\top V_{m-1}\}
&&\quad(\because\;\eqref{res:row-mgf}).
\end{aligned}
\]
Write $K=\{k_1<\cdots<k_s\}$ and set $k_0=0$. Since $V_m=V_{m-1}$ for $m\notin K$, we have $V_p=V_{k_s}$ and $V_{k_r-1}=V_{k_{r-1}}$ for $1\le r\le s$. The successive applications are
\begin{align}
&\E\exp\{w^\top v_{ij}\}\notag\\
&=\E\exp\{w^\top V_{k_s}\}
&&\quad(\because\;v_{ij}=V_p=V_{k_s})\notag\\
&\le\exp\!\left\{\frac{8b_{k_s}^2\|w\|_2^2}{n^2\sigma_{k_s}^2}\right\}
\E\exp\{w^\top V_{k_s-1}\}
&&\quad(\because\;\eqref{res:row-mgf})\notag\\
&=\exp\!\left\{\frac{8b_{k_s}^2\|w\|_2^2}{n^2\sigma_{k_s}^2}\right\}
\E\exp\{w^\top V_{k_{s-1}}\}
&&\quad(\because\;V_{k_s-1}=V_{k_{s-1}})\notag\\
&\le\left[\prod_{r=1}^s\exp\!\left\{\frac{8b_{k_r}^2\|w\|_2^2}{n^2\sigma_{k_r}^2}\right\}\right]
\E\exp\{w^\top V_{k_0}\}\notag\\
&=\prod_{k\in K}\exp\!\left\{\frac{8b_k^2\|w\|_2^2}{n^2\sigma_k^2}\right\}
&&\quad(\because\;V_{k_0}=V_0=0)\notag\\
&=\exp\!\left\{\frac{8\|w\|_2^2}{n^2}\sum_{k\in K}\frac{b_k^2}{\sigma_k^2}\right\}\notag\\
&\le\exp\!\left\{\frac{8\Omega_{ii}\|w\|_2^2}{n^2}\right\}
&&\quad\left(\because\;\sum_{k\in K}\frac{b_k^2}{\sigma_k^2}\le\Omega_{ii}\right).
\label{res:row-sum-mgf}
\end{align}

For an independent $g\sim N(0,I_n)$, Gaussian integration and \eqref{res:row-sum-mgf} give
\begin{equation}\label{res:exponential-norms}
\begin{aligned}[b]
&\E\exp\!\left\{\frac{n^2\|v_{ij}\|_2^2}{64\Omega_{ii}}\right\}\\
&=\E_g\E\exp\!\left\{\frac{ng^\top v_{ij}}{\sqrt{32\Omega_{ii}}}\right\}
&&\quad\left(\because\;\E_g\exp\{g^\top x\}=\exp\{\|x\|_2^2/2\}\right)\\
&\le\E_g\exp\{\|g\|_2^2/4\}
&&\quad\left(\because\;\eqref{res:row-sum-mgf},\ w=ng/\sqrt{32\Omega_{ii}}\right)\\
&=2^{n/2}
&&\quad\left(\because\;\E\exp\{\chi_n^2/4\}=(1-2/4)^{-n/2}\right).
\end{aligned}
\end{equation}
The expectation interchange in the first equality follows from the nonnegative integrand.
Similarly, the response has the exponential norm bound
\begin{equation}\label{res:response-exponential-moment}
\begin{aligned}[b]
\E \exp\{\|\mathbf X_j\|_2^2/(4\Sigma_{jj})\}
&=(1-2/4)^{-n/2}
&&\quad\left(\because\;\|\mathbf X_j\|_2^2/\Sigma_{jj}\sim\chi_n^2\right)\\
&=2^{n/2}.
\end{aligned}
\end{equation}

We now apply the maximal inequality directly to $V_m$ and $M_m$. For $a>0$ with $\E\exp\{a\|v_{ij}\|_2^2\}<\infty$, the identity $V_m=\E[v_{ij}\mid\mathcal F_m]$ and conditional Jensen first ensure integrability:
\[
\begin{aligned}
\E\exp\{a\|V_m\|_2^2\}
&=\E\exp\{a\|\E[v_{ij}\mid\mathcal F_m]\|_2^2\}
&&\quad(\because\;V_m=\E[v_{ij}\mid\mathcal F_m])\\
&\le\E\exp\{a\|v_{ij}\|_2^2\}
&&\quad\left(\because\;\nabla^2\exp\{a\|x\|_2^2\}\succeq0\right).
\end{aligned}
\]
The submartingale property follows from the same convexity and the martingale property of $V_m$:
\[
\begin{aligned}
&\E[\exp\{a\|V_{m+1}\|_2^2\}\mid\mathcal F_m]\\
&\ge\exp\{a\|\E[V_{m+1}\mid\mathcal F_m]\|_2^2\}
&&\quad\left(\because\;\nabla^2\exp\{a\|x\|_2^2\}\succeq0\right)\\
&=\exp\{a\|V_m\|_2^2\}
&&\quad(\because\;\E[V_{m+1}\mid\mathcal F_m]=V_m).
\end{aligned}
\]
Thus $\bigl(\exp\{a\|V_m\|_2^2\}\bigr)_{m=0}^p$ is a nonnegative submartingale. Taking $a=n^2/(64\Omega_{ii})$, whose integrability follows from \eqref{res:exponential-norms}, and applying Lemma~\ref{aux:doob-maximal} with $\lambda=\exp\{n\}$ gives
\begin{equation}\label{res:martingale-maximal}
\begin{aligned}[b]
&\P\!\left\{\max_{0\le m\le p}\|V_m\|_2^2>\frac{64\Omega_{ii}}n\right\}\\
&=\P\!\left\{\max_{0\le m\le p}\exp\!\left\{\frac{n^2\|V_m\|_2^2}{64\Omega_{ii}}\right\}>\exp\{n\}\right\}
&&\quad\left(\because\;\frac{n^2}{64\Omega_{ii}}\frac{64\Omega_{ii}}n=n\right)\\
&\le\exp\{-n\}\E\exp\!\left\{\frac{n^2\|V_p\|_2^2}{64\Omega_{ii}}\right\}
&&\quad(\because\;\text{Lemma~\ref{aux:doob-maximal}},\ \lambda=\exp\{n\})\\
&=\exp\{-n\}\E\exp\!\left\{\frac{n^2\|v_{ij}\|_2^2}{64\Omega_{ii}}\right\}
&&\quad(\because\;V_p=v_{ij})\\
&\le\exp\{-n/2\}
&&\quad(\because\;\eqref{res:exponential-norms}).
\end{aligned}
\end{equation}

For $M_m=\E[\mathbf X_j\mid\mathcal F_m]$, the tower property gives $\E[M_{m+1}\mid\mathcal F_m]=M_m$ for every $0\le m<p$, so $(M_m)_{m=0}^p$ is a martingale. Moreover, \eqref{ord:sample-error-path} and the topological labeling make $\mathbf X_j$ measurable with respect to $\mathcal F_j$. Hence $M_m=\mathbf X_j$ for $m\ge j$, in particular $M_p=\mathbf X_j$.
The same Jensen argument, now with $a=1/(4\Sigma_{jj})$ and \eqref{res:response-exponential-moment}, makes $\bigl(\exp\{\|M_m\|_2^2/(4\Sigma_{jj})\}\bigr)_{m=0}^p$ a nonnegative submartingale. Applying Lemma~\ref{aux:doob-maximal} again yields
\[
\begin{aligned}
&\P\!\left\{\max_{0\le m\le p}\|M_m\|_2^2>4n\Sigma_{jj}\right\}\\
&=\P\!\left\{\max_{0\le m\le p}\exp\!\left\{\frac{\|M_m\|_2^2}{4\Sigma_{jj}}\right\}>\exp\{n\}\right\}
&&\quad\left(\because\;\frac{4n\Sigma_{jj}}{4\Sigma_{jj}}=n\right)\\
&\le\exp\{-n\}\E\exp\!\left\{\frac{\|M_p\|_2^2}{4\Sigma_{jj}}\right\}
&&\quad(\because\;\text{Lemma~\ref{aux:doob-maximal}},\ \lambda=\exp\{n\})\\
&=\exp\{-n\}\E\exp\!\left\{\frac{\|\mathbf X_j\|_2^2}{4\Sigma_{jj}}\right\}
&&\quad(\because\;M_p=\mathbf X_j)\\
&\le\exp\{-n/2\}
&&\quad(\because\;\eqref{res:response-exponential-moment}).
\end{aligned}
\]
Finally, the definition of $\mathcal G_{ij}$ in \eqref{res:maximal-event} and a union bound give
\[
\begin{aligned}
\P(\mathcal G_{ij}^c)
&\le\P\!\left\{\max_m\|V_m\|_2^2>64\Omega_{ii}/n\right\}
+\P\!\left\{\max_m\|M_m\|_2^2>4n\Sigma_{jj}\right\}\\
&\le2\exp\{-n/2\}.
\end{aligned}
\]
\end{proof}

\begin{lemma}[Uniform reconstruction]\label{aux:reconstruction-event}
\normalfont
Use the true-parent fits from Section~\ref{res:setup}. If $n\ge C_{\mathrm{par}}(d+\log p)$, then
\[
\P(\mathcal E_{\mathrm{LS}}\cap\mathcal E_{\mathrm{rec}}^c)\le2p^{-6}.
\]
\end{lemma}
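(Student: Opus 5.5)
On $\mathcal E_{\mathrm{LS}}$, I would first rewrite the reconstruction error in $\mathcal E_{\mathrm{rec}}$ through the martingale $v_{ij}$ of Lemma~\ref{aux:reconstruction-martingales}, and then control three martingale sums. Fix $i\ne j$ with $j\notin\an(i)$ and put $K=\an(j)\cup\{i\}$. On $\mathcal E_{\mathrm{LS}}$ two simplifications hold.
\begin{itemize}
\item The denominator floors are inactive, so $R_k=\|\widehat r_k\|_2^2$.
\item The clipping is inactive, so $\widetilde B^{\mathrm{LS}}_{ik}=\widehat B^{\mathrm{LS}}_{ik}$. For edges this follows from $|\widehat B^{\mathrm{LS}}_{ik}-B_{ik}|\le|B_{ik}|$; for non-edges both coefficients are zero.
\end{itemize}
Since $\widehat B_{a\pathto j}=\widehat r_a^\top\mathbf X_j/\|\widehat r_a\|_2^2$, the statistic then becomes
\[
\widehat B_{i\pathto j}-\sum_{a\in\an(j)}\widehat B^{\mathrm{LS}}_{ia}\widehat B_{a\pathto j}=v_{ij}^\top\mathbf X_j .
\]
So it suffices to show $|v_{ij}^\top\mathbf X_j-B_{ij}|\lesssim\sqrt{\Sigma_{\max}\Omega_{\max}\log p/n}$ with failure probability $O(p^{-8})$ for each pair. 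A union bound over at most $p^2$ pairs then gives $2p^{-6}$.

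\textbf{Decomposition.} Using \eqref{ord:sample-error-path} in the fixed topological labeling, write $\mathbf X_j=\sum_{m}c_m\mathbf W_m$ with $c_m=B_{m\pathto j}$ for $m\ne j$ and $c_j=1$. Then $M_m-M_{m-1}=c_m\mathbf W_m$. Split
\[
v_{ij}^\top\mathbf X_j=\sum_{k\in K}u_k^\top M_{k-1}+\sum_{m}c_m\mathbf W_m^\top V_{m-1}+\sum_{k\in K}c_k\,u_k^\top\mathbf W_k .
\]
This is a discrete summation by parts over the exposure order; the diagonal terms are the last sum.

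On $\mathcal E_{\mathrm{LS}}$ the diagonal terms are exact. Since $\widehat r_k^\top\mathbf W_k=\|\widehat r_k\|_2^2$, each equals $(\mathbf1\{k=i\}-\widehat B^{\mathrm{LS}}_{ik})B_{k\pathto j}$. Summing and using \eqref{pop:first-edge-expansion}, the diagonal sum is
\[
B_{ij}-\sum_{a\in\an(j)}B_{a\pathto j}\,h_{ia}^\top\mathbf W_a
\]
by \eqref{res:local-coefficient-error}. A loose point to check is the $k=j$ term: it is absent because $j\notin K$.

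\textbf{Three martingale sums.} Each remaining sum is a sum of martingale differences whose conditional sub-Gaussian parameter involves an $\mathcal F_{k-1}$-measurable vector.
\begin{itemize}
\item For $\sum_k u_k^\top M_{k-1}$, bound \eqref{res:row-mgf} with $w=tM_{k-1}$ gives conditional variance proxy $16b_k^2\|M_{k-1}\|_2^2/(n^2\sigma_k^2)$.
\item For $\sum_m c_m\mathbf W_m^\top V_{m-1}$, the proxy is $c_m^2\sigma_m^2\|V_{m-1}\|_2^2$.
\item For $\sum_a B_{a\pathto j}h_{ia}^\top\mathbf W_a$, the proxy is $B_{a\pathto j}^2\sigma_a^2\|h_{ia}\|_2^2$.
\end{itemize}
To make these proxies deterministic, I multiply each increment by the $\mathcal F_{k-1}$-measurable indicators $\mathbf1\{\|M_{k-1}\|_2^2\le4n\Sigma_{jj}\}$, $\mathbf1\{\|V_{k-1}\|_2^2\le64\Omega_{ii}/n\}$ and $\mathbf1_{\mathcal E_k}$, respectively. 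On $\mathcal G_{ij}\cap\mathcal E_{\mathrm{LS}}$ the truncated sums coincide with the originals.

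With truncation, the total proxies are bounded using \eqref{res:pair-energy} and \eqref{res:local-energy}:
\begin{itemize}
\item the first by $64\cdot\Sigma_{jj}\sum_k b_k^2/(n\sigma_k^2)\le64\Sigma_{\max}\Omega_{\max}/n$;
\item the second by $64\Omega_{ii}\Sigma_{jj}/n$;
\item the third by $4\Omega_{ii}\Sigma_{jj}/n$.
\end{itemize}
For the second and third bounds I use $\sum_m c_m^2\sigma_m^2=\Sigma_{jj}$ together with $\|h_{ia}\|_2^2\le4\Omega_{ii}/n$.

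The products of conditional moment generating functions are then exponential supermartingales, exactly as in \eqref{res:row-sum-mgf}. A Chernoff bound gives each sum magnitude at most $C\sqrt{\Sigma_{\max}\Omega_{\max}\log p/n}$ with failure probability $2p^{-8}$. Finally, $\P(\mathcal G_{ij}^c)\le2e^{-n/2}\le p^{-8}$, since $n\ge C_{\mathrm{par}}\log p$.

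\textbf{Main obstacle.} The hard step is the bookkeeping that makes every weight predictable. I must verify three things:
\begin{itemize}
\item the ordering of exposure makes $h_{ia}$, $V_{m-1}$ and $M_{k-1}$ measurable at the right time;
\item the $k=i$ and $i\notin\an(j)$ cases fit the same decomposition;
\item replacing $\widehat B^{\mathrm{LS}}$ by its clipped version on $\mathcal E_{\mathrm{LS}}$ is legitimate inside an expression that is only used on that event.
\end{itemize}
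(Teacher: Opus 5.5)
Your argument is correct, and its algebraic core is the paper's. On $\mathcal E_{\mathrm{LS}}$ the paper also writes the reconstruction error as $H=v_{ij}^\top\mathbf X_j-B_{ij}$ and splits the double sum according to which error is exposed first. It then uses $\widehat r_k^\top\mathbf W_k=R_k$ and the first-edge expansion to reduce the diagonal to $B_{ij}-\sum_{k\in K}B_{k\pathto j}h_{ik}^\top\mathbf W_k$, which gives exactly your three pieces in \eqref{res:direction-identity}.

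Where you differ is the concentration step. The paper keeps a single martingale with increments $Z_k=B_{k\pathto j}\mathbf W_k^\top(V_{k-1}-h^{\circ}_{ik})+u_k^\top M_{k-1}$ for $k\in K$ and $Z_j=V_{j-1}^\top\mathbf W_j$. Both parts of $Z_k$ depend on $\mathbf W_k$, so the paper bounds their joint conditional moment generating function by conditional Cauchy--Schwarz. It then shows that $\exp\{t\sum_{k\le m}Z_k-t^2\sum_{k\le m}v_k\}$ is a supermartingale, whose terminal value equals $\exp\{tH-t^2Q\}$ on $\mathcal E_{\mathrm{LS}}$ with random predictable $Q$. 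Only afterwards does it use $\mathcal G_{ij}$ to get $Q\le164\Omega_{ii}\Sigma_{jj}/n$, which yields a single tail bound.

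You instead treat the three sums as separate martingales. You make their variance proxies deterministic with $\mathcal F_{k-1}$-measurable truncation indicators that are inactive on $\mathcal G_{ij}\cap\mathcal E_{\mathrm{LS}}$; the paper uses such a predictable truncation only for $h_{ik}$, via $h^{\circ}_{ik}=\mathbf 1_{\mathcal E_k}h_{ik}$. Your route is more modular and needs neither the Cauchy--Schwarz decoupling nor a random quadratic variation. The price is three tail bounds and a three-way split of the threshold, hence worse constants.

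One bookkeeping point: as written, your per-pair failure probability is $3\cdot2p^{-8}+p^{-8}=7p^{-8}$. After the union bound over $p^2$ pairs this gives $7p^{-6}$, not the stated $2p^{-6}$. The fix is available because $C_{\mathrm{par}}$ is a sufficiently large universal constant that enters both the sample-size condition and the threshold in $\mathcal E_{\mathrm{rec}}$. Choose the Chernoff constant so each of the three tails is at most $2p^{-10}$, and use $n\ge20\log p$ so that $2e^{-n/2}\le2p^{-10}$. The per-pair total is then $8p^{-10}\le2p^{-8}$ for $p\ge2$.
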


\begin{proof}
Fix $i\ne j$ with $j\notin\an(i)$ and use the notation of Lemma~\ref{aux:reconstruction-martingales}. Recall that $K=\an(j)\cup\{i\}$. Define
\begin{equation}\label{res:pair-reconstruction}
H=\widehat B_{i\pathto j}
-\sum_{a\in\an(j)}\widehat B^{\mathrm{LS}}_{ia}\widehat B_{a\pathto j}-B_{ij}.
\end{equation}
On $\mathcal E_{\mathrm{LS}}$, clipping and denominator floors are inactive:
\[
\widetilde B^{\mathrm{LS}}_{ik}=\widehat B^{\mathrm{LS}}_{ik},
\qquad R_k=\|\widehat r_k\|_2^2
\qquad(k\in K).
\]
For each node $\ell\in V=[p]$, set $\gamma_\ell=\mathbf1\{\ell=j\}+B_{\ell\pathto j}$. Then \eqref{ord:sample-error-path} gives $\mathbf X_j=\sum_{\ell\in V}\gamma_\ell\mathbf W_\ell$: the term $\ell=j$ is $\mathbf W_j$, the ancestor terms are their transmitted errors, and all other terms are zero. Substituting \eqref{res:total-effect} into the definition of $H$ gives, on $\mathcal E_{\mathrm{LS}}$,
\begin{align*}
H
&=\frac{\widehat r_i^\top\mathbf X_j}{R_i}
-\sum_{a\in\an(j)}\widetilde B^{\mathrm{LS}}_{ia}\frac{\widehat r_a^\top\mathbf X_j}{R_a}-B_{ij}\\
&=\left(\sum_{k\in K}u_k\right)^\top\mathbf X_j-B_{ij}\\
&=\sum_{k\in K}\sum_{\ell\in V}\gamma_\ell u_k^\top\mathbf W_\ell-B_{ij}\\
&=\underbrace{\sum_{k\in K}\gamma_k u_k^\top\mathbf W_k}_{\mathrm I}
+\underbrace{\sum_{k\in K}\sum_{\substack{\ell\in V\\\ell\ne k}}\gamma_\ell u_k^\top\mathbf W_\ell}_{\mathrm{II}}
-B_{ij}.
\end{align*}
For $\mathrm I$, we obtain on $\mathcal E_{\mathrm{LS}}$
\begin{equation}\label{res:reconstruction-diagonal-cancellation}
\begin{aligned}[b]
\mathrm I
&=\sum_{k\in K}\gamma_k(\mathbf1\{k=i\}-\widehat B^{\mathrm{LS}}_{ik})
\frac{\widehat r_k^\top\mathbf W_k}{R_k}
&&\quad(\because\ \widetilde B^{\mathrm{LS}}_{ik}=\widehat B^{\mathrm{LS}}_{ik})\\
&=\sum_{k\in K}\gamma_k(\mathbf1\{k=i\}-\widehat B^{\mathrm{LS}}_{ik})
&&\quad(\because\ \widehat r_k^\top\mathbf W_k=R_k)\\
&=\sum_{k\in K}B_{k\pathto j}(\mathbf1\{k=i\}-\widehat B^{\mathrm{LS}}_{ik})
&&\quad(\because\ j\notin K\Rightarrow\gamma_k=B_{k\pathto j})\\
&=\sum_{k\in K}B_{k\pathto j}(\mathbf1\{k=i\}-B_{ik}-h_{ik}^\top\mathbf W_k)
&&\quad(\because\ \eqref{res:local-coefficient-error})\\
&=B_{ij}-\sum_{k\in K}B_{k\pathto j}h_{ik}^\top\mathbf W_k
&&\quad(\because\ \eqref{eq:structural-pilot}).
\end{aligned}
\end{equation}
For $\mathrm{II}$, partition the pairs $(k,\ell)$ with $k\ne\ell$ according to which index is larger:
\[
\begin{aligned}
\mathrm{II}
&=\sum_{k\in K}\sum_{\substack{\ell\in V\\k<\ell}}
\gamma_\ell u_k^\top\mathbf W_\ell
+\sum_{k\in K}\sum_{\substack{\ell\in V\\\ell<k}}
\gamma_\ell u_k^\top\mathbf W_\ell\\
&=\sum_{\ell\in V}\gamma_\ell
\left(\sum_{\substack{k\in K\\k<\ell}}u_k\right)^\top\mathbf W_\ell
+\sum_{k\in K}u_k^\top
\left(\sum_{\substack{\ell\in V\\\ell<k}}\gamma_\ell\mathbf W_\ell\right).
\end{aligned}
\]
In the first term, fix $\ell$ and collect all earlier increments $u_k$. Their sum is
\[
\sum_{\substack{k\in K\\k<\ell}}u_k=V_{\ell-1}
\qquad\left(\because\ V_m=\sum_{\substack{k\in K\\k\le m}}u_k\right).
\]
In the second term, fix $k$ and collect the errors revealed before stage $k$. Since $\mathcal F_{k-1}=\sigma(\mathbf W_1,\ldots,\mathbf W_{k-1})$, independence and centering give
\[
\E[\mathbf W_\ell\mid\mathcal F_{k-1}]
=\begin{cases}
\mathbf W_\ell,&\ell<k,\\
0,&\ell\ge k.
\end{cases}
\]
Consequently,
\[
\begin{aligned}
\sum_{\substack{\ell\in V\\\ell<k}}\gamma_\ell\mathbf W_\ell
&=\E\left[\sum_{\ell\in V}\gamma_\ell\mathbf W_\ell\,\middle|\,\mathcal F_{k-1}\right]\\
&=\E[\mathbf X_j\mid\mathcal F_{k-1}]
&&\quad\left(\because\ \mathbf X_j=\sum_{\ell\in V}\gamma_\ell\mathbf W_\ell\right)\\
&=M_{k-1}.
\end{aligned}
\]
Substituting these two sums back into $\mathrm{II}$ gives
\[
\begin{aligned}
\mathrm{II}
&=\sum_{\ell\in V}\gamma_\ell V_{\ell-1}^\top\mathbf W_\ell
+\sum_{k\in K}u_k^\top M_{k-1}\\
&=V_{j-1}^\top\mathbf W_j
+\sum_{\ell\in\an(j)}B_{\ell\pathto j}V_{\ell-1}^\top\mathbf W_\ell
+\sum_{k\in K}u_k^\top M_{k-1}
\qquad\left(\because\ \gamma_\ell=\mathbf1\{\ell=j\}+B_{\ell\pathto j}\right)\\
&=V_{j-1}^\top\mathbf W_j
+\sum_{k\in K}B_{k\pathto j}V_{k-1}^\top\mathbf W_k
+\sum_{k\in K}u_k^\top M_{k-1}
\qquad\left(\because\ B_{k\pathto j}=0\ \text{for }k\notin\an(j)\right).
\end{aligned}
\]
Combining $H=\mathrm I+\mathrm{II}-B_{ij}$ with \eqref{res:reconstruction-diagonal-cancellation} yields, on $\mathcal E_{\mathrm{LS}}$,
\begin{equation}\label{res:direction-identity}
H=V_{j-1}^\top\mathbf W_j
+\sum_{k\in K}\left\{
B_{k\pathto j}\mathbf W_k^\top(V_{k-1}-h_{ik})+u_k^\top M_{k-1}
\right\}.
\end{equation}

We next bound the reconstruction error $H$ by an exponential-moment argument. To use conditional expectations without conditioning on the full event $\mathcal E_{\mathrm{LS}}$, put
\[
h_{ik}^{\circ}=\mathbf1_{\mathcal E_k}h_{ik}.
\]
Both $\mathcal E_k$ and $h_{ik}$ depend only on the parent design, so $h_{ik}^{\circ}$ is $\mathcal F_{k-1}$-measurable. Moreover, \eqref{res:local-energy} gives $\|h_{ik}^{\circ}\|_2\le2\sqrt{\Omega_{ii}/n}$, and $h_{ik}^{\circ}=h_{ik}$ on $\mathcal E_{\mathrm{LS}}$. Define, for $k=1,\ldots,p$,
\[
Z_k=\begin{cases}
V_{j-1}^\top\mathbf W_j,&k=j,\\
B_{k\pathto j}\mathbf W_k^\top(V_{k-1}-h_{ik}^{\circ})+u_k^\top M_{k-1},&k\in K,\\
0,&k\notin K\cup\{j\}.
\end{cases}
\]
These cases are disjoint because $j\notin K$. For $k\in K$, $\mathcal F_{k-1}$-measurability of $V_{k-1},h_{ik}^{\circ},M_{k-1}$ gives
\[
\begin{aligned}
\E[Z_k\mid\mathcal F_{k-1}]
&=B_{k\pathto j}(V_{k-1}-h_{ik}^{\circ})^\top\E[\mathbf W_k\mid\mathcal F_{k-1}]
+M_{k-1}^\top\E[u_k\mid\mathcal F_{k-1}]\\
&=0.
\end{aligned}
\]
The same conditional mean is zero for $k=j$, since $\E[\mathbf W_j\mid\mathcal F_{j-1}]=0$. Thus $(\sum_{k=1}^m Z_k)_{m=0}^p$ is a martingale. On $\mathcal E_{\mathrm{LS}}$, \eqref{res:direction-identity} gives $\sum_{k=1}^p Z_k=H$.

For $t\in\R$, the two terms of $Z_k$ have the following conditional exponential bounds:
\[
\begin{aligned}
&\E\!\left[\exp\{2tB_{k\pathto j}\mathbf W_k^\top(V_{k-1}-h_{ik}^{\circ})\}\mid\mathcal F_{k-1}\right]\\
&\quad=\exp\{2t^2\sigma_k^2B_{k\pathto j}^2\|V_{k-1}-h_{ik}^{\circ}\|_2^2\}
\qquad(\because\ \mathbf W_k\mid\mathcal F_{k-1}\sim N(0,\sigma_k^2I_n)),\\
&\E[\exp\{2tu_k^\top M_{k-1}\}\mid\mathcal F_{k-1}]\\
&\quad\le\exp\!\left\{\frac{32t^2b_k^2\|M_{k-1}\|_2^2}{n^2\sigma_k^2}\right\}
\qquad(\because\ \eqref{res:row-mgf},\ w=2tM_{k-1}).
\end{aligned}
\]
Conditional Cauchy--Schwarz therefore yields, for $k\in K$,
\[
\begin{aligned}
&\E[\exp\{tZ_k\}\mid\mathcal F_{k-1}]\\
&\le\Bigl(\E[\exp\{2tB_{k\pathto j}\mathbf W_k^\top(V_{k-1}-h_{ik}^{\circ})\}\mid\mathcal F_{k-1}]\Bigr)^{1/2}
\Bigl(\E[\exp\{2tu_k^\top M_{k-1}\}\mid\mathcal F_{k-1}]\Bigr)^{1/2}\\
&\le\exp\!\left\{t^2\left(\sigma_k^2B_{k\pathto j}^2\|V_{k-1}-h_{ik}^{\circ}\|_2^2
+\frac{16b_k^2\|M_{k-1}\|_2^2}{n^2\sigma_k^2}\right)\right\}.
\end{aligned}
\]
Accordingly, define the $\mathcal F_{k-1}$-measurable quantities
\[
v_k=\begin{cases}
\sigma_j^2\|V_{j-1}\|_2^2,&k=j,\\
\sigma_k^2B_{k\pathto j}^2\|V_{k-1}-h_{ik}^{\circ}\|_2^2
+\dfrac{16b_k^2\|M_{k-1}\|_2^2}{n^2\sigma_k^2},&k\in K,\\
0,&k\notin K\cup\{j\},
\end{cases}
\qquad Q=\sum_{k=1}^p v_k.
\]
For each fixed $t\in\R$, we will show that
\[
\left(\exp\!\left\{t\sum_{k=1}^m Z_k-t^2\sum_{k=1}^m v_k\right\}\right)_{m=0}^p
\]
is a nonnegative supermartingale with respect to $(\mathcal F_m)_{m=0}^p$, starting at one. This will bound its terminal expectation by one.

At node $j$, $V_{j-1}$ is $\mathcal F_{j-1}$-measurable and $\mathbf W_j\sim N(0,\sigma_j^2I_n)$ is independent of $\mathcal F_{j-1}$. Hence
\[
Z_j\mid\mathcal F_{j-1}
\sim N\!\left(0,\sigma_j^2\|V_{j-1}\|_2^2\right)
=N(0,v_j).
\]
The Gaussian moment-generating function therefore gives
\[
\E[\exp\{tZ_j\}\mid\mathcal F_{j-1}]
=\exp\{t^2v_j/2\}\le\exp\{t^2v_j\}.
\]
Thus $\E[\exp\{tZ_k-t^2v_k\}\mid\mathcal F_{k-1}]\le1$ for every $k$. For $m=1,\ldots,p$, it follows that
\[
\begin{aligned}
&\E\!\left[\exp\!\left\{t\sum_{k=1}^mZ_k-t^2\sum_{k=1}^mv_k\right\}\middle|\mathcal F_{m-1}\right]\\
&=\exp\!\left\{t\sum_{k=1}^{m-1}Z_k-t^2\sum_{k=1}^{m-1}v_k\right\}
\E[\exp\{tZ_m-t^2v_m\}\mid\mathcal F_{m-1}]\\
&\le\exp\!\left\{t\sum_{k=1}^{m-1}Z_k-t^2\sum_{k=1}^{m-1}v_k\right\}.
\end{aligned}
\]
This proves the nonnegative supermartingale property. Taking expectations in the preceding inequality removes the last increment from both sums. Repeating this step for $m=p,p-1,\ldots,1$, with both sums empty at $m=0$, gives
\begin{equation}\label{res:scalar-mgf}
\begin{aligned}[b]
\E[\mathbf1_{\mathcal E_{\mathrm{LS}}}\exp\{tH-t^2Q\}]
&=\E\!\left[\mathbf1_{\mathcal E_{\mathrm{LS}}}\exp\!\left\{t\sum_{k=1}^pZ_k-t^2Q\right\}\right]\\
&\le\E\exp\!\left\{t\sum_{k=1}^pZ_k-t^2\sum_{k=1}^pv_k\right\}\\
&\le\E\exp\!\left\{t\sum_{k=1}^{p-1}Z_k-t^2\sum_{k=1}^{p-1}v_k\right\}\\
&\le\E\exp\!\left\{t\sum_{k=1}^{0}Z_k-t^2\sum_{k=1}^{0}v_k\right\}\\
&=1\qquad(t\in\R).
\end{aligned}
\end{equation}

We now bound $Q$ on $\mathcal G_{ij}$. In particular, \eqref{res:maximal-event} gives
\[
\|V_{j-1}\|_2^2\le\max_{0\le m\le p}\|V_m\|_2^2\le\frac{64\Omega_{ii}}n.
\]
By \eqref{res:maximal-event} and \eqref{res:local-energy},
\[
\|V_{k-1}-h_{ik}^{\circ}\|_2
\le\|V_{k-1}\|_2+\|h_{ik}^{\circ}\|_2
\le8\sqrt{\Omega_{ii}/n}+2\sqrt{\Omega_{ii}/n}
=10\sqrt{\Omega_{ii}/n}.
\]
Using also $\|M_{k-1}\|_2^2\le4n\Sigma_{jj}$ from \eqref{res:maximal-event}, we obtain
\begin{equation}\label{res:variance-bound}
\begin{aligned}[b]
Q=\sum_{k=1}^p v_k
&=\sigma_j^2\|V_{j-1}\|_2^2
+\sum_{k\in K}\left\{\sigma_k^2B_{k\pathto j}^2\|V_{k-1}-h_{ik}^{\circ}\|_2^2
+\frac{16b_k^2\|M_{k-1}\|_2^2}{n^2\sigma_k^2}\right\}\\
&\le\frac{64\Omega_{ii}\sigma_j^2}{n}
+\frac{100\Omega_{ii}}{n}\sum_{k\in K}\sigma_k^2B_{k\pathto j}^2
+\frac{64\Sigma_{jj}}{n}\sum_{k\in K}\frac{b_k^2}{\sigma_k^2}\\
&\le\frac{\Omega_{ii}}n\{64\sigma_j^2+100(\Sigma_{jj}-\sigma_j^2)+64\Sigma_{jj}\}
\qquad(\because\ \eqref{res:pair-energy})\\
&=\frac{\Omega_{ii}}n(164\Sigma_{jj}-36\sigma_j^2)\\
&\le\frac{164\Omega_{ii}\Sigma_{jj}}n.
\end{aligned}
\end{equation}
For $u>0$ and $t>0$, on $\mathcal G_{ij}\cap\{H>u\}$ we have
\[
tH-t^2Q>tu-\frac{164t^2\Omega_{ii}\Sigma_{jj}}n.
\]
Exponential Markov and \eqref{res:scalar-mgf} therefore give
\[
\begin{aligned}
&\P(\mathcal E_{\mathrm{LS}}\cap\mathcal G_{ij}\cap\{H>u\})\\
&\le\P\!\left\{\mathbf1_{\mathcal E_{\mathrm{LS}}}\exp\{tH-t^2Q\}
>\exp\!\left\{tu-\frac{164t^2\Omega_{ii}\Sigma_{jj}}n\right\}\right\}\\
&\le\exp\!\left\{-tu+\frac{164t^2\Omega_{ii}\Sigma_{jj}}n\right\}
\E[\mathbf1_{\mathcal E_{\mathrm{LS}}}\exp\{tH-t^2Q\}]\\
&\le\exp\!\left\{-tu+\frac{164t^2\Omega_{ii}\Sigma_{jj}}n\right\}
\qquad(\because\ \eqref{res:scalar-mgf})\\
&=\exp\!\left\{-\frac{nu^2}{656\Omega_{ii}\Sigma_{jj}}\right\}
\qquad\left(\text{substituting }t=\frac{nu}{328\Omega_{ii}\Sigma_{jj}}\right).
\end{aligned}
\]
Applying the same argument to $-H$ uses \eqref{res:scalar-mgf} with $-t$ and gives the same bound. Adding the two tails and the probability of $\mathcal G_{ij}^c$ yields
\begin{equation}\label{res:scalar-tail}
\begin{aligned}[b]
\P(\mathcal E_{\mathrm{LS}}\cap\{|H|>u\})
&\le\P(\mathcal G_{ij}^c)
+\P(\mathcal E_{\mathrm{LS}}\cap\mathcal G_{ij}\cap\{|H|>u\})\\
&\le2\exp\{-n/2\}+2\exp\!\left\{-\frac{nu^2}{656\Omega_{ii}\Sigma_{jj}}\right\}.
\end{aligned}
\end{equation}
Finally, take $u=C_{\mathrm{par}}\sqrt{\Omega_{ii}\Sigma_{jj}\log p/n}$. Enlarge $C_{\mathrm{par}}$ so that $n\ge18\log p$ and $C_{\mathrm{par}}^2\ge5904=9\cdot656$. For $p\ge2$, the two terms in \eqref{res:scalar-tail} satisfy
\[
2\exp\{-n/2\}+2\exp\{-C_{\mathrm{par}}^2\log p/656\}
\le4p^{-9}\le2p^{-8}.
\]
The preceding bound holds for each fixed pair $i\ne j$ with $j\notin\an(i)$. Write $H_{ij}$ for the error $H$ in \eqref{res:pair-reconstruction} to make its dependence on the pair explicit. By \eqref{par:uniform-reconstruction}, $\mathcal E_{\mathrm{rec}}$ requires every such error to be below the common threshold. Its complement is therefore
\[
\mathcal E_{\mathrm{rec}}^c
=\bigcup_{\substack{i\ne j\\j\notin\an(i)}}
\left\{|H_{ij}|>C_{\mathrm{par}}\sqrt{\frac{\Sigma_{\max}\Omega_{\max}\log p}{n}}\right\}.
\]
For each pair, this threshold is at least the value of $u$ used above, so
\[
\begin{aligned}
&\P\!\left(\mathcal E_{\mathrm{LS}}\cap
\left\{|H_{ij}|>C_{\mathrm{par}}\sqrt{\frac{\Sigma_{\max}\Omega_{\max}\log p}{n}}\right\}\right)\\
&\le\P\!\left(\mathcal E_{\mathrm{LS}}\cap
\left\{|H_{ij}|>C_{\mathrm{par}}\sqrt{\frac{\Sigma_{jj}\Omega_{ii}\log p}{n}}\right\}\right)
\qquad(\because\ \Sigma_{jj}\Omega_{ii}\le\Sigma_{\max}\Omega_{\max})\\
&\le2p^{-8}.
\end{aligned}
\]
A union bound over at most $p^2$ possible pairs $(i,j)$ now gives
\begin{equation}\label{par:reconstruction-event-probability}
\P(\mathcal E_{\mathrm{LS}}\cap\mathcal E_{\mathrm{rec}}^c)
\le\sum_{\substack{i\ne j\\j\notin\an(i)}}2p^{-8}
\le p^2\cdot2p^{-8}=2p^{-6}.
\end{equation}
\end{proof}

\subsubsection{Completion of the proof}
We first obtain a single event on which all oracle reconstructions are accurate. We then show that, on this event, the algorithm uses exactly these reconstructions and makes every parent decision correctly.

Since $\sigma_k^2\le\Sigma_{kk}\le\Sigma_{\max}$ and $(\Sigma_{\pa(k),\pa(k)}^{-1})_{ii}\le\Omega_{ii}\le\Omega_{\max}$, the sample and signal conditions in \eqref{eq:parent-rate} imply \eqref{par:fit-conditions}. Splitting according to whether $\mathcal E_{\mathrm{LS}}$ holds, Lemmas~\ref{aux:local-fit-event} and~\ref{aux:reconstruction-event} give
\begin{equation}\label{par:common-event-probability}
\P(\mathcal E_{\mathrm{rec}}^c)
\le\P(\mathcal E_{\mathrm{LS}}^c)+\P(\mathcal E_{\mathrm{LS}}\cap\mathcal E_{\mathrm{rec}}^c)
\le5p^{-6}+2p^{-6}=7p^{-6}\le p^{-2}.
\end{equation}
Choose the threshold multiplier and then the signal factor in \eqref{eq:parent-rate} so that
\begin{equation}\label{par:threshold-choice}
C_{\mathrm{par}}\sqrt{\frac{\Sigma_{\max}\Omega_{\max}\log p}{n}}
\le(1-\delta)\min_j\tau_j,
\qquad 2\max_j\tau_j<B_{\min}.
\end{equation}

Fix a sample in $\mathcal E_{\mathrm{rec}}$ and any ordering $\widehat\pi$ compatible with $G$. We use induction over the targets $j=\widehat\pi_r$. Compatibility implies $\pa(\widehat\pi_1)=\emptyset$, so the initialization $\widehat T_{\widehat\pi_1}=\emptyset$ is correct.

Suppose now that every target $a$ preceding $j$ has been recovered correctly. Then
\[
\widehat T_a=\pa(a)
\quad\Longrightarrow\quad
\mathbf X_{\widehat T_a}^{\dagger}\mathbf X_a
=\mathbf X_{\pa(a)}^{\dagger}\mathbf X_a.
\]
Thus the OLS coefficients and residuals in \eqref{eq:local-ols} coincide with the true-parent fits in \eqref{res:local-fits} used to define $\mathcal E_{\mathrm{rec}}$.

For this target $j$, the algorithm processes earlier nodes in reverse order. After $m$ steps, we claim that
\[
\begin{aligned}
\widehat T_j^{(m)}
&=\pa(j)\cap\{\widehat\pi_{r-m},\ldots,\widehat\pi_{r-1}\},\\
\widehat Q_j^{(m)}
&=\an(j)\cap\{\widehat\pi_{r-m},\ldots,\widehat\pi_{r-1}\}.
\end{aligned}
\]
For $m=0$, the processed set is empty and both identities hold by initialization. Suppose they hold after $m-1$ steps, and let $k=\widehat\pi_{r-m}$ be the next earlier node to be processed.

We first show that replacing $\widehat Q_j^{(m-1)}$ by $\an(j)$ does not change the sum in \eqref{eq:structural-pilot}. By the induction hypothesis,
\[
\widehat Q_j^{(m-1)}
=\an(j)\cap\{\widehat\pi_{r-m+1},\ldots,\widehat\pi_{r-1}\}
\subseteq\an(j).
\]
Thus every node included in the reconstruction sum is an ancestor of $j$. Any ancestor missing from this set has not yet been processed. Since the current node is $k=\widehat\pi_{r-m}$, this gives
\[
\begin{aligned}
a\in\an(j)\setminus\widehat Q_j^{(m-1)}
&\ \Longrightarrow\ \widehat\pi^{-1}(a)\le r-m=\widehat\pi^{-1}(k)\\
&\ \Longrightarrow\ k\notin\pa(a)
\qquad(\because\ k\in\pa(a)\Rightarrow\widehat\pi^{-1}(k)<\widehat\pi^{-1}(a))\\
&\ \Longrightarrow\ \widehat B^{\mathrm{LS}}_{ka}=0
\qquad(\because\ \eqref{res:local-fits}).
\end{aligned}
\]
Consequently, every additional term obtained by extending the sum to all ancestors is zero:
\[
\begin{aligned}
\sum_{a\in\an(j)}\widehat B^{\mathrm{LS}}_{ka}\widehat B_{a\pathto j}
&=\sum_{a\in\widehat Q_j^{(m-1)}}\widehat B^{\mathrm{LS}}_{ka}\widehat B_{a\pathto j}
+\sum_{a\in\an(j)\setminus\widehat Q_j^{(m-1)}}\widehat B^{\mathrm{LS}}_{ka}\widehat B_{a\pathto j}\\
&=\sum_{a\in\widehat Q_j^{(m-1)}}\widehat B^{\mathrm{LS}}_{ka}\widehat B_{a\pathto j}.
\end{aligned}
\]
Substituting this equality into \eqref{eq:structural-pilot} gives
\begin{equation}\label{par:algorithm-reconstruction}
\widehat B^{\mathrm{res}}_{kj}
=\widehat B_{k\pathto j}
-\sum_{a\in\widehat Q_j^{(m-1)}}\widehat B^{\mathrm{LS}}_{ka}\widehat B_{a\pathto j}
=\widehat B_{k\pathto j}
-\sum_{a\in\an(j)}\widehat B^{\mathrm{LS}}_{ka}\widehat B_{a\pathto j}.
\end{equation}
Since $k$ precedes $j$ in a compatible ordering, $j\notin\an(k)$. This pair is therefore covered by \eqref{par:uniform-reconstruction}. The first inequality in \eqref{par:threshold-choice} leaves room for the correction of size at most $\delta\tau_j$ in \eqref{eq:corrected-statistic}. Combining the reconstruction error bound with this correction gives
\begin{equation}\label{eq:correction-envelope}
\begin{aligned}[b]
|\widetilde B_{kj}-B_{kj}|
&\le|\widehat B^{\mathrm{res}}_{kj}-B_{kj}|+\delta\tau_j
&&\quad(\because\ \eqref{eq:corrected-statistic})\\
&\le C_{\mathrm{par}}\sqrt{\frac{\Sigma_{\max}\Omega_{\max}\log p}{n}}+\delta\tau_j
&&\quad(\because\ \mathcal E_{\mathrm{rec}})\\
&\le(1-\delta)\tau_j+\delta\tau_j
&&\quad(\because\ \eqref{par:threshold-choice})\\
&=\tau_j.
\end{aligned}
\end{equation}
The second inequality in \eqref{par:threshold-choice}, $B_{\min}>2\max_j\tau_j$, now separates nonzero coefficients from zero. Together with \eqref{eq:correction-envelope}, it gives both directions of the threshold decision:
\[
\begin{aligned}
B_{kj}=0&\ \Longrightarrow\ |\widetilde B_{kj}|\le\tau_j,\\
B_{kj}\ne0&\ \Longrightarrow\ |\widetilde B_{kj}|\ge|B_{kj}|-\tau_j
\ge B_{\min}-\tau_j>\tau_j.
\end{aligned}
\]
Hence $k$ is selected exactly when it is a parent of $j$, preserving the claimed identity for $\widehat T_j^{(m)}$.

It remains to verify the induction identity for $\widehat Q_j^{(m)}$. Only the current node $k=\widehat\pi_{r-m}$ can be added at this step, so it suffices to show that the update adds $k$ exactly when $k\in\an(j)$.

If $k\in\pa(j)$, the correctness of the parent decision gives $k\in\widehat T_j^{(m)}$, and \eqref{eq:reverse-set} adds $k$. If $k\in\an(j)\setminus\pa(j)$, choose a directed path $k\to a\to\cdots\to j$ of length at least two. Compatibility of the ordering gives
\[
r-m=\widehat\pi^{-1}(k)<\widehat\pi^{-1}(a)<\widehat\pi^{-1}(j)=r.
\]
Thus $a$ is one of the earlier nodes already processed in reverse order. Since $a\in\an(j)$, the induction hypothesis implies
\[
a\in\an(j)\cap\{\widehat\pi_{r-m+1},\ldots,\widehat\pi_{r-1}\}
=\widehat Q_j^{(m-1)}.
\]
Moreover, $k\to a$ means $k\in\pa(a)=\widehat T_a$, where the equality follows from correctness of the earlier target $a$. Hence the second condition in \eqref{eq:reverse-set} also adds $k$.

Conversely, if the update adds $k$, then either $k\in\widehat T_j^{(m)}$ or $k\in\widehat T_a$ for some $a\in\widehat Q_j^{(m-1)}$. The first case implies $k\in\pa(j)\subseteq\an(j)$ by the correctness of the parent decision. In the second case, the induction hypotheses give $k\in\pa(a)$ and $a\in\an(j)$, so concatenating $k\to a$ with a path from $a$ to $j$ gives $k\in\an(j)$. We have therefore established
\begin{equation}\label{par:ancestor-update}
k\in\an(j)\quad\Longleftrightarrow\quad
k\in\widehat T_j^{(m)}\ \text{or}\
k\in\bigcup_{a\in\widehat Q_j^{(m-1)}}\widehat T_a.
\end{equation}
Equations~\eqref{eq:reverse-set} and~\eqref{par:ancestor-update} give
\[
\widehat Q_j^{(m)}
=\widehat Q_j^{(m-1)}\cup\bigl(\{k\}\cap\an(j)\bigr)
=\an(j)\cap\{\widehat\pi_{r-m},\ldots,\widehat\pi_{r-1}\},
\]
completing the induction over $m$. At $m=r-1$, the parent-set identity gives $\widehat T_j=\pa(j)$, so the OLS fit for $j$ uses its true parents. Induction over $r=1,\ldots,p$ therefore recovers all parent sets and hence $G$.

The event $\mathcal E_{\mathrm{rec}}$ covers all pairs in \eqref{par:uniform-reconstruction} and does not depend on the chosen compatible ordering. The preceding deterministic argument therefore also applies to a data-dependent ordering whenever it is compatible. In particular,
\[
\{\widehat G\ne G\}
\subseteq\{\widehat\pi\text{ is incompatible with }G\}\cup\mathcal E_{\mathrm{rec}}^c,
\]
and hence
\[
\begin{aligned}
\P\{\widehat G\ne G\}
&\le\P\{\widehat\pi\text{ is incompatible with }G\}+\P(\mathcal E_{\mathrm{rec}}^c)\\
&\le\P\{\widehat\pi\text{ is incompatible with }G\}+p^{-2}
\qquad(\because\ \eqref{par:common-event-probability}).
\end{aligned}
\]

\subsection{Proof of Theorem~\ref{thm:dag-recovery}}
\label{app:dag-recovery-proof}
\begin{proof}
The first two terms in \eqref{eq:dag-rate} give \eqref{eq:ordering-rate}. They also imply $n\gtrsim d+\log p$, because $\Sigma_{\max}/\lambda_{\min}(\Sigma)\ge1$ and $p\ge2$. For a nonempty graph, the last term of \eqref{eq:dag-rate} is equivalent, up to a fixed factor, to
\[
B_{\min}^2\gtrsim\frac{\Sigma_{\max}\Omega_{\max}\log p}{n},
\]
which is the signal condition in \eqref{eq:parent-rate}. For an empty graph this condition is vacuous. Theorems~\ref{thm:ordering-recovery} and~\ref{thm:parent-recovery}, with their stated penalties and thresholds, therefore give
\[
\P\{\widehat G\ne G\}
\le\P\{\widehat\pi\text{ is incompatible with }G\}+p^{-2}
\le2p^{-2}\le p^{-1}.
\]
\end{proof}

\subsection{Proof of Proposition~\ref{prop:computational-complexity}}
\label{app:computation-proof}

\begin{proof}
Fix the ordering penalty and parent thresholds. We use dense cyclic coordinate descent for Lasso \citep{friedman2010regularization,friedman2007pathwise} and singular value decomposition (SVD) for local OLS, suppressing solver iteration counts and numerical tolerance factors as in Section~\ref{SecMain}. First form the full-sample and two fitting-fold cross-product matrices in $O(np^2)$ work.

\paragraph{Ordering fits and coefficient updates.}
At a prefix of size $s$, a coordinate update uses the stored cross-products and costs $O(s)$. A sweep over the $s$ coordinates costs $O(s^2)$, giving the same cost per fit under the stated convention. Evaluating a fit on the other fold also costs $O(s^2)$, since the residual sum of squares is a quadratic form computed from the held-out cross-products. There are $p-s$ candidate nodes and two fitting folds, so summing over prefixes gives
\[
O\!\left(\sum_{s=1}^{p-1}2(p-s)s^2\right)=O(p^4).
\]
For the selected node, the coordinate refits use the same stored cross-products and the original averaged coefficients. Together they require $O(s^2)$ work, or $O(p^3)$ over all nodes. Scanning the scores and resolving ties adds $O(p^2)$.

\paragraph{Local regressions and total effects.}
A local design with $s\le p-1$ columns has a minimum-norm OLS solution computable by SVD in $O(ns\min\{n,s\})$ work \citep{golub2013matrix}. Forming the residual adds $O(ns+n)$, also accounting for empty designs. Each node requires only one retained local fit, so the total is at most $O(np^3)$. For node $k$ with a nonzero residual, the row vector $\widehat r_k^\top\mathbf X/\|\widehat r_k\|_2^2$ supplies its total-effect estimates to all other nodes. This multiplication costs $O(np)$, hence $O(np^2)$ over all nodes. A zero residual gives a zero row of estimates.

\paragraph{Processing earlier nodes in reverse order.}
For each target $j$, store the current reconstruction for every unprocessed earlier node and a flag recording whether it is a parent of a retained node. Start with the total-effect estimates as the reconstructions and set all flags to zero. Processing an earlier node $k$ takes constant work for clipping and thresholding. If $k$ is retained, subtract $\widehat B^{\mathrm{LS}}_{uk}\widehat B_{k\pathto j}$ from the reconstruction for each unprocessed $u\in\widehat T_k$ and set its flag. Local coefficients outside $\widehat T_k$ are zero, so these updates implement \eqref{eq:structural-pilot} and \eqref{eq:reverse-set}. With $s$ earlier nodes, at most $s$ retained nodes each update at most $s$ entries. Thus the cost is $O(s^2)$ per target and $O(p^3)$ over all target nodes.

Combining the stages gives $O(np^3+p^4)$ under the stated convention.
\end{proof}
\section{Practical Data-Driven Tuning}
\label{app:ebic-proof}
HOST-CV/EBIC selects a common ordering penalty by pooled nodewise two-fold cross-validation (CV), then selects a parent threshold for each node by the extended Bayesian information criterion (EBIC). The candidate grids $\mathcal C_{\mathrm{ord}}$ and $\mathcal C_{\mathrm{par}}$ are given in Appendices~\ref{app:simulation-tuning} and~\ref{app:realdata}.

\subsection{Ordering penalty selection}
\label{app:cv-rule}
For each $c\in\mathcal C_{\mathrm{ord}}$, use the same penalty $\lambda_{\mathrm{ord}}(c)=c\sqrt{\log p/\lfloor n/2\rfloor}$ in both folds. Let $\mathbf Z^{(\ell)}$ denote the column-centered data in fold $I_\ell$, and write $\bar\ell$ for the other fold index. Fit a Lasso regression for each node using all other nodes in that fold.

Let $\operatorname{RSS}_j^{(\ell)}(c)$ denote the residual sum of squares on the other fold $I_{\bar\ell}$, evaluated using the coefficients and column means estimated from $I_\ell$. Normalize each held-out RSS by the centered response sum of squares, then average over nodes and both folds to obtain
\[
\operatorname{CV}^{(\ell)}(c)
=\frac1p\sum_{j=1}^{p}
\frac{\operatorname{RSS}_j^{(\ell)}(c)}
{\|\mathbf Z_j^{(\bar\ell)}\|_2^2},
\qquad
\operatorname{CV}(c)
=\frac{\operatorname{CV}^{(1)}(c)+\operatorname{CV}^{(2)}(c)}2.
\]
Select the multiplier minimizing the pooled score to obtain one common ordering penalty.

\subsection{Parent threshold selection}
\label{sec:practical}
Recover parent sets sequentially along the estimated ordering. For node $j$, let $\nu_j$ be the number of earlier nodes. All full-sample covariance and EBIC calculations use the sample size $n$. Each multiplier $c\in\mathcal C_{\mathrm{par}}$ gives the threshold $\tau_j(c)=c\sqrt{\log p/n}$.
Apply the reverse processing step of Algorithm~\ref{alg1} at this threshold to obtain a candidate parent set $\widehat T_j(c)$. The fitted coefficients and residuals of earlier nodes are shared across candidates, while the retained sets $\widehat Q_j^{(m)}$ are recomputed for each threshold. When $\nu_j=0$, the candidate parent set is empty.

Evaluate each distinct candidate set $T$ by full-sample OLS after centering the columns of $\mathbf X$, with residual sum of squares
\[
\operatorname{RSS}_j(T)
=\min_{b\in\R^{|T|}}\|\mathbf X_j-\mathbf X_Tb\|_2^2.
\]
The implementation uses the nodewise score
\begin{equation}\label{eq:ebic}
\operatorname{EBIC}_j(T)
=n\log\{\operatorname{RSS}_j(T)/n\}
+|T|\log n+4|T|\log\max\{\nu_j,1\}.
\end{equation}
The first two terms measure regression fit and model size. The final term adapts the graphical EBIC penalty with $\gamma=1$ \citep{foygel2010extended} to the $\nu_j$ earlier nodes, following the EBIC construction of \citet{chen2008extended}. EBIC is evaluated only for candidate sets $T$ with a full-column-rank centered design, $\rank(\mathbf X_T)=|T|$, positive residual degrees of freedom, $n-|T|-1>0$, and a positive residual sum of squares, $\operatorname{RSS}_j(T)>0$. Otherwise, the EBIC score is set to $+\infty$.

For each node $j$, select the multiplier $\widehat c_j$ with the smallest finite EBIC score, breaking ties by fewer parents and then a larger multiplier.

\section{Simulation Protocol and Results}\label{app:simulation}
This section gives the data-generation procedures, algorithm settings, and evaluation rules for the main experiments, followed by comparisons across graph families and separate evaluations of ordering and parent recovery. The score diagnostic in Figure~\ref{graph:layered} is described separately below.

\subsection{Score diagnostic in Figure~\ref{graph:layered}}\label{app:score-diagnostic}
The graph has layer widths $64,32,16,8,4,2,1$, with each pair of nodes feeding one node in the next layer. All nodes have independent Gaussian errors with variance $0.75$. The direct effects across the six successive layer transitions are $(0.4,0.44,0.48,0.52,0.56,0.6)$, with all effects positive. The designated remaining source is the first node in the second layer, and the remaining nonsource is the single node in the final layer. Their regressions on the 64 first-layer nodes have two and 64 nonzero coefficients, respectively. Each regression coefficient for the node in the final layer is the product of the six direct effects, approximately $0.015$. The population ordering scores are $0.75$ for the designated node in the second layer and approximately $1.929$ for the node in the final layer.

We use 1,000 repetitions for each $n$ = 20, 30, 40, 50, fitting zero-intercept Lasso to raw observations with standardized predictors and fixed $\lambda=0.1$. In-sample fitting and scoring use all observations. One-way hold-out fitting and scoring use a random equal split. Scores are mean squared residuals. We summarize the scores by means plus or minus one sample standard deviation across repetitions, computed before the logarithmic axis transformation, and estimate the pairwise ordering probability with 95\% Wilson intervals.

\subsection{Data generation}
The graph-recovery and runtime experiments in Figures~\ref{fig:host-scaling}--\ref{fig:all} use the Gaussian linear SEM in \eqref{eq:linear-sem} and a common random DAG generator. Start with an auxiliary ordering $1,\ldots,p$. For each node $j$, draw its parent count uniformly from $\{0,\ldots,\min(d,j-1)\}$, then sample that many parents uniformly without replacement from $\{1,\ldots,j-1\}$. The parameter $d$ specifies the maximum indegree allowed by the generator. Nonzero direct effects have independent magnitudes uniform on $[0.4,0.6]$ and independent symmetric signs. Structural errors are independent Gaussian variables with variance $0.75$. An independent uniform permutation of the node labels shuffles the variable ordering. The additional graph families in Appendix~\ref{app:additional-graphs} use the same coefficient and error distributions with different graph-generation procedures. Each generated dataset is column-centered once before fitting.

\subsection{Graph Recovery Experiments}\label{app:simulation-tuning}
Figure~\ref{fig:host-scaling} evaluates HOST over all combinations of $p$ = 30, 50, 100, 200, $d$ = 2, 3, 4, 5, and $n$ = 128, 256, 512, 1024, 2048, with 200 repetitions per setting. It reports graph recovery rate, the fraction of exactly recovered graphs.

Figure~\ref{fig:all}(a)--(b) compares methods at $p$ = 50, 200, $d$ = 3, and $n$ = 100, 200, 300, 400, 500, with 50 paired repetitions per setting. Methods receive the same dataset in each repetition. Daskalakis Alg.~1 is evaluated only at $p$ = 50 due to the memory required to cache intermediate regression coefficients at larger dimensions. All graph-recovery experiments use the CPU implementations and tuning settings below. The implementations used for runtime measurement are specified in Appendix~\ref{app:timing-protocol}.

\paragraph{HOST and HOST-CV/EBIC.}
Both variants use our Cython/C++ implementation with $\delta=0.8$ and the same two-fold split for each dataset. HOST uses $\lambda_{\mathrm{ord}}=0.5\sqrt{\log p/\lfloor n/2\rfloor}$ and $\tau_j=c_{\mathrm{par}}\sqrt{\log p/n}$. The parent-threshold multiplier is $c_{\mathrm{par}}=3$ in Figure~\ref{fig:host-scaling} and $c_{\mathrm{par}}=2$ in the recovery and runtime comparisons, including the additional graph families. HOST-CV/EBIC selects both tuning parameters using the rules in Appendix~\ref{app:ebic-proof}. The pooled two-fold CV ordering grid is $\mathcal C_{\mathrm{ord}}=\{10^{-1+0.1k}:k=0,\ldots,20\}$. Parent recovery uses the full sample and the grid $\mathcal C_{\mathrm{par}}=\{\tfrac14 32^{k/32}:k=0,\ldots,32\}$, which contains 33 multipliers from $0.25$ to $8$.

\paragraph{HLSM.}
Our Cython/C++ implementation of \citet{park2021learning2} constructs the ordering backwards using OLS residual variances on Lasso-selected supports and selects parents from these supports, with standardized predictors and penalty $\lambda=3\sqrt{\log p/n}$.

\paragraph{TD-BSS.}
Our Cython/C++ implementation of the subset-based procedure of \citet{gao2022optimal} uses the supplied indegree bound $q=d$ in ordering, with $d$ given by an oracle. Parent selection compares the residual-variance increase on removing a candidate parent against $\eta=3\log p/n$.

\paragraph{TD-BIC.}
Our Cython/C++ implementation of TD-BIC proposed by \citet{gao2022optimal} uses the same ordering search and supplied indegree bound $q=d$ as TD-BSS. For each node, it selects a subset of at most $q$ earlier nodes minimizing $n\log(\mathrm{RSS}/n)+|S|\log n$, including the empty subset.

\paragraph{Daskalakis Alg.~1.}
Our C++ implementation follows the published ordering and parent-selection procedure \citep{daskalakis2025condition}. It uses the supplied indegree bound $q=d$ given by an oracle and coefficient threshold $b_{\min}/2=0.2$, using the generating signal lower bound $b_{\min}=0.4$.

\paragraph{TD-Lasso.}
We use the official R implementation of the top-down procedure \citep{chen2019causal}, with subset-search parameter $J=q$, supplied indegree bound $q=d$ given by an oracle, and cross-validated Lasso parent selection. The Lasso penalty is selected by ten-fold cross-validation using the one-standard-error rule.

\paragraph{Daskalakis Alg.~2.}
We use the published Python implementation of Algorithm~2 \citep{daskalakis2025condition}, with Lasso penalty $0.01$, supplied indegree bound $q=d$ given by an oracle, and coefficient threshold $b_{\min}/2=0.2$, using the generating lower bound $b_{\min}=0.4$.

\paragraph{GOLEM-EV.}
We use the official Python/TensorFlow equal-variance implementation on CPU \citep{ng2020role}, with $\lambda_1=0.02$, $\lambda_2=5$, learning rate $0.001$, and 100,000 optimizer updates. The official postprocessing routine thresholds edge weights at $0.3$ and removes edges until the result is acyclic.

\paragraph{Metrics.}
True positives, false positives, and false negatives are counted on directed edges. Exact recovery means equality of the entire directed adjacency matrices. F1 is $2\mathrm{TP}/(2\mathrm{TP}+\mathrm{FP}+\mathrm{FN})$, with value one when both edge sets are empty. Directed SHD is $\mathrm{FP}+\mathrm{FN}$, so reversing an edge contributes two errors. Metrics are computed for each repetition and then averaged. Standard errors are the sample standard deviations divided by the square root of the number of repetitions. The graph-recovery figures report means with error bars of one standard error across repetitions.

Figure~\ref{fig:main-accuracy-full} reports exact recovery, F1, and SHD for the method comparison in Figure~\ref{fig:all}(a)--(b). At $p$ = 200, $n$ = 500, HOST and HOST-CV/EBIC attain F1 $0.984$ and $0.985$, respectively, close to $0.991$ for TD-BSS, which has higher exact recovery and lower SHD. At $n$ = 100, HOST-CV/EBIC has higher F1 and lower SHD than HOST at both dimensions, while the two variants become similar as sample size increases. The recovery trends are similar to those in Figures~\ref{fig:host-scaling} and~\ref{fig:all}(a)--(b).

\begin{figure}[htb]\centering
\captionsetup[subfloat]{captionskip=2pt,farskip=0pt,nearskip=0pt}
\subfloat[$p$ = 50]{\includegraphics[width=.32\linewidth]{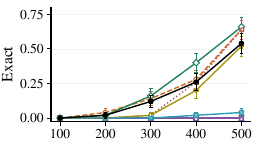}\label{fig:main-full-exact-p50}}\hfill
\subfloat[$p$ = 50]{\includegraphics[width=.32\linewidth]{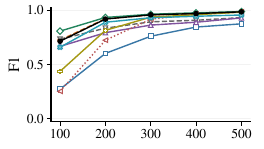}\label{fig:main-full-f1-p50}}\hfill
\subfloat[$p$ = 50]{\includegraphics[width=.32\linewidth]{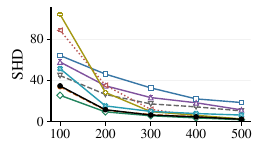}\label{fig:main-full-shd-p50}}\par
\subfloat[$p$ = 200]{\includegraphics[width=.32\linewidth]{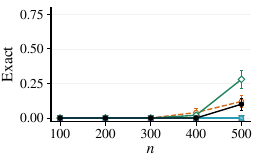}\label{fig:main-full-exact-p200}}\hfill
\subfloat[$p$ = 200]{\includegraphics[width=.32\linewidth]{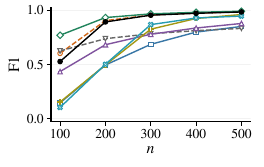}\label{fig:main-full-f1-p200}}\hfill
\subfloat[$p$ = 200]{\includegraphics[width=.32\linewidth]{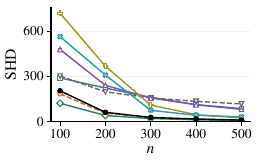}\label{fig:main-full-shd-p200}}\par
\includegraphics[width=\linewidth]{figures/host_main_recovery_legend.pdf}\par
\caption{Graph recovery for $d$ = 3. Panels (a)--(c) use $p$ = 50, and (d)--(f) use $p$ = 200. Columns show exact recovery, F1, and SHD. Points and error bars show means $\pm1$ standard error over 50 paired repetitions.}
\label{fig:main-accuracy-full}\end{figure}

\subsection{Runtime Experiments}\label{app:timing-protocol}
The runtime experiments in Figures~\ref{fig:all}(c)--(d) and~\ref{fig:computation-full} fix $n$ = 100 and use all combinations of $p$ = 10, 30, 50, 70, 100 and $d$ = 1, 2, 3, 4, 5, with 20 paired repetitions per setting. Each repetition uses the same input across methods. Daskalakis Alg.~1 is evaluated only at $p$ = 10, 30, 50. GOLEM-EV is excluded from this single-core CPU runtime comparison because its published scalability experiments use GPU acceleration \citep{ng2020role}. HOST, HOST-CV/EBIC, HLSM, TD-BSS, TD-BIC, and Daskalakis Alg.~1 use our compiled Cython/C++ implementations for both graph recovery and runtime measurement.

For TD-Lasso, graph recovery uses the official R code, whereas runtime is measured with our C++ implementation. For Daskalakis Alg.~2, graph recovery uses the published Python code, whereas runtime is measured with our Cython/C++ implementation. The timing implementations preserve the algorithms, supplied information, constants, and per-dataset tuning rules specified in Appendix~\ref{app:simulation-tuning}. Their numerical agreement with the graph-recovery implementations is checked before timing.

The runtime experiments use Intel Xeon Platinum 8592+ processors. Each fit uses a single physical CPU core and one numerical-library thread. Full-fit time is the wall-clock time for the full fitting procedure on each prepared input, including ordering, parent selection, final graph construction, method-internal transformations, and tuning. This includes CV and EBIC parameter selection for HOST-CV/EBIC and ten-fold cross-validation for TD-Lasso. HOST uses ordering and parent-threshold multipliers $0.5$ and $2$ without parameter search.

Figure~\ref{fig:computation-full} extends the two runtime sweeps in Figure~\ref{fig:all}(c)--(d) to all settings, varying $d$ at each fixed $p$ and $p$ at each fixed $d$. The full comparison shows runtime trends similar to those in Figure~\ref{fig:all}(c)--(d).

\begin{figure}[htb]\centering
\captionsetup[subfloat]{captionskip=2pt,farskip=0pt,nearskip=0pt}
\subfloat[$p$ = 10]{\includegraphics[width=.232727\linewidth]{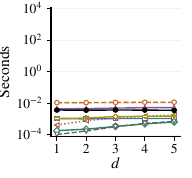}\label{fig:time-full-p10}}\hfill
\subfloat[$p$ = 30]{\includegraphics[width=.181818\linewidth]{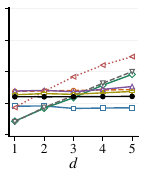}\label{fig:time-full-p30}}\hfill
\subfloat[$p$ = 50]{\includegraphics[width=.181818\linewidth]{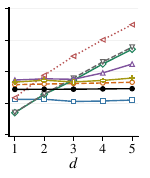}\label{fig:time-full-p50}}\hfill
\subfloat[$p$ = 70]{\includegraphics[width=.181818\linewidth]{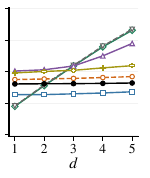}\label{fig:time-full-p70}}\hfill
\subfloat[$p$ = 100]{\includegraphics[width=.181818\linewidth]{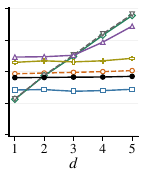}\label{fig:time-full-p100}}\par
\subfloat[$d$ = 1]{\includegraphics[width=.232727\linewidth]{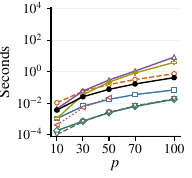}\label{fig:time-full-d1}}\hfill
\subfloat[$d$ = 2]{\includegraphics[width=.181818\linewidth]{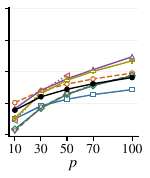}\label{fig:time-full-d2}}\hfill
\subfloat[$d$ = 3]{\includegraphics[width=.181818\linewidth]{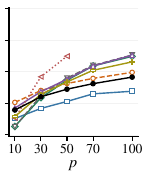}\label{fig:time-full-d3}}\hfill
\subfloat[$d$ = 4]{\includegraphics[width=.181818\linewidth]{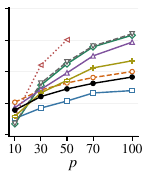}\label{fig:time-full-d4}}\hfill
\subfloat[$d$ = 5]{\includegraphics[width=.181818\linewidth]{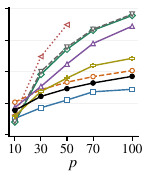}\label{fig:time-full-d5}}\par
\includegraphics[width=\linewidth]{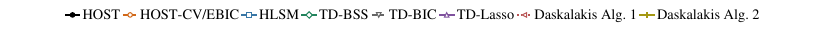}\par
\caption{Mean full-fit time (seconds, log scale) at $n$ = 100, including tuning, over 20 paired repetitions. Panels (a)--(e) vary $d$ at fixed $p$, and (f)--(j) vary $p$ at fixed $d$.}
\label{fig:computation-full}\end{figure}

\subsection{Additional Graph Families}\label{app:additional-graphs}
Figure~\ref{fig:additional-recovery} compares recovery on ER, hub, and bipartite DAGs with $p$ = 30, $n$ = 100, 200, 300, 400, 500, and 50 paired repetitions. It includes the same methods and recovery implementations as Figure~\ref{fig:all}(a)--(b), including Daskalakis Alg.~1 and both HOST variants. Coefficient and error distributions, node permutations, and tuning settings follow the main method comparison. Subset-search methods receive the supplied indegree bound $q=d$, where $d$ is the realized maximum indegree of each graph.

For ordered Erd\H{o}s--R\'enyi DAGs, each potential edge from an earlier to a later node is sampled independently with probability $2/(p-1)$, giving expected edge count $p$. For hub DAGs, three nodes with at least six earlier nodes are selected as hubs, each receiving six uniformly sampled incoming edges. Other nodes receive independently sampled incoming edges with probability $0.02$, conditioned on having at most five parents. The maximum indegree is therefore exactly six. Bipartite DAGs have two groups of 15 nodes. Edges are sampled independently from the first group to the second with probability $2/15$, giving expected edge count 30, with no edges within either group.

\begin{figure}[htb]\centering
\includegraphics[width=\linewidth]{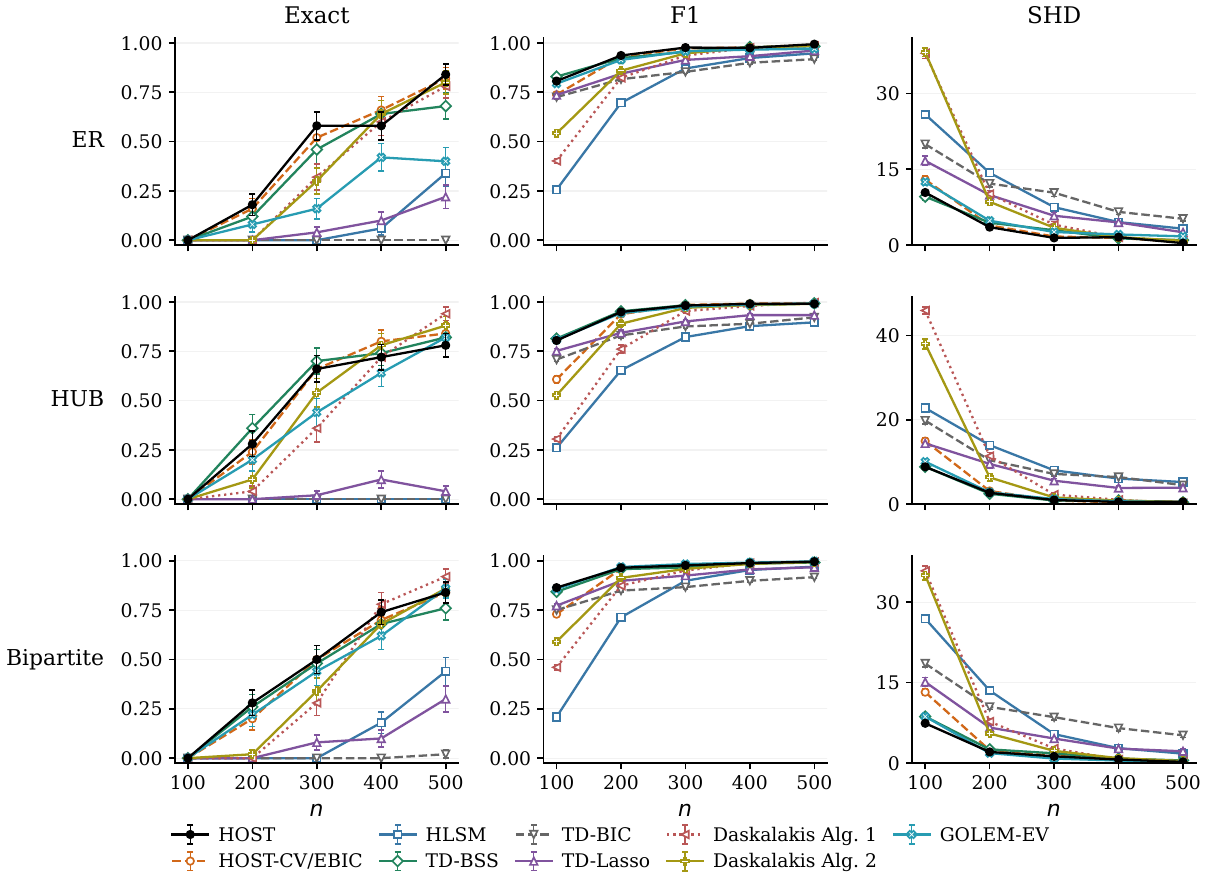}
\caption{Graph recovery for ER, hub, and bipartite DAGs in the first, second, and third rows, respectively, with $p$ = 30. Columns show exact recovery, F1, and SHD. Points and error bars show means $\pm1$ standard error across 50 paired repetitions.}
\label{fig:additional-recovery}\end{figure}

Figure~\ref{fig:additional-recovery} shows that relative performance varies across graph families and recovery metrics. At $n$ = 500, HOST achieves the highest exact recovery on ER graphs, whereas Daskalakis Alg.~1 achieves higher exact recovery on hub and bipartite graphs. The two HOST variants become similar in F1 and SHD as sample size increases, and both provide competitive graph recovery across all three graph families.

\subsection{Ordering Recovery Comparison with Subset Search Methods}\label{app:ordering-bound}
We compare the ordering steps of HOST and TD-BSS \citep{gao2022optimal} at $p$ = 50, $d$ = 2, 3, 4, and $n$ = 100, 200, 300, 400, 500, with 50 paired repetitions. Each graph contains the chain $1\to2\to\cdots\to p$, which makes its compatible ordering unique. For $j\ge2$, node $j$ has exactly $\min(d,j-1)$ parents, including $j-1$, with any additional parents sampled uniformly from $\{1,\ldots,j-2\}$. Coefficients, error variances, and node permutations follow Appendix~\ref{app:simulation}. HOST uses the fixed ordering penalty in Appendix~\ref{app:simulation-tuning}, while TD-BSS receives supplied indegree bounds $q$ = 2, 3, 4, 5, 6. We report the normalized Kendall distance $K/\binom p2$, where $K$ counts node pairs ordered differently in the estimated and true orderings. We also compare ordering runtime.

\begin{figure}[htbp]\centering
\includegraphics[width=\linewidth]{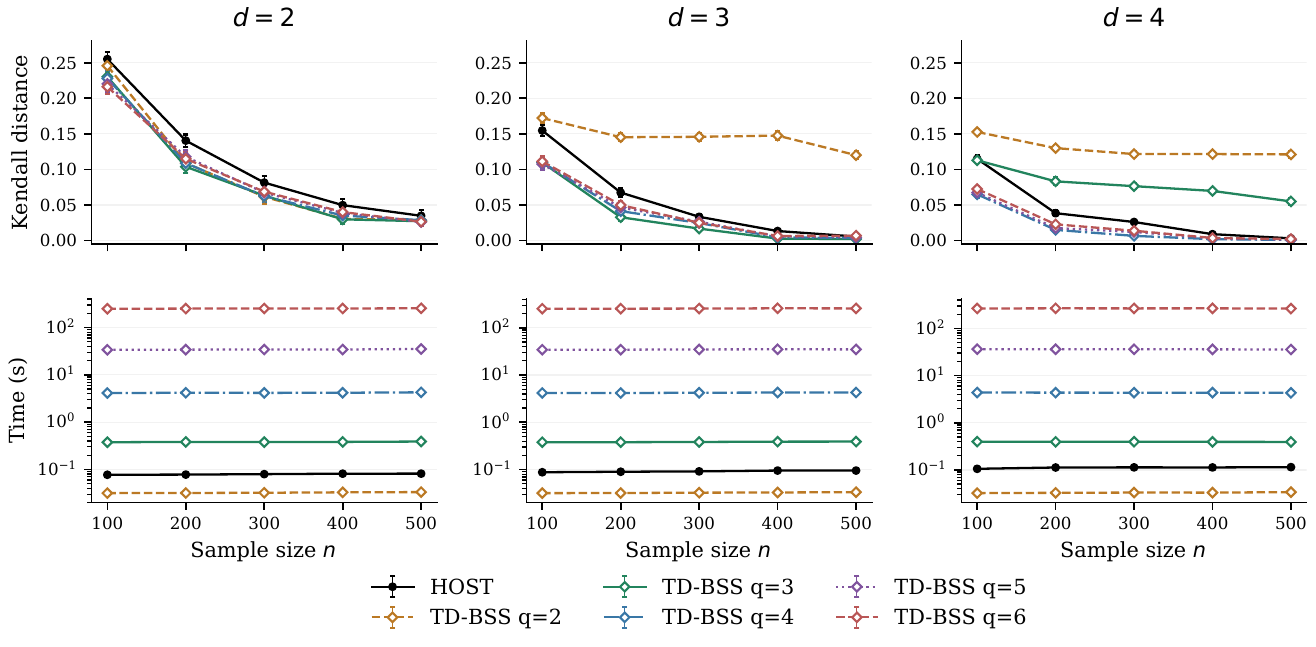}
\caption{Ordering recovery and computational cost at $p$ = 50. Rows show Kendall distance and runtime, and columns show $d$ = 2, 3, 4. Points and error bars show means $\pm1$ standard error across 50 paired repetitions.}
\label{fig:ordering-bounds}\end{figure}

Figure~\ref{fig:ordering-bounds} shows that HOST achieves decreasing Kendall distance as sample size increases without receiving a supplied indegree bound, while TD-BSS generally achieves smaller Kendall distance when $q\ge d$. Supplying $q<d$ can substantially degrade TD-BSS recovery. When $d$ is unknown, choosing a larger $q$ reduces the risk of under-specification but sharply increases the cost of subset search. This comparison illustrates the computational advantage of ordering recovery without a supplied indegree bound.

\subsection{Parent Recovery Comparison with Lasso}\label{app:oracle-parent}
We isolate parent recovery by supplying the true ordering to HOST and Lasso. The graph construction follows Appendix~\ref{app:ordering-bound}, with $p$ = 50, $d$ = 2, 3, 4, $n$ = 100, 200, 300, 400, 500, and 50 paired repetitions. Lasso fits each node on all earlier variables using penalty $\lambda=c\sqrt{\log p/n}$ and selects variables with nonzero coefficients as parents. HOST uses parent threshold $\tau_j=c\sqrt{\log p/n}$ and $\delta=0.8$. The two-fold ordering fits used to construct the parent-recovery guide follow the true ordering and use a fixed penalty multiplier of $0.5$. HOST reuses its main-comparison implementation, and Lasso uses the same coordinate-descent solver.

To remove differences due to tuning parameter selection, each method uses oracle tuning over $c\in\{0.5,1,2,3,4,5\}$. In each repetition, one multiplier for the entire graph is chosen to minimize SHD against the true graph, with ties resolved in favor of the larger multiplier. SHD, precision, and recall are all computed from that same selected graph. Oracle tuning provides a reference for recovery achievable within this candidate pool.

\begin{figure}[htbp]\centering
\includegraphics[width=\linewidth]{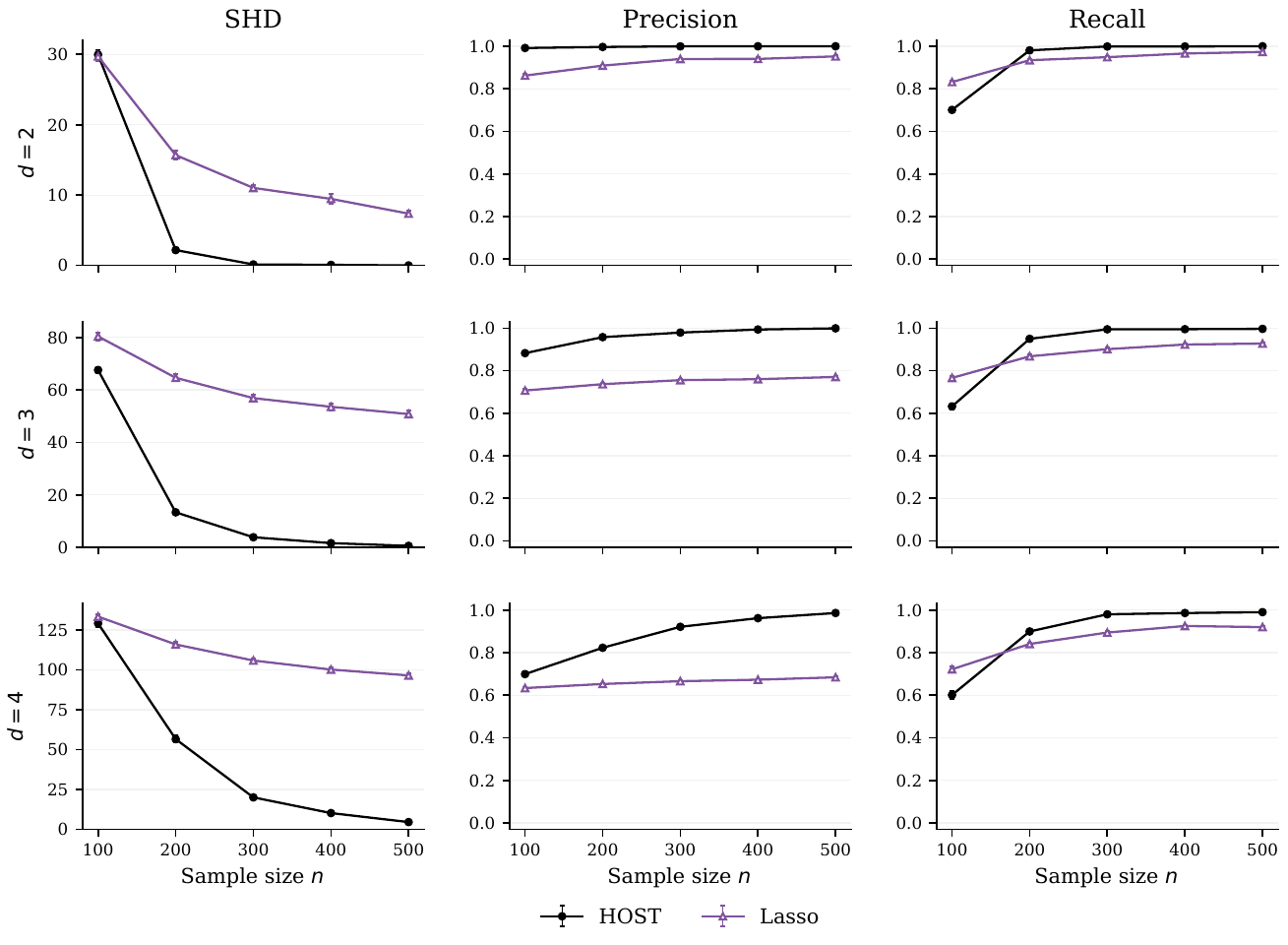}
\caption{Parent recovery with the true ordering and oracle tuning at $p$ = 50. Rows correspond to $d$ = 2, 3, 4, and columns show SHD, precision, and recall. Each method selects $c\in\{0.5,1,2,3,4,5\}$ by minimizing SHD separately in each repetition. Points and error bars show means $\pm1$ standard error across 50 paired repetitions.}
\label{fig:oracle-parent-50}\end{figure}

Figure~\ref{fig:oracle-parent-50} shows that HOST generally achieves lower SHD than Lasso as sample size increases, with a larger gap at larger maximum indegrees, although Lasso performs better in some small-sample settings. The difference remains after each method receives its best multiplier in the candidate pool. Increasing $d$ introduces more direct and indirect paths and can make covariance conditions for Lasso support recovery, including the irrepresentable condition \citep{zhao2006model}, harder to satisfy. The observed pattern is consistent with this limitation of Lasso. HOST instead reconstructs direct effects without requiring exact Lasso support recovery or irrepresentability, as established in Theorem~\ref{thm:parent-recovery}.

\section{Real-Data Experiments}\label{app:realdata}\label{sec:realdata}
We compare HOST with six methods on the Sachs sample available from the \texttt{bnlearn} website \citep{sachs2005causal} ($n$ = 853, $p$ = 11) and the uniform reference experiment from the Light Tunnel dataset \citep{gamella2025causalchambers} ($n$ = 10,000, $p$ = 20). Each dataset is column-centered using all observations before fitting. The supplied reference DAGs contain 17 edges for Sachs and 39 for Light Tunnel and are used only to evaluate directed F1 and SHD.

We use HOST, HLSM, TD-BSS, TD-BIC, TD-Lasso, and Daskalakis Algs.~1 and~2, following the implementations in Appendix~\ref{app:simulation-tuning}. GOLEM-EV is excluded from this comparison. All tuning multipliers range over $\mathcal C=\{2^{-7},2^{-6},\ldots,2^7\}$, and supplied indegree bounds range over $Q=\{1,2,4,6,8\}$.

For HOST, pooled two-fold CV selects $c_{\mathrm{ord}}\in\mathcal C$ and the common ordering penalty $\lambda_{\mathrm{ord}}=c_{\mathrm{ord}}\sqrt{\log p/\lfloor n/2\rfloor}$. After recovering the ordering with this penalty, HOST processes nodes in that ordering and selects $c_j\in\mathcal C$ separately for each node by EBIC, giving parent threshold $\tau_j=c_j\sqrt{\log p/n}$. The CV and EBIC criteria follow Appendix~\ref{app:ebic-proof}.

For each comparison method, we construct a candidate DAG $G$ for every parameter combination in Table~\ref{tab:realdata-settings} and select the graph and parameters minimizing Gaussian BIC \citep{biza2020tuning},
\[
\mathrm{BIC}(G)=n\sum_{j=1}^{p}\log\!\left(\frac{\mathrm{RSS}_j(G)}{n}\right)
+|E(G)|\log n.
\]
Here $E(G)$ is the edge set of $G$, and $\mathrm{RSS}_j(G)$ is the residual sum of squares from OLS regression of node $j$ on its parents in $G$, without an intercept on the centered data.

\begin{table}[htbp]
\centering\small
\caption{Real-data tuning parameters and candidate construction.}
\label{tab:realdata-settings}
\begin{tabularx}{\linewidth}{@{}lX@{}}
\toprule
Method & Parameters and selection procedure \\
\midrule
HOST & Select $c_{\mathrm{ord}}\in\mathcal C$ by pooled two-fold CV, then $c_j\in\mathcal C$ for each node by EBIC. \\
HLSM & For each $c\in\mathcal C$, fit with Lasso penalty $\lambda=c\sqrt{\log p/n}$. \\
TD-BSS & For every $(q,c)\in Q\times\mathcal C$, fit with supplied indegree bound $q$ and parent-selection threshold $\eta=c\log p/n$. \\
TD-BIC & For each $q\in Q$, select each parent set by minimizing nodewise BIC over subsets of at most $q$ earlier nodes. \\
TD-Lasso & For each $q\in Q$, recover the ordering and select nodewise Lasso penalties by ten-fold cross-validation using the one-standard-error rule. \\
Daskalakis Alg.~1 & For every $(q,c_b)\in Q\times\mathcal C$, fit with supplied indegree bound $q$ and $b_{\min}=c_b\sqrt{\log p/n}$. \\
Daskalakis Alg.~2 & For every $(q,c_b,c_\lambda)\in Q\times\mathcal C^2$, fit with supplied indegree bound $q$, $b_{\min}=c_b\sqrt{\log p/n}$, and Lasso penalty $\lambda=c_\lambda\sqrt{\log p/n}$. \\
\bottomrule
\end{tabularx}
\end{table}

\begin{table}[htbp]
\centering\small
\caption{Directed F1 and SHD on the full Sachs and Light Tunnel datasets. Bold entries indicate the highest F1 or lowest SHD in each column.}
\label{tab:realdata-results}
\begin{tabular}{@{}lrrrr@{}}
\toprule
& \multicolumn{2}{c}{Sachs} & \multicolumn{2}{c}{Light Tunnel} \\
\cmidrule(lr){2-3}\cmidrule(l){4-5}
Method & F1 & SHD & F1 & SHD \\
\midrule
HOST & \textbf{0.385} & \textbf{16} & \textbf{0.518} & \textbf{41} \\
HLSM & 0.276 & 21 & 0.477 & 57 \\
TD-BSS & 0.303 & 23 & 0.344 & 80 \\
TD-BIC & \textbf{0.385} & \textbf{16} & 0.506 & \textbf{41} \\
TD-Lasso & 0.200 & \textbf{16} & 0.517 & 43 \\
Daskalakis Alg.~1 & 0.357 & 18 & 0.411 & 66 \\
Daskalakis Alg.~2 & 0.370 & 17 & 0.472 & 47 \\
\bottomrule
\end{tabular}
\end{table}

Table~\ref{tab:realdata-results} shows that HOST and TD-BIC tie for the highest Sachs F1 ($0.385$) and, together with TD-Lasso, attain the lowest SHD ($16$). On Light Tunnel, HOST attains F1 $0.518$, close to $0.517$ for TD-Lasso, and ties TD-BIC for the lowest SHD ($41$). These results indicate competitive recovery by HOST on both datasets among methods developed under similar modeling assumptions.

\end{document}